\documentclass[letterpaper]{article}
\usepackage{aaai2026}
\usepackage{times}
\usepackage{helvet}
\usepackage{courier}
\usepackage[hyphens]{url}
\usepackage{graphicx}
\usepackage{natbib}
\usepackage{caption}
\usepackage[utf8]{inputenc}
\usepackage{amsmath,amsfonts,amssymb}
\usepackage{amsthm}
\usepackage{aliascnt}
\usepackage{enumerate}
\usepackage{multirow}
\usepackage{hhline}
\usepackage{tikz}
\usetikzlibrary{arrows.meta,positioning,calc,patterns}
\usepackage{mdframed}
\usepackage{tabularx}
\usepackage{tabulary}
\usepackage{booktabs}
\usepackage{nicefrac}
\usepackage{manyfoot}
\usepackage[protrusion=true,expansion=false]{microtype}
\usepackage[noabbrev,capitalise]{cleveref}
\DeclareMathOperator{\argmax}{arg\,max}
\DeclareMathOperator{\Argmax}{Arg\,max}
\DeclareMathOperator{\argmin}{arg\,min}
\DeclareMathOperator{\co}{co}
\DeclareMathOperator{\diag}{diag}
\DeclareMathOperator{\dist}{dist}

\makeatletter
\@ifundefined{th@thmstyleone}{%
  \newtheoremstyle{thmstyleone}{18pt plus2pt minus1pt}{18pt plus2pt minus1pt}{\itshape}{0pt}{\bfseries}{}{.5em}{\thmname{#1}\thmnumber{\@ifnotempty{#1}{ }\@upn{#2}}\thmnote{ {\the\thm@notefont(#3)}}}%
}{}
\@ifundefined{th@thmstyletwo}{%
  \newtheoremstyle{thmstyletwo}{18pt plus2pt minus1pt}{18pt plus2pt minus1pt}{\normalfont}{0pt}{\itshape}{}{.5em}{\thmname{#1}\thmnumber{\@ifnotempty{#1}{ }{#2}}\thmnote{ {\the\thm@notefont(#3)}}}%
}{}
\@ifundefined{th@thmstylethree}{%
  \newtheoremstyle{thmstylethree}{18pt plus2pt minus1pt}{18pt plus2pt minus1pt}{\normalfont}{0pt}{\bfseries}{}{.5em}{\thmname{#1}\thmnumber{\@ifnotempty{#1}{ }\@upn{#2}}\thmnote{ {\the\thm@notefont(#3)}}}%
}{}
\makeatother

\theoremstyle{thmstyleone}
\newtheorem{theorem}{Theorem}
\newaliascnt{claim}{theorem}

\aliascntresetthe{claim}
\newaliascnt{lemma}{theorem}
\newtheorem{lemma}[lemma]{Lemma}
\aliascntresetthe{lemma}
\newaliascnt{corollary}{theorem}
\newtheorem{corollary}[corollary]{Corollary}
\aliascntresetthe{corollary}
\newaliascnt{proposition}{theorem}

\aliascntresetthe{proposition}

\theoremstyle{thmstylethree}
\newaliascnt{example}{theorem}
\newtheorem{example}[example]{Example}
\aliascntresetthe{example}
\newaliascnt{assumption}{theorem}
\newtheorem{assumption}[assumption]{Assumption}
\aliascntresetthe{assumption}
\newaliascnt{remark}{theorem}

\aliascntresetthe{remark}
\newaliascnt{problem}{theorem}

\aliascntresetthe{problem}
\newaliascnt{definition}{theorem}
\newtheorem{definition}[definition]{Definition}
\aliascntresetthe{definition}

\crefname{theorem}{Theorem}{Theorems}
\Crefname{theorem}{Theorem}{Theorems}
\crefname{claim}{Claim}{Claims}
\Crefname{claim}{Claim}{Claims}
\crefname{lemma}{Lemma}{Lemmas}
\Crefname{lemma}{Lemma}{Lemmas}
\crefname{corollary}{Corollary}{Corollaries}
\Crefname{corollary}{Corollary}{Corollaries}
\crefname{proposition}{Proposition}{Propositions}
\Crefname{proposition}{Proposition}{Propositions}
\crefname{example}{Example}{Examples}
\Crefname{example}{Example}{Examples}
\crefname{assumption}{Assumption}{Assumptions}
\Crefname{assumption}{Assumption}{Assumptions}
\crefname{remark}{Remark}{Remarks}
\Crefname{remark}{Remark}{Remarks}
\crefname{problem}{Problem}{Problems}
\Crefname{problem}{Problem}{Problems}
\crefname{definition}{Definition}{Definitions}
\Crefname{definition}{Definition}{Definitions}

\newmdenv[
  linewidth=0.6pt,
  linecolor=black,
  skipabove=6pt,
  skipbelow=6pt,
  innertopmargin=6pt,
  innerbottommargin=6pt,
  innerleftmargin=8pt,
  innerrightmargin=8pt
]{restatementbox}

\newcommand{\R}{\mathbb R}

\newcommand{\calS}{\mathcal S}
\newcommand{\calA}{\mathcal A}
\newcommand{\calH}{\mathcal H}

\newcommand{\calF}{\mathcal F}
\newcommand{\E}{\mathbb{E}}
\newcommand{\Prob}{\mathbb{P}}
\newcommand{\calM}{\mathcal{M}}

\newcommand{\missingfigure}[1]{%
  \fbox{%
    \begin{minipage}{0.68\linewidth}
    \centering
    Missing figure file: \texttt{#1}
    \end{minipage}%
  }%
}
\newcommand{\safeincludegraphics}[2][]{%
  \IfFileExists{#2}{\includegraphics[#1]{#2}}{%
    \IfFileExists{#2.pdf}{\includegraphics[#1]{#2}}{%
      \IfFileExists{#2.png}{\includegraphics[#1]{#2}}{%
        \IfFileExists{#2.eps}{\includegraphics[#1]{#2}}{\missingfigure{#2}}%
      }%
    }%
  }%
}

\title{Sign-Separated Asymmetric Finite-Time Error Analysis of Q-Learning}

\author{%
\small Donghwan Lee\\
\small Department of Electrical Engineering\\
\small Korea Advanced Institute of Science and Technology (KAIST)\\
\small Daejeon 34141, South Korea (email: \texttt{donghwan@kaist.ac.kr})
}

\begin{document}

\maketitle

\begin{abstract}
Q-learning is known to suffer from overestimation bias: because the Bellman update maximizes noisy or imperfect action-value estimates, positive errors can be selected and propagated, causing learned values to exceed the true optimal values. This bias can slow learning, degrade policy quality, and make value estimates unreliable. Although the convergence of Q-learning has been studied extensively, convergence theory that explicitly reflects this overestimation mechanism remains limited. This paper studies the asymmetric convergence behavior of Q-learning induced by overestimation bias. We decompose the Q-learning error into its componentwise positive and negative parts and derive separate finite-time rates for the two components. The resulting certificates can assign a slower exponential envelope to the positive component than to the negative component. This rate separation provides indirect theoretical evidence for max-induced overestimation: positive errors can be amplified through the maximization step, whereas negative errors admit a sharper comparison with an optimal-policy system. The separation is a difference between upper bounds, so it need not hold for every realized Q-learning trajectory. Nevertheless, we construct examples in which the predicted asymmetry appears in the actual trajectory. The analysis gives deterministic and stochastic constant-step-size bounds and clarifies how overestimation enters the switching-system dynamics of Q-learning.
\end{abstract}

\section{Introduction}

Q-learning~\citep{watkins1992q} is a foundational algorithm in reinforcement
learning (RL)~\citep{sutton1998reinforcement} for solving Markov decision
processes (MDPs)~\citep{puterman1994markov} with unknown transition kernels. Its update includes a
maximization over estimated action values. When these estimates are noisy or
imperfect, the maximum can select positive errors and propagate them through the
Bellman update, producing the well-known overestimation bias~\citep{thrun1993issues,vanhasselt2010double}. This bias can slow
learning, degrade the quality of greedy policies, and make learned value
estimates less reliable. Although Q-learning convergence has been studied
extensively~\citep{tsitsiklis1994asynchronous,jaakkola1994convergence,
borkar2000ode,lee2020unified,szepesvari1998asymptotic,kearns1999finite,even2003learning,
beck2012error,wainwright2019stochastic,qu2020finite,li2020sample,
chen2024lyapunov,lee2023discrete,lee2024final,lim2024diminishing,lee2026lyapunov}, finite-time theory that explicitly reflects this max-induced
asymmetry remains limited.

As a  motivating instance, let us consider Sutton's maximization-bias example in~\cite{sutton1998reinforcement}. Starting from a single fixed initial Q-table, \cref{fig:sutton-maxbias-relative} plots the relative sign-separated errors averaged over ten runs. The positive-direction error remains larger for longer, illustrating the empirical asymmetry.
\begin{center}
\includegraphics[width=0.95\linewidth]{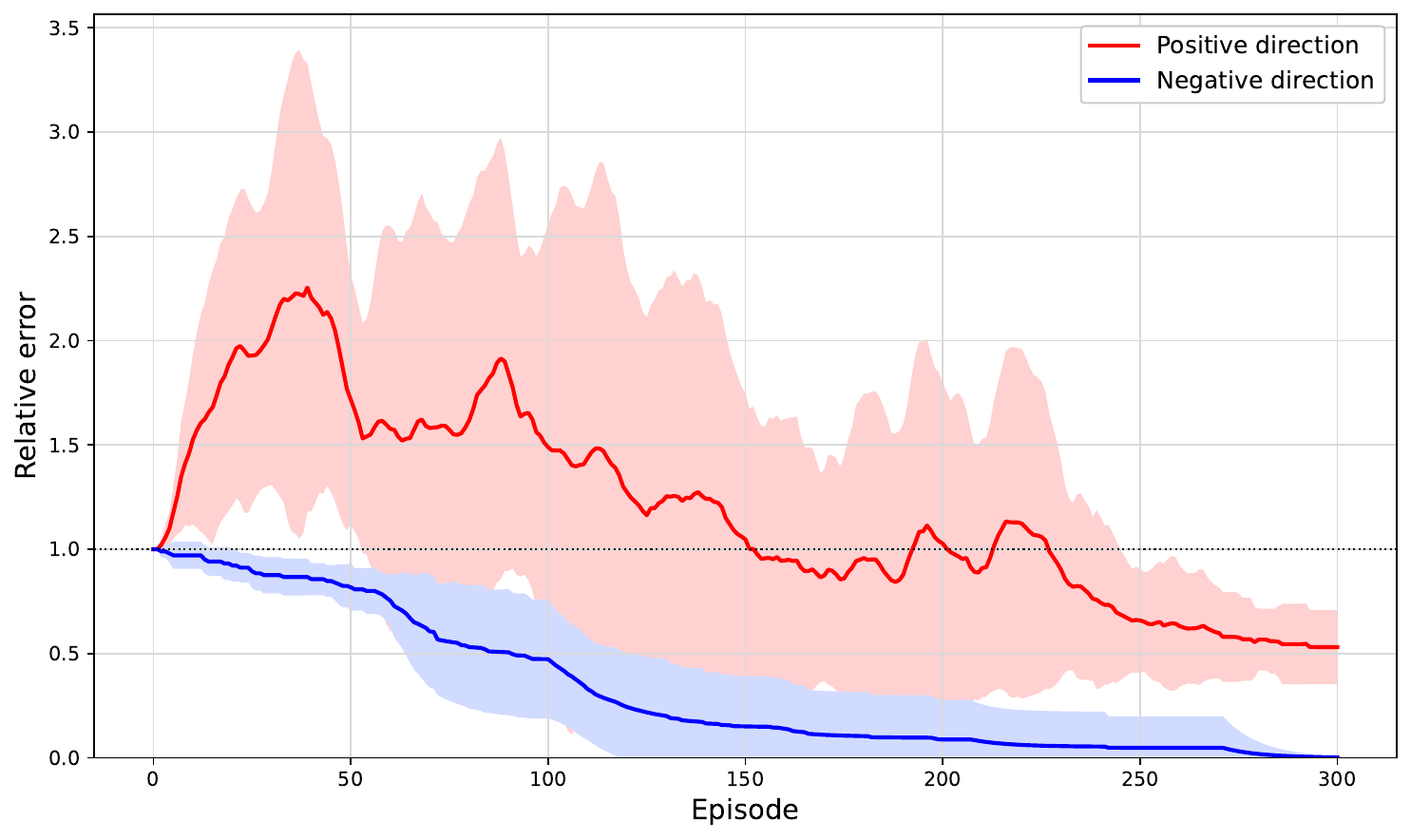}
\captionof{figure}{Sutton maximization-bias example with sign-separated relative errors.}
\label{fig:sutton-maxbias-relative}
\end{center}
This paper studies the asymmetric convergence behavior of Q-learning induced by overestimation bias. We decompose the Q-learning error into its componentwise positive and negative parts and derive separate finite-time rates for the two components. The resulting certificates can assign a slower exponential envelope to the positive component than to the negative component. This rate separation provides indirect theoretical evidence for max-induced overestimation: positive errors can be amplified through the maximization step, whereas negative errors admit a sharper comparison with an optimal-policy system. The separation is a difference between upper bounds, so it need not hold for every realized Q-learning trajectory. Nevertheless, we construct examples in which the predicted asymmetry appears in the actual trajectory. The analysis gives deterministic and stochastic constant-step-size bounds and clarifies how overestimation enters the dynamics of Q-learning.

Our analysis is based on a switching linear system representation of Q-learning~\citet{lee2026lyapunov}, together with the direct switching technique recently proposed by~\citet{lee2026lyapunov}. In a switched linear system~\citep{liberzon2003switching,lin2009stability,shorten2007stability}, the update matrix is selected from a prescribed family of matrices and may change over time. This representation allows Q-learning convergence to be interpreted in the language of switched linear dynamics, particularly through the joint spectral radius~\citep{rota1960rho,tsitsiklis1997lyapunov,blondel2005rho,jungers2009rho}. More specifically, the Q-learning error is decomposed into its positive and negative parts, and we construct comparison systems that bound the two components. The negative-part comparison system is a simpler linear time-invariant system, whereas the positive-part comparison system remains a switching linear system over the set of greedy policies. We show that the negative comparison system can have a faster certified convergence rate than the positive comparison system. To keep the presentation clear and focused, we study constant step-size Q-learning under an i.i.d. observation model. We also treat the simpler deterministic counterpart, but defer it to the appendix because of space constraints. The method of~\citet{lee2026lyapunov} can be used to extend the analysis to Markovian observations, but the i.i.d. setting keeps the sign-separated switching-system argument more transparent. Overall, the analysis provides a new switching system interpretation of finite-time Q-learning behavior and suggests how this perspective can yield further insight into max-induced asymmetry.

\section{Preliminaries}\label{sec:preliminaries}

\subsection{Notation}\label{subsec:notation}

We use the following notation. The symbols \({\mathbb R}\),
\({\mathbb R}^n\), and \({\mathbb R}^{n\times m}\) denote the set of real
numbers, the \(n\)-dimensional Euclidean space, and the set of \(n\times m\)
real matrices, respectively. For a matrix \(A\), \(A^\top\) denotes its
transpose. The identity matrix with appropriate dimensions is denoted by \(I\).
For a finite set \(\mathcal S\), its cardinality is denoted by
\(|\mathcal S|\). The Kronecker product of \(A\) and \(B\) is denoted by
\(A\otimes B\). For a square matrix \(A\), \(\rho(A)\) denotes its spectral
radius.

For a set \(\mathcal Y \subset \mathbb R^n\) and a vector
\(x\in\mathbb R^n\), \(\dist_\infty(x,\mathcal Y)\) denotes the infinity-norm
distance from \(x\) to \(\mathcal Y\):
$\dist_\infty(x,\mathcal Y):=\inf_{y\in\mathcal Y}\|x-y\|_\infty$.
We write \(\Delta_{|{\cal A}|}\) for the probability simplex over a finite
action set \({\cal A}\): $\Delta_{|{\cal A}|}
:=\left\{
p\in\mathbb R^{|{\cal A}|}:\ p_i\ge 0,\ \sum_{i=1}^{{|{\cal A}|}}p_i=1
\right\}$. Throughout the paper, \(\Argmax\) denotes the set-valued maximizer, while
\(\argmax\) denotes a fixed tie-broken single-valued maximizer.

All vector inequalities are understood componentwise unless otherwise stated.
For a scalar \(x\), define
\[
x^+:=\max\{x,0\},
  \qquad
  x^-:=\max\{-x,0\}.
\]
Thus, \(x^+\) is the positive part of \(x\), while \(x^-\) is the magnitude of
the negative part of \(x\). For a vector \(x\), \(x^+\) and \(x^-\) are
defined componentwise, and therefore
\[
\begin{gathered}
x=x^+-x^-,\qquad |x|=x^++x^-,\\
x^+\ge0,\qquad x^-\ge0.
\end{gathered}
\]
For a finite matrix family \(\calH=\{A_1,\ldots,A_N\}\), the notation
\(\co(\calH)\) denotes the convex hull $\co(\calH)  := \left\{
  \sum_{i=1}^N\lambda_iA_i: \lambda_i\ge0,
  \sum_{i=1}^N\lambda_i=1 \right\}$.

\subsection{Switching Systems}\label{subsec:switching-system}

Let us consider the discrete-time switched linear system~\citep{liberzon2003switching,lin2009stability,shorten2007stability}
\[
z_{k+1}=A_{\sigma_k}z_k+\xi_k,
  \qquad k\in\{0,1,2,\ldots\},
\]
where each index \(i\in\{1,2,\ldots,M\}\) is called a mode and corresponds to
one matrix \(A_i\). The sequence \(\sigma_k\in\{1,2,\ldots,M\}\) is the
switching signal; it specifies which mode is active at time \(k\). Equivalently, saying
that mode \(\sigma_k=i\) is active means that the update from \(z_k\) to
\(z_{k+1}\) uses the dynamics matrix \(A_i\). The prescribed set of all possible
mode matrices \(\calH:=\{A_1,A_2,\ldots,A_M\}\) is called the switching family,
and \(\xi_k\) is an additive disturbance. In this paper, a mode denotes the
currently applied dynamics matrix; in the Q-learning applications below, modes
are induced by policy selectors. When \(\xi_k=0\), the deterministic part reduces to
\[
z_{k+1}=A_{\sigma_k}z_k,
  \qquad k\in\{0,1,2,\ldots\}.
\]
If the switching family has a single element, say \(\calH=\{H\}\), then
there is no genuine mode variation and the switched system reduces to the usual
linear time-invariant (LTI) recursion
\[
z_{k+1}=Hz_k+\xi_k,
\]
or, in the disturbance-free case, \(z_{k+1}=Hz_k\). Thus, LTI systems are
included as the singleton-family special case of switching systems.
The worst-case exponential rate of a switched linear family is characterized by
the joint spectral radius (JSR)~\citep{tsitsiklis1997lyapunov,rota1960rho,blondel2005rho,jungers2009rho}, defined as follows.
\begin{definition}\label{def:rho}
For a bounded set of matrices \(\calH\subset\R^{m\times m}\), its JSR is denoted by
\[
\rho(\calH)
  :=
  \lim_{k\to\infty}
  \sup_{A_1,\ldots,A_k\in\calH}
  \|A_k\cdots A_1\|^{1/k},
\]
where the value is independent of the chosen submultiplicative norm.  When
\(\calH\) is finite, the supremum for each fixed product length is a maximum
over all products generated by matrices in \(\calH\).  If
\(\calH=\{H\}\) consists of a single matrix, then this definition reduces to
the usual spectral radius:
\[
\rho(\{H\})=\lim_{k\to\infty}\|H^k\|^{1/k}=\rho(H).
\]
\end{definition}

\begin{lemma}\label{lem:product-growth-bound}
Let \(\calH\subset\R^{m\times m}\) be finite. For any
\(\beta>\rho(\calH)\), define
\[
K_\beta(\calH)
  :=
  \sup_{t\ge0}\beta^{-t}
  \max_{A_0,\ldots,A_{t-1}\in\calH}
  \|A_{t-1}\cdots A_0\|_\infty,
\]
where the product of length zero is the identity. Then
\(K_\beta(\calH)<\infty\), and every product generated by \(\calH\) satisfies
\[
\|A_{t-1}\cdots A_0\|_\infty
  \le
  K_\beta(\calH)\beta^t,
  \qquad t\ge0.
\]
\end{lemma}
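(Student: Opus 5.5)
Write $M_t:=\max_{A_0,\ldots,A_{t-1}\in\calH}\|A_{t-1}\cdots A_0\|_\infty$, with $M_0=\|I\|_\infty=1$. Once finiteness of $K_\beta(\calH)$ is known, the displayed product bound follows at once: for any fixed $t$ and any product of length $t$ we have $\|A_{t-1}\cdots A_0\|_\infty\le M_t\le K_\beta(\calH)\beta^t$ by definition of the supremum. So the whole content is showing that $\sup_{t\ge0}\beta^{-t}M_t<\infty$. Since $\calH$ is finite, each $M_t$ is a maximum over finitely many products, hence finite.

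The plan is to use \cref{def:rho} with the $\infty$-norm. The induced $\infty$-norm is submultiplicative, so \cref{def:rho} gives $\lim_{t\to\infty}M_t^{1/t}=\rho(\calH)$; the supremum in the definition is a maximum because $\calH$ is finite. Pick $\beta'$ with $\rho(\calH)<\beta'<\beta$; if $\rho(\calH)=0$ any $\beta'\in(0,\beta)$ works. By the limit there is $T$ such that $M_t^{1/t}\le\beta'$, i.e.\ $M_t\le(\beta')^t$, for all $t\ge T$. Then for $t\ge T$,
\[
\beta^{-t}M_t\le(\beta'/\beta)^t\le 1 .
\]
For $0\le t<T$, $\beta^{-t}M_t$ is a finite maximum of finitely many finite numbers; here $\beta>\rho(\calH)\ge0$ ensures $\beta^{-t}$ is well defined. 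Hence
\[
K_\beta(\calH)\le\max\Bigl\{1,\ \max_{0\le t<T}\beta^{-t}M_t\Bigr\}<\infty .
\]
In fact we can simply take $\beta'=\beta$ and use $\limsup_t M_t^{1/t}<\beta$ directly; the intermediate $\beta'$ is only a convenience.

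The main obstacle is justifying that the limit in \cref{def:rho} exists and equals $\rho(\calH)$ for the specific $\infty$-norm. The definition asserts norm-independence, which we take as given. If we wanted a self-contained argument, we would prove existence via Fekete's lemma: submultiplicativity gives $M_{s+t}\le M_sM_t$, so $\log M_t$ is subadditive. If some $M_t=0$, all longer products vanish and the bound is trivial. Norm independence follows from equivalence of norms, since the constants disappear under the $1/t$ power. With this in hand, the argument above completes the proof.
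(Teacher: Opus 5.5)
Your proof is correct and follows essentially the same route as the paper: use the limit in \cref{def:rho} (with the submultiplicative $\infty$-norm) to get $M_t\le\beta^t$ for all $t\ge T$, so that $\sup_{t\ge0}\beta^{-t}M_t$ reduces to a maximum over the finitely many lengths $t<T$ together with a tail bounded by $1$. Your explicit tail bound and the Fekete-lemma remark on existence of the limit make precise what the paper's brief proof leaves implicit.
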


\begin{proof}
By \cref{def:rho}, for any \(\beta>\rho(\calH)\) there exists an integer
\(T\) such that
\[
\max_{A_0,\ldots,A_{t-1}\in\calH}
\|A_{t-1}\cdots A_0\|_\infty^{1/t}\le \beta
\]
for every \(t\ge T\). Hence the supremum in the definition of
\(K_\beta(\calH)\) is attained over a finite set of product lengths up to a
bounded tail, and is finite. The product estimate follows immediately from the
same definition.
\end{proof}

\subsection{Markov Decision Processes}\label{subsec:mdp}

We consider an infinite-horizon discounted Markov decision process
(MDP)~\citep{puterman1994markov}, in which an agent sequentially chooses actions
to maximize cumulative discounted rewards. The state and action spaces are
finite and are denoted by \({\cal S}:=\{1,2,\ldots,|{\cal S}|\}\) and
\({\cal A}:=\{1,2,\ldots,|{\cal A}|\}\), respectively. At state
\(s\in{\cal S}\), the decision maker selects an action \(a\in{\cal A}\). The
next state \(s'\) is drawn according to \(P(s'|s,a)\), and the transition
incurs reward \(r(s,a,s')\), where
\(r:{\cal S}\times{\cal A}\times{\cal S}\to{\mathbb R}\). We write
\(r(s_k,a_k,s_{k+1})=:r_{k+1}\) for \(k\ge0\). The expected one-step reward is
\[
\begin{aligned}
R(s,a)
&:={\mathbb E}[r_{k+1}\mid s_k=s,a_k=a]\\
&=\sum_{s'\in\calS}P(s'|s,a)r(s,a,s').
\end{aligned}
\]
A deterministic policy \(\pi:{\cal S}\to{\cal A}\) maps each state \(s\) to an
action \(\pi(s)\). Throughout the paper, the discount factor satisfies
\(\gamma\in(0,1)\). Let \(\Theta\) denote the set of all admissible
deterministic policies. For a policy \(\pi\), the Q-function under \(\pi\) is defined as
\[
Q^{\pi}(s,a)={\mathbb E}\left[ \left. \sum_{k=0}^\infty {\gamma^k r_{k+1}} \right|s_0=s,a_0=a,\pi \right],
\]
for all \(s\in{\cal S}\) and \(a\in{\cal A}\). The corresponding value
function is \(V^\pi(s):=Q^\pi(s,\pi(s))\). The optimal Q-function is
\[
Q^*(s,a):=\sup_{\pi\in\Theta} Q^\pi(s,a),\qquad s\in{\cal S},\ a\in{\cal A}.
\]
A deterministic policy \(\pi^*\) is optimal if
\(Q^{\pi^*}(s,a)=Q^*(s,a)\) for all
\((s,a)\in{\cal S}\times{\cal A}\). Once \(Q^*\) is known, an optimal
tie-broken greedy policy can be recovered as
\(\pi^*(s)=\argmax_{a\in {\cal A}}Q^*(s,a)\). The corresponding optimal value
function is
\[
V^*(s):=\max_{a\in {\cal A}}Q^*(s,a).
\]
For each state \(s\in{\cal S}\), define the set of optimal greedy actions by
\[
\Phi^*(s):=\Argmax_{a\in {\cal A}}Q^*(s,a).
\]
The set of all optimal deterministic policies is then
\[
\Theta^*
:=
\{\pi\in\Theta:\ \pi(s)\in\Phi^*(s),\ \forall s\in {\cal S}\}.
\]
The definition above is equivalent to the usual set of optimal deterministic
policies in a finite discounted MDP. If \(\pi\in\Theta^*\), then \(V^*\) satisfies the Bellman evaluation equation for \(\pi\). Uniqueness of the discounted evaluation fixed point gives \(V^\pi=V^*\), and hence
\(Q^\pi=Q^*\). Conversely, if \(\pi\) is optimal, then \(V^\pi=V^*\), and
the policy-evaluation equation implies
\(V^*(s)=Q^*(s,\pi(s))\) for every state. Thus
\(\pi(s)\in\Phi^*(s)\) for all \(s\), so \(\Theta^*\) is exactly the set
of optimal deterministic policies.

\subsection{Definitions}\label{sec:definitions}

In this paper, we consider a finite discounted Markov decision process
(MDP)~\citep{puterman1994markov} with state-space
\(\calS=\{1,\ldots,|\calS|\}\), action-space
\(\calA=\{1,\ldots,|\calA|\}\), transition probability
\(P(s'\mid s,a)\), real-valued one-step reward \(r(s,a,s')\), expected
reward
\[
R(s,a):=\sum_{s'\in\calS}P(s'\mid s,a)r(s,a,s'),
\]
and discount factor \(\gamma\in(0,1)\). State-action functions are viewed as
vectors in \(\R^{|\calS||\calA|}\) using the action-block ordering
\[
(1,1),(2,1),\ldots,(|\calS|,1),(1,2),(2,2),\ldots,(|\calS|,|\calA|).
\]
All matrices and vectors indexed by state-action pairs use this ordering. For \(Q\in\R^{|\calS||\calA|}\),
\[
Q=
\begin{bmatrix}
   Q(\cdot,1)\\
   \vdots\\
   Q(\cdot,|\calA|)
\end{bmatrix},
\qquad
Q(s,a)=(e_a\otimes e_s)^\top Q,
\]
where \(e_s\in\R^{|\calS|}\) and \(e_a\in\R^{|\calA|}\) are the standard
basis vectors. Let us define the matrix
\[
\begin{gathered}
P:=
\begin{bmatrix}
P_1\\ \vdots\\ P_{|\calA|}
\end{bmatrix}
\in\R^{|\calS||\calA|\times |\calS|},\\
R:=
\begin{bmatrix}
R(\cdot,1)\\ \vdots\\ R(\cdot,|\calA|)
\end{bmatrix}
\in\R^{|\calS||\calA|}.
\end{gathered}
\]
where \(P_a=P(\cdot\mid\cdot,a)\in\R^{|\calS|\times|\calS|}\). For the
finite MDP above, let us define
\[
R_{\max}:=\max_{(s,a,s')\in\calS\times\calA\times\calS}|r(s,a,s')|.
\]
Because the state and action spaces are finite and rewards are real-valued,
\(R_{\max}<\infty\).

Let \(\Theta\) denote the set of deterministic stationary policies
\(\pi:\calS\to\calA\). For any stochastic policy
\(\mu:\calS\to\Delta_{|\calA|}\), we define
\[
\boldsymbol{\Pi}^\mu
:=
\begin{bmatrix}
\mu(1)^\top\otimes e_1^\top\\
\mu(2)^\top\otimes e_2^\top\\
\vdots\\
\mu(|\calS|)^\top\otimes e_{|\calS|}^\top
\end{bmatrix}
\in\R^{|\calS|\times |\calS||\calA|}.
\]
For a deterministic policy \(\pi\in\Theta\), we use the same notation
\(\boldsymbol{\Pi}^\pi\) by identifying \(\pi(s)\) with its one-hot encoding.
Then \(P\boldsymbol{\Pi}^\mu\in\R^{|\calS||\calA|\times|\calS||\calA|}\) is
the transition matrix of the state-action pair induced by \(\mu\). For
\(Q\in\R^{|\calS||\calA|}\), define
\[
\begin{gathered}
V_Q(s):=\max_{a\in\calA}Q(s,a),\\
V_Q:=(V_Q(1),\ldots,V_Q(|\calS|))^\top.
\end{gathered}
\]
The Bellman optimality operator is written as
\[
F(Q):=R+\gamma PV_Q.
\]
With this notation, \(Q^*\) is the unique fixed point of \(F\), and
\(V^*=V_{Q^*}\). For any \(Q\in\R^{|\calS||\calA|}\), let
\(\pi_Q(s):=\argmax_{a\in\calA}Q(s,a)\) denote the tie-broken greedy policy
with respect to \(Q\). We also use the shorthand
\[
\boldsymbol{\Pi}_Q:=\boldsymbol{\Pi}^{\pi_Q}.
\]
The advantage function at state \(s\) and action \(a\) is
\[
A^*(s,a):=V^*(s)-Q^*(s,a)\ge0.
\]
Consequently,
\[
A^*(s,a)=0
  \quad\Longleftrightarrow\quad
  a\in\Phi^*(s).
\]

\subsection{Q-Learning}\label{subsec:qlearning-error}

We consider the standard asynchronous Q-learning recursion~\citep{sutton1998reinforcement,bertsekas1996neuro}
with a constant step-size \(\alpha\) under an i.i.d. observation model. At step
\(k\), a state-action pair \((s_k,a_k)\) is sampled independently across \(k\)
according to
\[
\begin{gathered}
\Prob(s_k=s,a_k=a)=d(s,a):=p(s)b(a\mid s),\\
(s,a)\in\calS\times\calA.
\end{gathered}
\]
where \(p\) is a state-sampling distribution and \(b\) is a behavior policy.
Then \(s'_k\sim P(\cdot\mid s_k,a_k)\) is sampled independently conditional on
\((s_k,a_k)\), and the reward sample is taken as
\[
r_{k+1}:=r(s_k,a_k,s'_k),
  \qquad k\in\{0,1,2,\ldots\}.
\]
The asynchronous Q-learning update is
\[
\begin{aligned}
&Q_{k+1}(s_k,a_k)\\
&\quad=
Q_k(s_k,a_k)+\alpha\Bigl(
 r_{k+1}+\gamma\max_{u\in\calA}Q_k(s'_k,u)\\
&\qquad\qquad
 -Q_k(s_k,a_k)\Bigr),\\
&\quad k\in\{0,1,2,\ldots\}.
\end{aligned}
\]
All other coordinates \((s,a)\ne(s_k,a_k)\) remain unchanged:
\[
\begin{gathered}
Q_{k+1}(s,a)=Q_k(s,a),\\
(s,a)\ne(s_k,a_k),\qquad k\in\{0,1,2,\ldots\}.
\end{gathered}
\]
Let \(\{\calF_k\}_{k\ge0}\) be the natural filtration of this Q-learning
process,
\[
\begin{aligned}
\calF_0&:=\sigma(Q_0),\\
\calF_k&:=
  \sigma\!\Bigl(
  Q_0,
  \bigl\{(s_t,a_t,s'_t,r_{t+1}):\\
&\qquad\qquad 0\le t\le k-1\bigr\}
  \Bigr),\quad k\ge1.
\end{aligned}
\]
Then \(Q_k\) is \(\calF_k\)-measurable.  Moreover, the fresh observation
\((s_k,a_k,s'_k,r_{k+1})\) is independent of \(\calF_k\) and is revealed
between times \(k\) and \(k+1\).

Let
\[
D:=\diag(d(s,a))_{(s,a)\in\calS\times\calA}
  \in\R^{(|\calS|\,|\calA|)\times(|\calS|\,|\calA|)}.
\]
For any stochastic policy \(\mu:\calS\to\Delta_{|\calA|}\), define
\[
\mathbf A_\mu
  :=
  I-\alpha D+\alpha\gamma DP\boldsymbol{\Pi}^\mu.
\]
For the deterministic-policy family, set
\[
\begin{gathered}
\calM_\alpha:=\{\mathbf A_\pi:\pi\in\Theta\},
  \qquad
\rho_+:=\rho_\alpha^{\mathrm{dir}}:=\rho(\calM_\alpha),\\
\calM_\alpha^-:=\{\mathbf A_\pi:\pi\in\Theta^*\},
  \qquad
\rho_-:=\rho(\calM_\alpha^-).
\end{gathered}
\]
Choose once and for all
\[
\begin{gathered}
\pi_-^\star\in\argmin_{\pi\in\Theta^*}\rho(\mathbf A_\pi),
  \qquad
\mathbf A_-^\star:=\mathbf A_{\pi_-^\star},\\
\rho_-^\star:=\rho(\mathbf A_-^\star)
  =
  \min_{\pi\in\Theta^*}\rho(\mathbf A_\pi).
\end{gathered}
\]
Write \(e_{s,a}\) for the standard basis vector corresponding to coordinate
\((s,a)\), equivalently \(e_a\otimes e_s\) under the chosen Kronecker ordering.
For the sampled coordinate, define the random vector
\[
\zeta_k:=e_{s_k,a_k}
\left(
 r_{k+1}+\gamma\max_{u\in\calA}Q_k(s'_k,u)-Q_k(s_k,a_k)
\right).
\]
Since \(Q_k\) is \(\calF_k\)-measurable and the observation at time \(k\) is
independent of \(\calF_k\),
\[
\E[\zeta_k\mid\calF_k]
  =
  D(F(Q_k)-Q_k).
\]
Accordingly, the martingale-difference noise is
\begin{equation}
\label{eq:wk-def}
\begin{aligned}
w_k
:=&
e_{s_k,a_k}\Bigl(
 r_{k+1}
 +\gamma\max_{u\in\calA}Q_k(s'_k,u)\\
&\qquad\qquad
 -Q_k(s_k,a_k)
\Bigr)
-D(F(Q_k)-Q_k).
\end{aligned}
\end{equation}
Therefore, the vector form of Q-learning is
\[
\begin{gathered}
Q_{k+1}=Q_k+
  \alpha\{D(F(Q_k)-Q_k)+w_k\},\\
k\in\{0,1,2,\ldots\}.
\end{gathered}
\]
where \(w_k\) is defined in~\cref{eq:wk-def}. With
\[
e_k:=Q_k-Q^*,
\]
the Q-learning error recursion is
\[
\begin{gathered}
e_{k+1}=e_k+
  \alpha D\{\gamma P(V_{Q_k}-V^*)-e_k\}
  +\alpha w_k,\\
k\in\{0,1,2,\ldots\}.
\end{gathered}
\]

\begin{assumption}\label{assump:basic}
The following standing conditions hold throughout the paper.
\begin{enumerate}[(i)]
\item \(d(s,a)>0\) for every \((s,a)\in\calS\times\calA\).
\item The step size satisfies \(\alpha\in(0,1)\).
\item The initial Q-table \(Q_0\in\R^{|\calS|\,|\calA|}\) is deterministic.
\end{enumerate}
\end{assumption}

We use the constants
\[
d_{\min}:=\min_{(s,a)\in\calS\times\calA}d(s,a),
  \qquad
  d_{\max}:=\max_{(s,a)\in\calS\times\calA}d(s,a).
\]
By \cref{assump:basic}, \(d_{\min}>0\).

For the stochastic finite-time analysis, define the uniform noise constant
using the reward bound \(R_{\max}\) from \cref{sec:definitions}:
\begin{equation}
\label{eq:wmax-def}
\begin{aligned}
W_{\max}:={}&
  \left(1+\sqrt{|\calS|\,|\calA|}\right)^2\\
&\times
  \left(
  R_{\max}+(1+\gamma)\right.\\
&\qquad\left.
  \max\bigl\{\|Q_0\|_\infty,R_{\max}/(1-\gamma)\bigr\}
  \right)^2.
\end{aligned}
\end{equation}
The conditional moment estimates for \(w_k\), \(w_k^-\), and \(w_k^+\) are
proved in Appendix~\ref{app:noise-proofs} from bounded rewards, the boundedness of the iterates,
and the i.i.d. observation model. The signed increments \(w_k^-\) and \(w_k^+\)
are used only for the pathwise one-step sign comparisons. The stochastic
finite-time estimates below use the martingale difference \(w_k\) through
lower and upper reference filters.

\section{Stochastic Q-Learning and Sign-Separated Analysis}
\label{sec:sign-separated-comparison}
\setcounter{theorem}{9}

This section presents the stochastic switching-system form of Q-learning and
then introduces the sign-separated lower and upper comparison systems.
Since \(V_{Q_k}=\boldsymbol{\Pi}_{Q_k}Q_k\), the stochastic recursion can be
written as
\[
Q_{k+1}
  =
  \mathbf A_{\pi_{Q_k}}Q_k+
  \alpha DR+
  \alpha w_k,
  \qquad k\in\{0,1,2,\ldots\}.
\]
Equivalently, in error coordinates, the error recursion is
\[
\begin{aligned}
e_{k+1}
&=
  \mathbf A_{\pi_{Q_k}}e_k
  -\alpha\gamma DP\bigl(V^*-\boldsymbol{\Pi}^{\pi_{Q_k}}Q^*\bigr)\\
&\quad
  +\alpha w_k,
  \qquad k\in\{0,1,2,\ldots\}.
\end{aligned}
\]
Thus, constant-step-size stochastic Q-learning is an affine stochastic switching
system with an additive noise increment. As in the deterministic
switching-system representation, the affine offset can be removed by
representing the Bellman-max error through a stochastic policy~\citep{lee2026lyapunov}.
The following stochastic linear switching-system representation follows the
direct switching theory of \citet{lee2026lyapunov}.
\begin{lemma}\label{lem:stochastic-exact-direct-switching}
Under the standing Q-learning assumptions, along every trajectory of the
stochastic Q-learning recursion, there exists a sequence of
\(\calF_k\)-measurable stochastic policies \(\{\mu_k\}_{k\ge0}\) such that the stochastic error recursion can be written exactly as the
linear stochastic switching system
\[
\begin{aligned}
e_{k+1}
&=
  \mathbf A_{\mu_k}e_k+\alpha w_k,\\
\mathbf A_{\mu_k}
&:=
  I-\alpha D+\alpha\gamma DP\boldsymbol{\Pi}^{\mu_k},\\
&\hspace{4.6cm} k\in\{0,1,2,\ldots\}.
\end{aligned}
\]
Moreover, each \(\mathbf A_{\mu_k}\) belongs to \(\co(\calM_\alpha)\).
\end{lemma}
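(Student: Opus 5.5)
The plan is to show that, for each state \(s\), the Bellman-max difference \(V_{Q_k}(s)-V^*(s)\) can be written as a convex combination of the coordinates \(e_k(s,a)\), \(a\in\calA\). The weights of that combination then define a stochastic policy \(\mu_k(s)\in\Delta_{|\calA|}\). Starting from the error recursion
\[
e_{k+1}=e_k+\alpha D\{\gamma P(V_{Q_k}-V^*)-e_k\}+\alpha w_k,
\]
it suffices to construct an \(\calF_k\)-measurable \(\mu_k\) with
\[
V_{Q_k}-V^*=\boldsymbol{\Pi}^{\mu_k}e_k .
\]
Substituting this identity gives
\[
e_{k+1}=(I-\alpha D+\alpha\gamma DP\boldsymbol{\Pi}^{\mu_k})e_k+\alpha w_k=\mathbf A_{\mu_k}e_k+\alpha w_k .
\]

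The key step is a scalar sandwich argument at each state. Fix \(s\), let \(a_Q:=\pi_{Q_k}(s)\), and pick any \(a^*\in\Phi^*(s)\), for instance \(a^*=\pi^*(s)\). Since \(Q^*(s,a_Q)\le V^*(s)\), we have
\[
V_{Q_k}(s)-V^*(s)\le Q_k(s,a_Q)-Q^*(s,a_Q)=e_k(s,a_Q).
\]
Since \(V_{Q_k}(s)\ge Q_k(s,a^*)\) and \(V^*(s)=Q^*(s,a^*)\), we also have
\[
V_{Q_k}(s)-V^*(s)\ge e_k(s,a^*).
\]
So the scalar \(\delta_s:=V_{Q_k}(s)-V^*(s)\) lies in the interval between \(e_k(s,a^*)\) and \(e_k(s,a_Q)\). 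By the intermediate value property, there is \(\lambda_s\in[0,1]\) with
\[
\delta_s=\lambda_s e_k(s,a_Q)+(1-\lambda_s)e_k(s,a^*).
\]
To make the choice explicit and measurable, take \(\lambda_s=1\) when \(e_k(s,a_Q)=e_k(s,a^*)\). Otherwise take
\[
\lambda_s=\frac{\delta_s-e_k(s,a^*)}{e_k(s,a_Q)-e_k(s,a^*)}.
\]
Then define
\[
\mu_k(s):=\lambda_s\,e_{a_Q}+(1-\lambda_s)\,e_{a^*}\in\Delta_{|\calA|},
\]
with the two masses added together if \(a_Q=a^*\). The definition of \(\boldsymbol{\Pi}^{\mu}\) gives
\[
(\boldsymbol{\Pi}^{\mu_k}e_k)(s)=\sum_a\mu_k(s)_a\,e_k(s,a)=\delta_s,
\]
which is the desired identity.

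Measurability holds because \(\mu_k\) is a Borel function of \(Q_k\), the deterministic \(Q^*\), and the fixed tie-breaking rule. Since \(Q_k\) is \(\calF_k\)-measurable, so is \(\mu_k\).

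For the convex-hull claim, note that \(\mu\mapsto\boldsymbol{\Pi}^\mu\) is affine in each row and each row depends only on \(\mu(s)\). Hence any stochastic policy is a convex combination of deterministic policies with \(\boldsymbol{\Pi}^\mu=\sum_\pi c_\pi\boldsymbol{\Pi}^\pi\). One concrete choice is the product weights \(c_\pi=\prod_s\mu(s)_{\pi(s)}\), which sum to one and have marginals \(\mu(s)\) at each state. Then
\[
\mathbf A_\mu=\sum_\pi c_\pi\mathbf A_\pi\in\co(\calM_\alpha).
\]

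The only delicate point I expect is the measurability and tie-handling bookkeeping in the case \(e_k(s,a_Q)=e_k(s,a^*)\), which the explicit formula above handles; the algebra itself is routine.
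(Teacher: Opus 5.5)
Your proof is correct and follows essentially the same route as the paper. Both arguments use a per-state sandwich that places \(V_{Q_k}(s)-V^*(s)\) in the convex hull of \(\{e_k(s,a):a\in\calA\}\), substitute \(V_{Q_k}-V^*=\boldsymbol{\Pi}^{\mu_k}e_k\) into the error recursion, and use the product weights \(\prod_s\mu_k(s,\pi(s))\) for the convex-hull claim. The only difference is that your two-point sandwich \(e_k(s,a^*)\le V_{Q_k}(s)-V^*(s)\le e_k(s,\pi_{Q_k}(s))\), with the explicit \(\lambda_s\), is a sharper, constructive version of the paper's \(\min_a\)/\(\max_a\) bound, and it makes the \(\calF_k\)-measurability of \(\mu_k\) explicit rather than merely asserted.
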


\begin{proof}
From the stochastic error recursion,
\[
e_{k+1}
=
  e_k+\alpha D\{\gamma P(V_{Q_k}-V^*)-e_k\}+\alpha w_k.
\]
Fix \(k\) and \(s\). Since \(e_k(s,a)=Q_k(s,a)-Q^*(s,a)\) and
\(A^*(s,a)=V^*(s)-Q^*(s,a)\),
\[
V_{Q_k}(s)-V^*(s)
=
\max_{a\in\calA}\{e_k(s,a)-A^*(s,a)\}.
\]
Because \(A^*(s,a)\ge0\) and at least one action in each state is optimal, this
scalar lies between \(\min_{a\in\calA}e_k(s,a)\) and
\(\max_{a\in\calA}e_k(s,a)\). Hence there exists a stochastic vector
\(\mu_k(s)\in\Delta_{|\calA|}\) such that
\[
V_{Q_k}(s)-V^*(s)
=
\sum_{a\in\calA}\mu_k(s,a)e_k(s,a).
\]
Since the action set is finite and \(e_k\) is \(\calF_k\)-measurable, the vectors
\(\mu_k(s)\) can be chosen \(\calF_k\)-measurably by a fixed tie-breaking rule.
Thus \(V_{Q_k}-V^*=\boldsymbol{\Pi}^{\mu_k}e_k\), and substitution gives
\[
e_{k+1}
=
  (I-\alpha D+\alpha\gamma DP\boldsymbol{\Pi}^{\mu_k})e_k+
  \alpha w_k
=
  \mathbf A_{\mu_k}e_k+
  \alpha w_k.
\]
Finally, every stochastic policy matrix is a convex combination of deterministic
policy matrices: with
\(\lambda_\pi:=\prod_{s\in\calS}\mu_k(s,\pi(s))\), we have
\(\lambda_\pi\ge0\), \(\sum_{\pi\in\Theta}\lambda_\pi=1\), and
\(\boldsymbol{\Pi}^{\mu_k}=\sum_{\pi\in\Theta}\lambda_\pi
\boldsymbol{\Pi}^\pi\). Therefore
\(\mathbf A_{\mu_k}=\sum_{\pi\in\Theta}\lambda_\pi\mathbf A_\pi\in
\co(\calM_\alpha)\).
\end{proof}

This representation and the associated lower/upper comparison systems build on
the switching-system analyses by
\citet{lee2024final,lee2023discrete,lee2020unified,lee2026lyapunov}. The
Bellman-max sandwich and the deterministic residual identities are the same as
in \cref{subsec:deterministic-comparisons}; the only additional term is the
stochastic increment \(\alpha w_k\). Throughout this section, \(Q_k\), \(e_k\),
the greedy selectors introduced below, and all Bellman-max residuals are
\(\calF_k\)-measurable.

\subsection{Summary of Sign-Separated Comparison Systems}
\label{subsec:stochastic-comparison-summary}

The detailed lower, upper, and sign-separated comparison construction is given
in Appendix~\ref{subsec:stochastic-comparison-systems}. We use here only the
consequences needed for the finite-time estimates. With \(\mathbf A_-^\star\)
defined above, let \(\pi_k^+(s)\in\Argmax_{a\in\calA}e_k(s,a)\). The comparison
argument gives the reference filters
\[
\begin{gathered}
\ell_{k+1}=\mathbf A_-^\star\ell_k+\alpha w_k,
  \qquad \ell_0=e_0,\\
u_{k+1}=\mathbf A_{\pi_k^+}u_k+\alpha w_k,
  \qquad u_0=e_0,
\end{gathered}
\]
which satisfy
\[
\ell_k\le e_k\le u_k,
  \qquad k\in\{0,1,2,\ldots\}.
\]
Consequently,
\[
e_k^-\le \ell_k^-,
  \qquad
  e_k^+\le u_k^+,
  \qquad k\in\{0,1,2,\ldots\}.
\]
Thus the negative component is controlled through a fixed optimal-policy LTI
reference filter, whereas the positive component is controlled through a
predictable switching reference filter selected from the full deterministic-policy family.
\subsection{Finite-Time Rates for Sign-Separated Components}\label{sec:finite-time-rates}

\setcounter{theorem}{12}
We next derive finite-time bounds for the two sign components using reference
filters driven by the same martingale difference as the Q-learning error. The
negative component is controlled by the fixed LTI filter generated by
\(\mathbf A_-^\star\). The positive component is controlled by a switching filter
whose matrices range over \(\calM_\alpha\).

\begin{theorem}\label{thm:negative-part-rate}
Assume \cref{assump:basic}, and let \(W_{\max}\) be the noise constant defined
in \cref{eq:wmax-def}. Fix \(\varepsilon>0\) such that
\[
\rho_-^\star+\varepsilon<1,
\]
and define
\[
\beta_-:=\rho_-^\star+\varepsilon,
  \qquad
K_-:=K_{\beta_-}(\{\mathbf A_-^\star\}).
\]
Then, for every \(k\ge0\),
\begin{equation}
\label{eq:negative-part-bound}
\E[\|e_k^-\|_\infty]
  \le
  K_-\beta_-^k\|e_0\|_\infty
  +
  |\calS|\,|\calA|
  \sqrt{\frac{\alpha W_{\max}}{d_{\min}(1-\gamma)}}.
\end{equation}
Consequently, the negative part is certified at the optimized single-policy LTI
rate
\[
\rho_-^\star
  =
  \min_{\pi\in\Theta^*}\rho(\mathbf A_\pi).
\]
\end{theorem}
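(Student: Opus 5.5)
We will use the lower reference filter from \cref{subsec:stochastic-comparison-summary}. Since \(\ell_k\le e_k\), we have \(e_k^-\le\ell_k^-\le|\ell_k|\) componentwise, so \(\|e_k^-\|_\infty\le\|\ell_k\|_\infty\). It therefore suffices to bound \(\E[\|\ell_k\|_\infty]\). The filter is LTI, so we unroll it:
\[
\ell_k=(\mathbf A_-^\star)^k e_0+\alpha\sum_{t=0}^{k-1}(\mathbf A_-^\star)^{k-1-t}w_t .
\]
Applying \cref{lem:product-growth-bound} to the singleton family \(\{\mathbf A_-^\star\}\) with \(\beta_-\) gives \(\|(\mathbf A_-^\star)^k e_0\|_\infty\le K_-\beta_-^k\|e_0\|_\infty\). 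This is exactly the transient term in \cref{eq:negative-part-bound}.

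The noise term \(N_k:=\alpha\sum_{t<k}(\mathbf A_-^\star)^{k-1-t}w_t\) is handled by a second-moment argument that avoids \(K_-\). Instead we use the structure of \(\mathbf A_-^\star\). Since \(\pi_-^\star\) is deterministic, \(\mathbf A_-^\star=I-\alpha D+\alpha\gamma DP\boldsymbol{\Pi}^{\pi_-^\star}\) is entrywise nonnegative, because \(\alpha d(s,a)<1\). Its row sums equal \(1-\alpha d(s,a)(1-\gamma)\le 1-\alpha d_{\min}(1-\gamma)\), so \(\|\mathbf A_-^\star\|_\infty\le 1-\alpha d_{\min}(1-\gamma)=:q\).

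To bound \(\E\|N_k\|_\infty\), we go to coordinates: \(\E\|N_k\|_\infty\le\sum_{i}\E|N_k(i)|\le\sum_i\sqrt{\E[N_k(i)^2]}\). The martingale-difference property \(\E[w_t\mid\calF_t]=0\), with \(w_t\) being \(\calF_{t+1}\)-measurable, kills the cross terms. This gives
\[
\E[N_k(i)^2]=\alpha^2\sum_{t<k}\E\bigl[(e_i^\top(\mathbf A_-^\star)^{k-1-t}w_t)^2\bigr]\le\alpha^2\sum_{t<k}q^{2(k-1-t)}\,\E\|w_t\|_\infty^2 .
\]
Here we used \(|e_i^\top M w|\le\|M\|_\infty\|w\|_\infty\). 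The appendix noise estimate gives \(\E[\|w_t\|_\infty^2\mid\calF_t]\le W_{\max}\), which relies on the standard bound \(\|Q_k\|_\infty\le\max\{\|Q_0\|_\infty,R_{\max}/(1-\gamma)\}\). The geometric sum is at most \(1/(1-q^2)\le 1/(1-q)=1/(\alpha d_{\min}(1-\gamma))\). Hence \(\E[N_k(i)^2]\le\alpha W_{\max}/(d_{\min}(1-\gamma))\). Summing the square roots over the \(|\calS||\calA|\) coordinates yields the second term in \cref{eq:negative-part-bound}. The rate statement then follows because \(\varepsilon\) can be taken arbitrarily small.

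The main obstacle is the noise step. First, we must verify that the conditional second moment of \(w_t\) is bounded uniformly by \(W_{\max}\) in the \(\infty\)-norm, which requires the a.s. boundedness of the iterates; we take this from Appendix~\ref{app:noise-proofs}. Second, we must control the noise with \(\|\mathbf A_-^\star\|_\infty\) rather than \(K_-\beta_-^t\), since the stated constant contains no \(K_-\) factor. If the contraction bound \(\|\mathbf A_-^\star\|_\infty\le q\) failed to hold for this matrix family, we would fall back on the \(K_-\)-dependent bound, which would not match the stated constant. The nonnegativity and row-sum computation is therefore the key check.
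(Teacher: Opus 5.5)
Your proposal is correct and is essentially the paper's proof. Like the paper, you bound \(\|e_k^-\|_\infty\) by \(\|\ell_k\|_\infty\) and split the lower filter into two parts: the homogeneous part, controlled by \cref{lem:product-growth-bound} for the singleton family, and the zero-initialized noise-driven part. You bound the noise part exactly as the paper's reference-filter noise lemma does, via the row-sum contraction \(\|\mathbf A_-^\star\|_\infty\le 1-\alpha d_{\min}(1-\gamma)\), martingale orthogonality, \(1/(1-q^2)\le 1/(1-q)\), and a coordinatewise Jensen-and-sum step, while using \(\|w_t\|_\infty\le\|w_t\|_2\) to justify applying the \(W_{\max}\) moment bound.
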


\begin{proof}
The proof is provided in
Appendix~\ref{app:stochastic-rate-proofs}.
\end{proof}

Since the distance to the nonnegative orthant is exactly the infinity norm of
the negative part, the previous theorem immediately gives the following
orthant-distance estimate.

\begin{corollary}\label{cor:orthant-distance-rate}
Let
\[
\R_+^{|\calS|\,|\calA|}
  :=
  \{x\in\R^{|\calS|\,|\calA|}:x\ge0\}.
\]
Under the assumptions of \cref{thm:negative-part-rate}, for every
\(\varepsilon>0\) satisfying \(\rho_-^\star+\varepsilon<1\), with
\(\beta_-:=\rho_-^\star+\varepsilon\) and
\(K_-:=K_{\beta_-}(\{\mathbf A_-^\star\})\), every \(k\ge0\) satisfies
\[
\begin{aligned}
\E\!\left[
  \dist_\infty(e_k,\R_+^{|\calS|\,|\calA|})
  \right]
&\le
  K_-\beta_-^k\|e_0\|_\infty\\
&\quad+
  |\calS|\,|\calA|
  \sqrt{\frac{\alpha W_{\max}}{d_{\min}(1-\gamma)}}.
\end{aligned}
\]
Consequently, the full error iterate \(e_k\) approaches the nonnegative orthant
at the certified exponential rate
\[
\rho_-^\star
  =
  \min_{\pi\in\Theta^*}\rho(\mathbf A_\pi),
\]
up to the constant-step-size noise floor.
\end{corollary}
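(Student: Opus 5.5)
The corollary reduces to \cref{thm:negative-part-rate} through a deterministic identity that holds for every vector. We will show that for any \(x\in\R^{|\calS|\,|\calA|}\),
\[
\dist_\infty\bigl(x,\R_+^{|\calS|\,|\calA|}\bigr)=\|x^-\|_\infty .
\]
We then apply this identity pathwise with \(x=e_k\), take expectations, and insert the bound \cref{eq:negative-part-bound} with the same \(\varepsilon\), \(\beta_-\), and \(K_-\).

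For the upper bound in the identity, we use the candidate \(y:=x^+\). It lies in the nonnegative orthant, and \(x-y=-x^-\). Hence \(\|x-y\|_\infty=\|x^-\|_\infty\), so the distance is at most \(\|x^-\|_\infty\).

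For the lower bound, we take an arbitrary \(y\ge0\) and any coordinate \(i\) with \(x_i<0\). Then \(|x_i-y_i|=y_i-x_i\ge -x_i=x_i^-\). Coordinates with \(x_i\ge0\) have \(x_i^-=0\) and contribute nothing. Therefore \(\|x-y\|_\infty\ge\|x^-\|_\infty\). Taking the infimum over \(y\) gives equality, and the infimum is attained at \(x^+\).

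Applying the identity with \(x=e_k\) gives \(\dist_\infty(e_k,\R_+^{|\calS|\,|\calA|})=\|e_k^-\|_\infty\) almost surely, for every \(k\). The left side is therefore a measurable random variable whose expectation equals \(\E[\|e_k^-\|_\infty]\). The hypotheses are exactly those of \cref{thm:negative-part-rate}, so \cref{eq:negative-part-bound} transfers verbatim and yields the displayed estimate.

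The rate statement then follows from the first term \(K_-\beta_-^k\|e_0\|_\infty\). Because \(\varepsilon>0\) is arbitrary subject to \(\rho_-^\star+\varepsilon<1\), the exponential factor can be taken as close to \(\rho_-^\star\) as desired, at the cost of a possibly larger constant \(K_-\). The second term is the \(k\)-independent noise floor of order \(\sqrt{\alpha}\).

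There is no real obstacle. The only point requiring care is the direction of the lower bound in the distance identity, which is the elementary inequality shown above.
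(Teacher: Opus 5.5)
Your proposal is correct and follows the paper's proof exactly. The paper also reduces the corollary to \cref{thm:negative-part-rate} via the pathwise identity $\dist_\infty(x,\R_+^{|\calS|\,|\calA|})=\|x^-\|_\infty$ applied with $x=e_k$. The only difference is that you verify this identity, with the infimum attained at $y=x^+$ and the coordinatewise lower bound $y_i-x_i\ge -x_i$, whereas the paper simply asserts it.
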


\begin{proof}
The proof is provided in
Appendix~\ref{app:stochastic-rate-proofs}.
\end{proof}

The positive component requires one additional reference-filter residual term,
because the switching filter is compared with a fixed reference mode that carries
the same martingale difference.
\begin{theorem}\label{thm:positive-part-rate}
Assume~\cref{assump:basic}, and let \(W_{\max}\) be the noise constant defined
in \cref{eq:wmax-def}. Fix \(\varepsilon>0\) such that
\[
\rho_++\varepsilon<1,
\]
and define
\[
\beta_+:=\rho_++\varepsilon,
  \qquad
K_+:=K_{\beta_+}(\calM_\alpha).
\]
Then, for every \(k\ge0\),
\begin{equation}
\label{eq:positive-part-bound}
\begin{aligned}
\E[\|e_k^+\|_\infty]
&\le
  K_+\beta_+^k\|e_0\|_\infty\\
&\quad+
  2\alpha\gamma d_{\max}K_+^2
  k\beta_+^{k-1}\|e_0\|_\infty\\
&\quad+
  \left(1+\frac{2\gamma d_{\max}}{d_{\min}(1-\gamma)}\right)\\
&\qquad\times
  |\calS|\,|\calA|
  \sqrt{\frac{\alpha W_{\max}}{d_{\min}(1-\gamma)}}.
\end{aligned}
\end{equation}
Here \(k\beta_+^{k-1}\) is interpreted as zero when \(k=0\). This polynomial
prefactor does not change the exponential envelope: for any
\(\tilde\beta\in(\beta_+,1)\), there is a finite constant
\(C_{\tilde\beta}\) such that
\(k\beta_+^{k-1}\le C_{\tilde\beta}\tilde\beta^k\) for all \(k\ge0\).
Consequently, the positive part is certified at the full direct JSR rate
\[
\rho_+=\rho_\alpha^{\mathrm{dir}}
  =
  \rho(\{\mathbf A_\pi:\pi\in\Theta\}).
\]
\end{theorem}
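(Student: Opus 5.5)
The plan is to compare the switching upper filter \(u_k\) with the fixed-mode LTI filter \(\ell_k\) that drives the same martingale difference. First, by the comparison summary in \cref{subsec:stochastic-comparison-summary}, \(e_k^+\le u_k^+\) componentwise, so pathwise \(\|e_k^+\|_\infty\le\|u_k\|_\infty\). I would take as reference mode \(\mathbf A_{\mathrm{ref}}:=\mathbf A_-^\star\), which lies in \(\calM_\alpha\). The reference filter is then exactly \(\ell_k\), with \(\ell_{k+1}=\mathbf A_-^\star\ell_k+\alpha w_k\) and \(\ell_0=e_0\). Set \(z_k:=u_k-\ell_k\). Subtracting the two recursions shows that the noise cancels exactly:
\[
z_{k+1}=\mathbf A_{\pi_k^+}z_k+(\mathbf A_{\pi_k^+}-\mathbf A_-^\star)\ell_k,\qquad z_0=0 .
\]
Hence
\[
z_k=\sum_{j=0}^{k-1}\Phi(k,j+1)(\mathbf A_{\pi_j^+}-\mathbf A_-^\star)\ell_j,
\]
where \(\Phi(k,j+1)=\mathbf A_{\pi_{k-1}^+}\cdots\mathbf A_{\pi_{j+1}^+}\) is a product from \(\calM_\alpha\). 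From \(\mathbf A_\pi-\mathbf A_{\pi'}=\alpha\gamma DP(\boldsymbol\Pi^\pi-\boldsymbol\Pi^{\pi'})\) and the fact that the rows of \(P\boldsymbol\Pi^\pi\) are probability vectors, the perturbation satisfies \(\|\mathbf A_{\pi_j^+}-\mathbf A_-^\star\|_\infty\le 2\alpha\gamma d_{\max}\).

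Next, split \(\ell_j=(\mathbf A_-^\star)^je_0+n_j\), where \(n_{j+1}=\mathbf A_-^\star n_j+\alpha w_j\) and \(n_0=0\). The bound is then assembled from three pieces.

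\emph{(a) The \(\ell_k\) part.} Since \(\mathbf A_-^\star\in\calM_\alpha\), \cref{lem:product-growth-bound} gives \(\|(\mathbf A_-^\star)^ke_0\|_\infty\le K_+\beta_+^k\|e_0\|_\infty\). For the noise part I would invoke the same noise-floor estimate used for \cref{thm:negative-part-rate}:
\[
\E\|n_j\|_\infty\le N:=|\calS|\,|\calA|\sqrt{\frac{\alpha W_{\max}}{d_{\min}(1-\gamma)}}\qquad\text{for all }j.
\]

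\emph{(b) The transient part of \(z_k\).} Bounding \(\Phi(k,j+1)\) by \(K_+\beta_+^{k-1-j}\) and \((\mathbf A_-^\star)^j\) by \(K_+\beta_+^j\) via \cref{lem:product-growth-bound}, each of the \(k\) summands is at most \(2\alpha\gamma d_{\max}K_+^2\beta_+^{k-1}\|e_0\|_\infty\). Summing gives the \(k\beta_+^{k-1}\) term.

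\emph{(c) The noise part of \(z_k\).} Here I would not use the JSR constant. Instead I would use the elementary contraction
\[
\|\mathbf A_\pi\|_\infty\le 1-\alpha d_{\min}(1-\gamma)\qquad\text{for every }\pi.
\]
This holds because the matrices are nonnegative for \(\alpha<1\) and each row sum equals \(1-\alpha d(s,a)(1-\gamma)\). Hence \(\|\Phi(k,j+1)\|_\infty\le(1-\alpha d_{\min}(1-\gamma))^{k-1-j}\). Taking expectations pathwise, which is legitimate since all norms are bounded before expectation, gives
\[
\E\Bigl\|\sum_j\Phi(k,j+1)(\mathbf A_{\pi_j^+}-\mathbf A_-^\star)n_j\Bigr\|_\infty
\le 2\alpha\gamma d_{\max}N\sum_{i\ge0}\bigl(1-\alpha d_{\min}(1-\gamma)\bigr)^i
=\frac{2\gamma d_{\max}}{d_{\min}(1-\gamma)}N .
\]
Adding (a), (b) and (c) yields \cref{eq:positive-part-bound}.

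The closing remark about the polynomial prefactor follows because \(k(\beta_+/\tilde\beta)^k\) is bounded for \(\tilde\beta>\beta_+\).

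The main obstacle is the uniform noise-floor bound \(\E\|n_j\|_\infty\le N\) for the LTI filter. If that estimate is not already available as a separate lemma from the proof of \cref{thm:negative-part-rate}, I would prove it directly. First, bound \(\E\|n_j\|_\infty\) by the sum over coordinates of \(\sqrt{\E[n_j(s,a)^2]}\), which accounts for the \(|\calS||\calA|\) factor. Then expand \(n_j=\alpha\sum_{i<j}(\mathbf A_-^\star)^{j-1-i}w_i\). Using orthogonality of martingale differences, together with \(\E\|w_i\|^2\le W_{\max}\) from the appendix noise lemma and the \(\infty\)-norm contraction above, the variance sum is bounded by \(\alpha^2W_{\max}/(\alpha d_{\min}(1-\gamma))\). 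A minor point to check is that \(\pi_k^+\) being \(\calF_k\)-measurable causes no difficulty. It does not, because the \(z_k\) bound is applied pathwise and the expectation enters only through \(\E\|n_j\|_\infty\).
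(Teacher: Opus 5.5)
Your proposal is correct and follows the paper's proof essentially step for step. The paper also compares $u_k$ with a fixed-mode reference filter driven by the same $w_k$, and splits that filter into a deterministic part bounded by $K_+\beta_+^k\|e_0\|_\infty$ and a noise part bounded via \cref{lem:reference-filter-noise-bound}; it then bounds the residual $u_k-x_k$ using the product-growth estimate for the transient piece and the row-sum contraction $1-\alpha d_{\min}(1-\gamma)$ for the noise piece. The only difference is cosmetic: the paper allows an arbitrary reference policy $\bar\pi\in\Theta$, while your choice $\bar\pi=\pi_-^\star$ makes the reference filter coincide with the lower filter $\ell_k$.
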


\begin{proof}
The proof is provided in
Appendix~\ref{app:stochastic-rate-proofs}.
\end{proof}

Combining \cref{thm:negative-part-rate,thm:positive-part-rate}, the negative
part is certified at the optimized single-policy LTI rate
\[
\rho_-^\star
  =
  \min_{\pi\in\Theta^*}\rho(\mathbf A_\pi),
\]
whereas the positive part is certified at the full JSR rate
\[
\rho_+=\rho_\alpha^{\mathrm{dir}}
  =
  \rho(\{\mathbf A_\pi:\pi\in\Theta\}).
\]
Since
\[
\rho_-^\star
  \le
  \rho_-
  \le
  \rho_+,
\]
the optimized negative-side LTI certificate is no slower than the
optimal-policy switching certificate and can be strictly faster than the
positive-side direct switching certificate. The positive stochastic estimate has
an additional transient term from the reference-filter residual, but its
certified exponential rate is still \(\rho_+\).

This comparison follows from the sign of the Bellman-max residuals: for every
optimal policy \(\pi\in\Theta^*\),
\[
V_Q-V^*
  \ge
  \boldsymbol{\Pi}^\pi e.
\]
Thus, the residual relative to a fixed optimal policy is nonnegative and does
not enlarge the negative-part upper comparison. Choosing
\[
\pi_-^\star\in\argmin_{\pi\in\Theta^*}\rho(\mathbf A_\pi)
\]
therefore gives the fastest spectral-radius certificate among fixed LTI lower
comparisons associated with optimal policies. In contrast, the positive side
must allow all actions through
\[
V_Q-V^*
  \le
  \max_{a\in\calA}e(s,a).
\]
Thus, suboptimal actions with positive error can enter the positive-side
switching dynamics. This is the structural source of the max-induced
asymmetry.

These conclusions are structural comparison statements, not a universal
positive-bias theorem for Q-learning. They do not by themselves imply
\(\E[e_k]\ge0\) or \(\E[V_{Q_k}-V^*]\ge0\). Rather, they show that negative
errors are dominated by an optimized LTI comparison associated with an optimal
policy, while positive errors may be propagated by the switching family and may
also be sustained by the nonnegative residual
\[
\alpha\gamma DP
  \bigl(V_{Q_k}-V^*-\boldsymbol{\Pi}^{\pi_-^\star}e_k\bigr).
\]
When \(\rho_-^\star<\rho_+\), the negative component has a strictly faster
certified exponential envelope than the positive component, although a
particular sample path or problem instance need not exhibit strictly faster
realized decay of \(e_k^-\). Under a constant step-size, the stochastic bounds
also contain a constant-step-size noise floor; hence the natural interpretation is
entry into a small neighborhood rather than exact convergence to zero in
general.

\setcounter{theorem}{16}
\section{Examples}\label{sec:examples}

The following deterministic example shows that the positive component can realize the slower certificate rate.
\begin{example}\label{ex:slower-positive-deterministic-trajectory}
The following one-state, two-action example shows that the positive part can
have a strictly slower realized decay than the negative part, and also that the
certificate rates can satisfy $\rho_-^\star<\rho_+$. Let $\calS=\{1\}$ and $\calA=\{1,2\}$, and let both actions be self-loops: $P(1\mid 1,1)=P(1\mid 1,2)=1$. Let \(R(1,1)=R(1,2)=0\), \(\gamma=0.9\), and \(\alpha\in(0,1)\), and use the sampling distribution
$d(1,1)=0.9$, $d(1,2)=0.1$. Then, $Q^*(1,1)=Q^*(1,2)=0$. It follows that both actions are optimal and $\Theta^*=\{\pi_1,\pi_2\}$, where $\pi_i(1)=i$ for $i\in\{1,2\}$.
With the ordering \((1,1),(1,2)\), write
\[
e_k=
  \begin{bmatrix}
  Q_k(1,1)\\
  Q_k(1,2)
  \end{bmatrix}.
\]
The selection matrices are $\boldsymbol{\Pi}^{\pi_1}=\begin{bmatrix}1&0\end{bmatrix}$ and $\boldsymbol{\Pi}^{\pi_2}=\begin{bmatrix}0&1\end{bmatrix}$, and
\[
D=
  \begin{bmatrix}
  0.9&0\\
  0&0.1
  \end{bmatrix}.
\]
Therefore the two direct modes are
\[
\begin{gathered}
\mathbf A_{\pi_1}
  =
  \begin{bmatrix}
  1-0.09\alpha&0\\
  0.09\alpha&1-0.1\alpha
  \end{bmatrix},\\
\mathbf A_{\pi_2}
  =
  \begin{bmatrix}
  1-0.9\alpha&0.81\alpha\\
  0&1-0.01\alpha
  \end{bmatrix}.
\end{gathered}
\]
Since these matrices are triangular, their spectral radii are $\rho(\mathbf A_{\pi_1})=1-0.09\alpha$ and $\rho(\mathbf A_{\pi_2})=1-0.01\alpha$.
Therefore, the optimized negative-side LTI certificate is
\[
\rho_-^\star
  =
  \min_{\pi\in\Theta^*}\rho(\mathbf A_\pi)
  =
  1-0.09\alpha.
\]
On the other hand, $\|\mathbf A_{\pi_1}\|_\infty=\|\mathbf A_{\pi_2}\|_\infty=1-0.01\alpha$, hence $\rho(\{\mathbf A_{\pi_1},\mathbf A_{\pi_2}\})\le1-0.01\alpha$.
Since \(\mathbf A_{\pi_2}\) itself has spectral radius \(1-0.01\alpha\), we also have $\rho(\{\mathbf A_{\pi_1},\mathbf A_{\pi_2}\})\ge1-0.01\alpha$.
Consequently,
\[
\rho_+
  =
  \rho(\{\mathbf A_{\pi_1},\mathbf A_{\pi_2}\})
  =
  1-0.01\alpha,
\]
and hence $\rho_-^\star=1-0.09\alpha<1-0.01\alpha=\rho_+$.

Now, choose the initial error vector as
\[
e_0=
  \begin{bmatrix}
  -A\\
  B
  \end{bmatrix},
  \qquad
  A>0,
  \quad
  B>0.
\]
If \(e_k(1,2)>e_k(1,1)\), then the Bellman max selects action \(2\), and the recursion is
\[
e_{k+1}=\mathbf A_{\pi_2}e_k,
  \qquad k\in\{0,1,2,\ldots\}.
\]
Solving this triangular recursion gives $e_k(1,2)=B(1-0.01\alpha)^k$, and
\[
e_k(1,1)
  =
  \frac{81}{89}B(1-0.01\alpha)^k
  -
  \left(A+\frac{81}{89}B\right)(1-0.9\alpha)^k.
\]
Moreover,
\[
\begin{aligned}
e_k(1,2)-e_k(1,1)
&=
  \frac{8}{89}B(1-0.01\alpha)^k\\
&\quad+
  \left(A+\frac{81}{89}B\right)(1-0.9\alpha)^k\\
&>0.
\end{aligned}
\]
for every \(k\ge0\). Hence the branch $e_{k+1}=\mathbf A_{\pi_2}e_k$ is valid for all
\(k\ge0\). Since \(e_k(1,2)>0\) and \(e_k(1,2)>e_k(1,1)\), the positive part satisfies
\[
\|e_k^+\|_\infty
  =
  e_k(1,2)
  =
  B(1-0.01\alpha)^k.
\]
The negative part is
\[
\begin{aligned}
\|e_k^-\|_\infty
&=
  (-e_k(1,1))^+\\
&=
  \Biggl[
  \left(A+\frac{81}{89}B\right)(1-0.9\alpha)^k\\
&\qquad-
  \frac{81}{89}B(1-0.01\alpha)^k
  \Biggr]^+.
\end{aligned}
\]
Hence
\[
\|e_k^-\|_\infty
  \le
  \left(A+\frac{81}{89}B\right)(1-0.9\alpha)^k.
\]
Consequently, on this deterministic trajectory, the positive part decays exactly at the
slow factor $1-0.01\alpha$, whereas the negative part is bounded by the
faster $(1-0.9\alpha)$-geometric term and eventually vanishes.

For the numerical plots in this example, we use the same instance of the
arbitrary initial condition \(e_0=(-A,B)^\top\), namely \(A=20\) and
\(B=1\), for both the deterministic and stochastic simulations. The plotted quantities are the
relative sign errors
\[
\frac{\|e_k^+\|_\infty}{\|e_0^+\|_\infty}
\quad\text{and}\quad
\frac{\|e_k^-\|_\infty}{\|e_0^-\|_\infty}.
\]
\begin{center}
\includegraphics[width=0.98\linewidth]{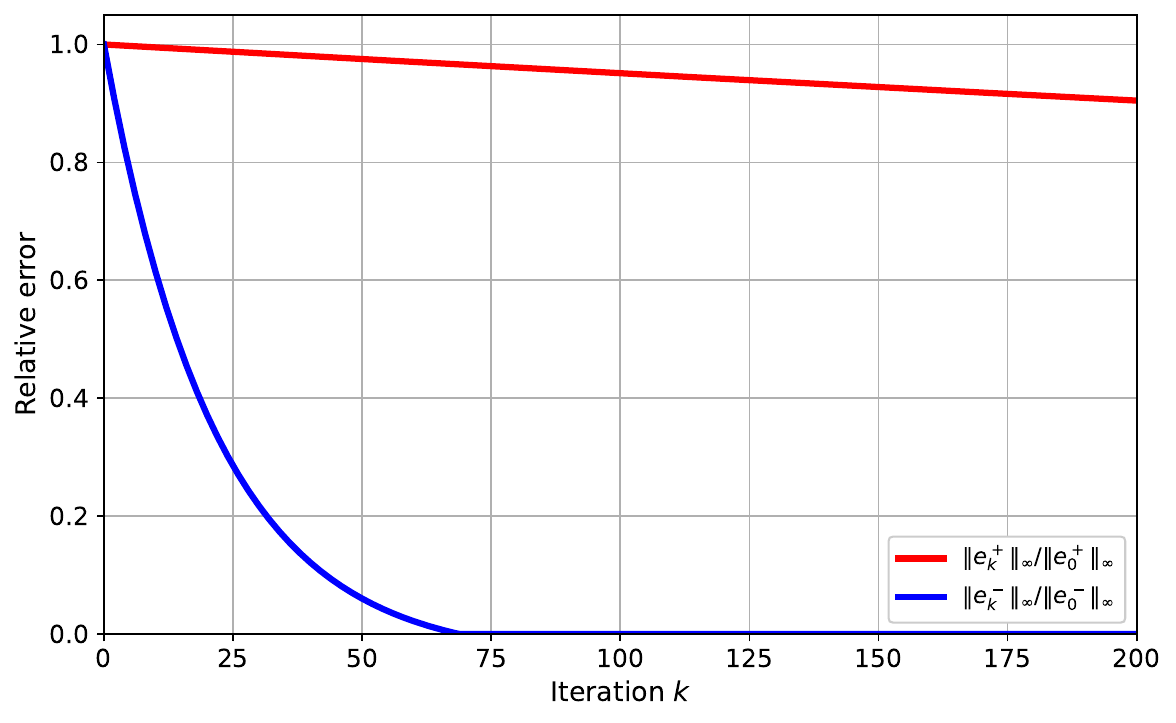}
\captionof{figure}{Relative-error simulation of the deterministic trajectory in
\Cref{ex:slower-positive-deterministic-trajectory} with \(\alpha=0.05\) and the shared
initial condition \(e_0=(-20,1)^\top\). The red curve shows
\(\|e_k^+\|_\infty/\|e_0^+\|_\infty\), and the blue curve shows
\(\|e_k^-\|_\infty/\|e_0^-\|_\infty\).}
\label{fig:ex1-positive-negative}
\end{center}
The simulation in \Cref{fig:ex1-positive-negative} illustrates the preceding formulas:
the normalized positive part follows the slow factor $1-0.01\alpha$, while the
normalized negative part decreases much faster and eventually becomes zero.

We also consider the corresponding stochastic asynchronous Q-learning update for the
same MDP. Starting from the same initial condition \(e_0=(-20,1)^\top\), we simulate one independent run of this stochastic recursion with \(\alpha=0.05\).
\begin{center}
\includegraphics[width=0.98\linewidth]{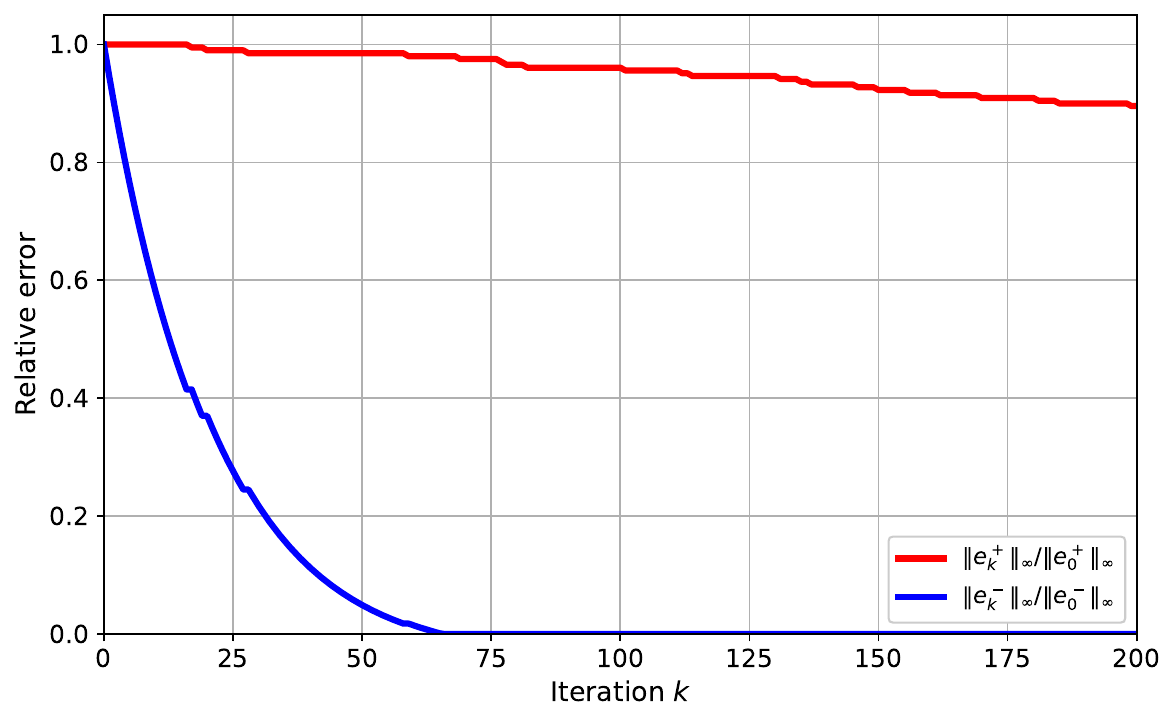}
\captionof{figure}{Relative-error simulation of one stochastic asynchronous Q-learning run
for the same MDP with \(\alpha=0.05\) and the shared initial condition
\(e_0=(-20,1)^\top\). The red curve shows
\(\|e_k^+\|_\infty/\|e_0^+\|_\infty\), and the blue curve shows
\(\|e_k^-\|_\infty/\|e_0^-\|_\infty\).}
\label{fig:ex1-stochastic-positive-negative}
\end{center}
The stochastic simulation in \Cref{fig:ex1-stochastic-positive-negative} shows the same qualitative separation after normalization: the positive component follows a slow decay profile, whereas the negative component is removed much more rapidly.
\end{example}

\section{Conclusion}\label{sec:conclusion}

This paper developed a sign-separated finite-time analysis of constant-step-size
Q-learning from a switching-system viewpoint. The negative component is bounded
by an optimized fixed-mode linear system associated with an optimal policy,
whereas the positive component requires a switching-family bound over all
deterministic policies. The stochastic bounds retain this asymmetry up to the
constant-step-size noise floor, while the deterministic counterpart is given in
Appendix~\ref{sec:deterministic-qlearning-analysis}. These results are
comparison certificates rather than universal sign-bias claims, and they
complement contraction-based and stochastic-approximation analyses of
Q-learning.


\clearpage
\appendix

\setcounter{theorem}{2}
\section{Deterministic Q-Learning}
\label{sec:deterministic-qlearning-analysis}

Before the stochastic Q-learning analysis, we study the corresponding
noise-free deterministic analysis. The deterministic recursion is the
conditional-mean counterpart of the asynchronous stochastic Q-learning update:
\[
Q_{k+1}
  =
  Q_k+
  \alpha D(F(Q_k)-Q_k),
  \qquad k\in\{0,1,2,\ldots\}.
\]
With $e_k:=Q_k-Q^*$, we have
\begin{equation}
\label{eq:deterministic-error-recursion}
\begin{gathered}
e_{k+1}=e_k+
  \alpha D\{\gamma P(V_{Q_k}-V^*)-e_k\},\\
k\in\{0,1,2,\ldots\}.
\end{gathered}
\end{equation}
Since \(V_{Q_k}=\boldsymbol{\Pi}_{Q_k}Q_k\), the deterministic recursion also has the
affine switching-system representation~\citep{lee2020unified,lee2023discrete,
lee2024final}
\[
Q_{k+1}
  =
  \mathbf A_{\pi_{Q_k}}Q_k+
  \alpha DR,
  \qquad k\in\{0,1,2,\ldots\},
\]
where, for each deterministic policy \(\pi\in\Theta\),
\[
\mathbf A_\pi
  :=
  I-\alpha D+\alpha\gamma DP\boldsymbol{\Pi}^\pi.
\]
Equivalently, in error coordinates, the error recursion can be written as
\[
\begin{gathered}
e_{k+1}
  =
  \mathbf A_{\pi_{Q_k}}e_k
  -\alpha\gamma DP\bigl(V^*-\boldsymbol{\Pi}^{\pi_{Q_k}}Q^*\bigr),\\
k\in\{0,1,2,\ldots\}.
\end{gathered}
\]
Thus, deterministic Q-learning is an affine discrete-time switching system whose
mode is selected by the greedy policy induced by the current iterate~\citep{lee2024final,lee2023discrete,lee2020unified}.
The affine offset can also be removed exactly by the stochastic-policy
linearization of the Bellman maximization error as in \citet{lee2026lyapunov}.
\begin{lemma}\label{lem:deterministic-exact-direct-switching}
Along every trajectory of the deterministic recursion, there exists a sequence
of stochastic policies \(\{\mu_k\}_{k\ge0}\) such that the deterministic error recursion can be written exactly as the linear switching system
\[
\begin{aligned}
e_{k+1}&=
  \mathbf A_{\mu_k}e_k,\\
\mathbf A_{\mu_k}
&:=
  I-\alpha D+\alpha\gamma DP\boldsymbol{\Pi}^{\mu_k},\\
&\hspace{4.6cm} k\in\{0,1,2,\ldots\}.
\end{aligned}
\]
Moreover, each \(\mathbf A_{\mu_k}\) belongs to \(\co(\calM_\alpha)\).
\end{lemma}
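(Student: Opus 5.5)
The plan is to repeat the argument of \cref{lem:stochastic-exact-direct-switching} with the noise term removed. Everything rests on the state-wise identity for the Bellman-max error in the deterministic recursion \cref{eq:deterministic-error-recursion}.

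First, fix \(k\) and a state \(s\). Writing \(Q_k(s,a)=e_k(s,a)+Q^*(s,a)\) and \(Q^*(s,a)=V^*(s)-A^*(s,a)\), we get
\[
V_{Q_k}(s)-V^*(s)=\max_{a\in\calA}\{e_k(s,a)-A^*(s,a)\}.
\]
Since \(A^*(s,a)\ge0\), this quantity is at most \(\max_a e_k(s,a)\). Choosing any \(a^*\in\Phi^*(s)\), for which \(A^*(s,a^*)=0\), shows it is at least \(e_k(s,a^*)\ge\min_a e_k(s,a)\).

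Second, the scalar therefore lies in the interval \([\min_a e_k(s,a),\max_a e_k(s,a)]\). This interval is the convex hull of the values \(\{e_k(s,a)\}_{a\in\calA}\), so there is \(\mu_k(s)\in\Delta_{|\calA|}\) with
\[
V_{Q_k}(s)-V^*(s)=\sum_a\mu_k(s,a)e_k(s,a).
\]
Concretely, interpolate between a minimizing action and a maximizing action. Take \(\mu_k(s)\) to be a point mass when the two extremes coincide, and otherwise use the weight \(\theta\) obtained by solving a linear equation. Stacking over states gives \(V_{Q_k}-V^*=\boldsymbol{\Pi}^{\mu_k}e_k\) by the definition of \(\boldsymbol{\Pi}^\mu\). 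Substituting this into \cref{eq:deterministic-error-recursion} gives
\[
e_{k+1}=(I-\alpha D+\alpha\gamma DP\boldsymbol{\Pi}^{\mu_k})e_k=\mathbf A_{\mu_k}e_k .
\]
The trajectory here is deterministic, so no measurability issue arises.

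Third, for the convex-hull claim, set \(\lambda_\pi:=\prod_{s}\mu_k(s,\pi(s))\) for \(\pi\in\Theta\). These weights are nonnegative, and their sum factorizes as \(\prod_s\sum_a\mu_k(s,a)=1\). Row \(s\) of \(\sum_\pi\lambda_\pi\boldsymbol{\Pi}^\pi\) equals \(\sum_a\bigl(\sum_{\pi:\pi(s)=a}\lambda_\pi\bigr)(e_a^\top\otimes e_s^\top)\). The inner marginal sum equals \(\mu_k(s,a)\), again by factorization. Hence \(\boldsymbol{\Pi}^{\mu_k}=\sum_\pi\lambda_\pi\boldsymbol{\Pi}^\pi\). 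Because \(\mu\mapsto\mathbf A_\mu\) is affine and the weights sum to one, \(\mathbf A_{\mu_k}=\sum_\pi\lambda_\pi\mathbf A_\pi\in\co(\calM_\alpha)\).

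I expect no real obstacle. The only point needing care is the lower bound in the first step, which requires the existence of an optimal action with zero advantage in every state; this is guaranteed because \(\Phi^*(s)\) is nonempty. The marginalization identity in the third step is routine but should be written out explicitly.
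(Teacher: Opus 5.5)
Your proposal is correct and takes the same route as the paper, which reuses the argument of \cref{lem:stochastic-exact-direct-switching} with \(w_k=0\). That argument bounds \(V_{Q_k}(s)-V^*(s)=\max_a\{e_k(s,a)-A^*(s,a)\}\) between \(\min_a e_k(s,a)\) and \(\max_a e_k(s,a)\), writes it as a convex combination to define \(\mu_k\), substitutes into the recursion, and obtains the convex-hull inclusion from the product weights \(\lambda_\pi=\prod_s\mu_k(s,\pi(s))\). Your explicit marginalization check for \(\boldsymbol{\Pi}^{\mu_k}=\sum_\pi\lambda_\pi\boldsymbol{\Pi}^\pi\) and your interpolation construction of \(\mu_k(s)\) fill in details that the paper states without proof.
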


\begin{proof}
The same convex-combination argument as in
\cref{lem:stochastic-exact-direct-switching} applies with \(w_k=0\). For each
state \(s\), represent \(V_{Q_k}(s)-V^*(s)\) as a convex combination of the
finite set \(\{e_k(s,a):a\in\calA\}\), form the stochastic policy \(\mu_k\), and
substitute \(V_{Q_k}-V^*=\boldsymbol{\Pi}^{\mu_k}e_k\) into the deterministic
error recursion. The inclusion \(\mathbf A_{\mu_k}\in\co(\calM_\alpha)\) follows
from the product-weight expansion of a stochastic policy as a convex combination
of deterministic policies.
\end{proof}

The corresponding switching family is
\[
\calM_\alpha
  :=
  \{\mathbf A_\pi:\pi\in\Theta\}.
\]
The corresponding JSR is
\[
\rho_\alpha^{\mathrm{dir}}  :=   \rho(\calM_\alpha).
\]
Although the exact direct representation may use matrices in
\(\co(\calM_\alpha)\), the same JSR certificate is obtained from the
deterministic-policy family because
\[
\rho(\co(\calM_\alpha))=\rho(\calM_\alpha).
\]
Indeed, \(\calM_\alpha\subseteq\co(\calM_\alpha)\) gives one inequality,
while every product of matrices from \(\co(\calM_\alpha)\) expands as a convex
combination of products of matrices from \(\calM_\alpha\), giving the reverse
inequality after taking norms and the joint spectral-radius limit.

\subsection{Lower and Upper Comparison Systems}
\label{subsec:deterministic-comparisons}

We first recall the lower and upper comparison-system framework developed by
\citet{lee2024final,lee2023discrete,lee2020unified}. The detailed Bellman-max
expansion and residual-sign derivation are given in
Appendix~\ref{app:deterministic-comparison-proofs}.
We state only the comparison systems and the resulting order statement.

For the lower comparison, choose a single optimal policy whose associated LTI
subsystem has the smallest spectral radius:
\[
\pi_-^\star
  \in
  \argmin_{\pi\in\Theta^*}\rho(\mathbf A_\pi).
\]
The minimizer exists because \(\Theta^*\) is finite. Define
\[
\mathbf A_-^\star:=\mathbf A_{\pi_-^\star},
  \qquad
  \rho_-^\star
  :=
  \rho(\mathbf A_-^\star)
  =
  \min_{\pi\in\Theta^*}\rho(\mathbf A_\pi).
\]
Then, the lower comparison system is
\begin{equation}
\label{eq:deterministic-lower-comparison-system}
\ell_{k+1}
  =
  \mathbf A_-^\star\ell_k,
  \qquad
  \ell_0=e_0,
  \qquad k\in\{0,1,2,\ldots\}.
\end{equation}
It corresponds to dropping a nonnegative Bellman-max residual from the exact
error identity relative to the fixed optimal policy \(\pi_-^\star\).

For the upper comparison system, let us define the state-wise maximizer of the current error
\[
\pi_k^+(s)\in\Argmax_{a\in\calA}e_k(s,a),
  \qquad s\in\calS.
\]
Then, the upper comparison system is
\begin{equation}
\label{eq:deterministic-upper-comparison-system}
u_{k+1}
  =
  \mathbf A_{\pi_k^+}u_k,
  \qquad
  u_0=e_0,
  \qquad k\in\{0,1,2,\ldots\}.
\end{equation}
It corresponds to dropping a nonnegative residual in the opposite direction,
where the active mode is selected by the largest state-wise error component.
Thus,~\cref{eq:deterministic-lower-comparison-system} is a fixed-mode LTI
comparison, whereas \cref{eq:deterministic-upper-comparison-system} is a linear
switching comparison driven by \(\pi_k^+\).

\begin{lemma}\label{lem:deterministic-lower-upper-order}
Under~\cref{assump:basic}, with \(\ell_k\) and \(u_k\) defined in~\cref{eq:deterministic-lower-comparison-system,eq:deterministic-upper-comparison-system}, we have
\[
\ell_k\le e_k\le u_k,
  \qquad k\in\{0,1,2,\ldots\}.
\]
\end{lemma}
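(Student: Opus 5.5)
The plan is an induction on \(k\). It rests on two facts: every matrix \(\mathbf A_\pi\) is entrywise nonnegative, and the Bellman-max residuals have a definite sign.

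\textbf{Nonnegativity of the modes.} Since \(\alpha\in(0,1)\) and \(d(s,a)\le 1\), the diagonal matrix \(I-\alpha D\) has entries \(1-\alpha d(s,a)>0\). The matrix \(\alpha\gamma DP\boldsymbol{\Pi}^\pi\) is entrywise nonnegative. Hence \(\mathbf A_\pi\ge0\) entrywise for every \(\pi\in\Theta\), so each mode preserves the componentwise order: \(x\le y\) implies \(\mathbf A_\pi x\le\mathbf A_\pi y\).

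\textbf{Sign of the residuals.} For the lower side, fix \(\pi_-^\star\in\Theta^*\). For each state \(s\),
\[
V_{Q_k}(s)-V^*(s)=\max_a\{e_k(s,a)-A^*(s,a)\}\ge e_k(s,\pi_-^\star(s)),
\]
because \(A^*(s,\pi_-^\star(s))=0\). Thus \(V_{Q_k}-V^*\ge\boldsymbol{\Pi}^{\pi_-^\star}e_k\). For the upper side, \(A^*\ge0\) gives
\[
V_{Q_k}(s)-V^*(s)\le\max_a e_k(s,a)=e_k(s,\pi_k^+(s)),
\]
so \(V_{Q_k}-V^*\le\boldsymbol{\Pi}^{\pi_k^+}e_k\). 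Substituting into \cref{eq:deterministic-error-recursion}, and using \(\alpha\gamma DP\ge0\), yields the one-step sandwich
\[
\mathbf A_-^\star e_k\le e_{k+1}\le\mathbf A_{\pi_k^+}e_k .
\]

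\textbf{Induction.} The base case \(\ell_0=e_0=u_0\) holds by definition. Assume \(\ell_k\le e_k\le u_k\). On the lower side, order preservation of \(\mathbf A_-^\star\) gives
\[
\ell_{k+1}=\mathbf A_-^\star\ell_k\le\mathbf A_-^\star e_k\le e_{k+1}.
\]
On the upper side, the mode \(\pi_k^+\) is selected from \(e_k\), not from \(u_k\). This is harmless, because nonnegativity holds for every deterministic policy, so \(\mathbf A_{\pi_k^+}\) is order preserving. Therefore
\[
e_{k+1}\le\mathbf A_{\pi_k^+}e_k\le\mathbf A_{\pi_k^+}u_k=u_{k+1}.
\]

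The main point of care is establishing nonnegativity of \(\mathbf A_\pi\) rigorously, which requires \(\alpha d(s,a)<1\). This follows from \(\alpha<1\) and \(d\) being a probability distribution. The second point of care is the observation that in the upper filter the switching is driven by \(e_k\) rather than \(u_k\). The monotonicity argument holds uniformly over all modes, so this does not obstruct the comparison.
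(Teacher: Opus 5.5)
Your proof is correct and follows essentially the same route as the paper. The paper inducts on the gaps $\delta_k=e_k-\ell_k$ and $q_k=u_k-e_k$, which satisfy $\delta_{k+1}=\mathbf A_-^\star\delta_k+(\text{nonnegative residual})$ and $q_{k+1}=\mathbf A_{\pi_k^+}q_k+(\text{nonnegative residual})$ with zero initial conditions. That is exactly your order-preservation step combined with your one-step sandwich, and your explicit check that $1-\alpha d(s,a)>0$ makes every $\mathbf A_\pi$ entrywise nonnegative is a detail the paper uses without spelling out.
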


\begin{proof}
We prove the two inequalities directly from the residual identities. First, set
\[
\delta_k:=e_k-\ell_k.
\]
Using \cref{lem:deterministic-lower-error-identity} and the lower comparison
recursion in \cref{eq:deterministic-lower-comparison-system}, we obtain
\[
\begin{aligned}
  \delta_{k+1}
  &=e_{k+1}-\ell_{k+1} \\
  &=\left[
  \mathbf A_-^\star e_k+
  \alpha\gamma DP
  \bigl(V_{Q_k}-V^*-\boldsymbol{\Pi}^{\pi_-^\star}e_k\bigr)
  \right]
  -\mathbf A_-^\star\ell_k \\
  &=\mathbf A_-^\star(e_k-\ell_k)+
  \alpha\gamma DP
  \bigl(V_{Q_k}-V^*-\boldsymbol{\Pi}^{\pi_-^\star}e_k\bigr) \\
  &=\mathbf A_-^\star\delta_k+
  \alpha\gamma DP
  \bigl(V_{Q_k}-V^*-\boldsymbol{\Pi}^{\pi_-^\star}e_k\bigr).
\end{aligned}
\]
Since \(\delta_0=e_0-\ell_0=0\), \(\mathbf A_-^\star\ge0\),
\(D\ge0\), \(P\ge0\), and
\[
V_{Q_k}-V^*-\boldsymbol{\Pi}^{\pi_-^\star}e_k\ge0,
\]
induction gives \(\delta_k\ge0\), that is,
\[
\ell_k\le e_k,
  \qquad k\in\{0,1,2,\ldots\}.
\]

Second, set
\[
q_k:=u_k-e_k.
\]
Using \cref{lem:deterministic-upper-error-identity} and the upper comparison
recursion in \cref{eq:deterministic-upper-comparison-system}, we obtain
\[
\begin{aligned}
  q_{k+1}
  &=u_{k+1}-e_{k+1} \\
  &=\mathbf A_{\pi_k^+}u_k-
  \mathbf A_{\pi_k^+}e_k\\
  &\quad+
  \alpha\gamma DP
  \bigl(\boldsymbol{\Pi}^{\pi_k^+}e_k-(V_{Q_k}-V^*)\bigr) \\
  &=\mathbf A_{\pi_k^+}(u_k-e_k)\\
  &\quad+
  \alpha\gamma DP
  \bigl(\boldsymbol{\Pi}^{\pi_k^+}e_k-(V_{Q_k}-V^*)\bigr) \\
  &=\mathbf A_{\pi_k^+}q_k\\
  &\quad+
  \alpha\gamma DP
  \bigl(\boldsymbol{\Pi}^{\pi_k^+}e_k-(V_{Q_k}-V^*)\bigr).
\end{aligned}
\]
Since \(q_0=u_0-e_0=0\), \(\mathbf A_{\pi_k^+}\ge0\),
\(D\ge0\), \(P\ge0\), and
\[
\boldsymbol{\Pi}^{\pi_k^+}e_k-(V_{Q_k}-V^*)\ge0,
\]
induction gives \(q_k\ge0\), that is,
\[
e_k\le u_k,
  \qquad k\in\{0,1,2,\ldots\}.
\]
Combining the two inequalities yields
\[
\ell_k\le e_k\le u_k,
  \qquad k\in\{0,1,2,\ldots\}.
\]
\end{proof}

\subsection{Sign Comparison Systems}
\label{subsec:deterministic-sign-comparisons}

The sign comparison systems below follow from the same residual signs. Write
\(e_k=e_k^+-e_k^-\). The lower residual identity gives
\[
e_{k+1}
  =
  \mathbf A_-^\star e_k^+
  -
  \mathbf A_-^\star e_k^-
  +
  \alpha\gamma DP
  \bigl(V_{Q_k}-V^*-\boldsymbol{\Pi}^{\pi_-^\star}e_k\bigr),
\]
where the last term is nonnegative. Indeed, \(\mathbf A_-^\star e_k^+\)
and the residual term are nonnegative, and \(\mathbf A_-^\star e_k^-\ge0\).
It follows that each coordinate of \(e_{k+1}\) has the form \(a_i-b_i\), with
\(a_i\ge0\) and \(b_i=(\mathbf A_-^\star e_k^-)_i\ge0\). Since
\((a_i-b_i)^-\le b_i\), we obtain
\[
e_{k+1}^-
  \le
  \mathbf A_-^\star e_k^-.
\]
Similarly, the upper residual identity gives
\[
e_{k+1}
  =
  \mathbf A_{\pi_k^+}e_k^+
  -
  \mathbf A_{\pi_k^+}e_k^-
  -
  \alpha\gamma DP
  \bigl(\boldsymbol{\Pi}^{\pi_k^+}e_k-(V_{Q_k}-V^*)\bigr),
\]
where the subtracted residual is nonnegative. Therefore, we similarly obtain
\[
e_{k+1}^+
  \le
  \mathbf A_{\pi_k^+}e_k^+.
\]
These inequalities lead to the following comparison systems for
the negative and positive parts. The detailed one-step proof is included in~Appendix~\ref{app:deterministic-comparison-proofs}.

\begin{lemma}\label{lem:deterministic-pathwise-sign-comparison}
Under \cref{assump:basic}, define
\begin{equation}
\label{eq:deterministic-negative-sign-comparison-system}
z^-_{k+1}
  =
  \mathbf A_-^\star z^-_k,
  \qquad
  z^-_0=e_0^-,
  \qquad k\in\{0,1,2,\ldots\},
\end{equation}
and
\begin{equation}
\label{eq:deterministic-positive-sign-comparison-system}
z^+_{k+1}
  =
  \mathbf A_{\pi_k^+}z^+_k,
  \qquad
  z^+_0=e_0^+,
  \qquad k\in\{0,1,2,\ldots\}.
\end{equation}
Then
\[
e_k^-\le z^-_k,
  \qquad
  e_k^+\le z^+_k,
  \qquad k\in\{0,1,2,\ldots\}.
\]
\end{lemma}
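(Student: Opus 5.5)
The argument is an induction on \(k\) that combines the one-step sign inequalities derived just before the statement with the entrywise nonnegativity of the comparison matrices.

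First I will check that every matrix \(\mathbf A_\pi=I-\alpha D+\alpha\gamma DP\boldsymbol{\Pi}^\pi\), \(\pi\in\Theta\), is entrywise nonnegative. The matrix \(I-\alpha D\) is diagonal with entries \(1-\alpha d(s,a)\ge 1-\alpha>0\), by \cref{assump:basic}(ii) and \(d(s,a)\le1\). The term \(\alpha\gamma DP\boldsymbol{\Pi}^\pi\) is a product of nonnegative matrices. In particular, \(\mathbf A_-^\star\ge0\) and \(\mathbf A_{\pi_k^+}\ge0\) for every \(k\). Consequently, each of these maps is order preserving on \(\R^{|\calS||\calA|}\): if \(x\le y\), then \(\mathbf A x\le \mathbf A y\).

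Next I will justify the two one-step inequalities
\[
e_{k+1}^-\le \mathbf A_-^\star e_k^-,\qquad e_{k+1}^+\le \mathbf A_{\pi_k^+}e_k^+.
\]
For the first, I use the lower residual identity \(e_{k+1}=\mathbf A_-^\star e_k+\alpha\gamma DP(V_{Q_k}-V^*-\boldsymbol{\Pi}^{\pi_-^\star}e_k)\), taken from \cref{lem:deterministic-lower-error-identity}. Its residual is nonnegative because
\[
V_{Q_k}(s)-V^*(s)=\max_a\{e_k(s,a)-A^*(s,a)\}\ge e_k(s,\pi_-^\star(s)),
\]
which holds since \(A^*(s,\pi_-^\star(s))=0\). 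Splitting \(e_k=e_k^+-e_k^-\), each coordinate of \(e_{k+1}\) has the form \(a_i-b_i\) with \(a_i\ge0\) and \(b_i=(\mathbf A_-^\star e_k^-)_i\ge0\), and \((a_i-b_i)^-\le b_i\) gives the first inequality. The positive side is symmetric. I use the upper identity from \cref{lem:deterministic-upper-error-identity}, whose subtracted residual \(\boldsymbol{\Pi}^{\pi_k^+}e_k-(V_{Q_k}-V^*)\) is nonnegative because \(\max_a\{e_k(s,a)-A^*(s,a)\}\le\max_a e_k(s,a)\). Each coordinate of \(e_{k+1}\) then has the form \(a_i-b_i\) with \(0\le a_i=(\mathbf A_{\pi_k^+}e_k^+)_i\) and \(b_i\ge0\), and \((a_i-b_i)^+\le a_i\) gives the second inequality.

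Finally, the induction. The base case \(k=0\) holds with equality by the initial conditions. For the inductive step, suppose \(e_k^-\le z_k^-\). Then
\[
e_{k+1}^-\le \mathbf A_-^\star e_k^-\le \mathbf A_-^\star z_k^-=z_{k+1}^-,
\]
where the second inequality uses the monotonicity of \(\mathbf A_-^\star\). In the same way, \(e_k^+\le z_k^+\) gives
\[
e_{k+1}^+\le \mathbf A_{\pi_k^+}e_k^+\le \mathbf A_{\pi_k^+}z_k^+=z_{k+1}^+.
\]
The point that needs care is that \(z^+\) is driven by the same selector \(\pi_k^+\), which is determined by the true error \(e_k\) and not by \(z_k^+\). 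This is why the comparison system uses \(\pi_k^+\) as an exogenous switching signal, so that the same matrix appears on both sides of the inequality.

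The main obstacle is the nonnegativity of \(\mathbf A_\pi\). It is exactly where \(\alpha<1\) and \(d\le1\) are needed, and without it the monotone induction fails. Everything else is bookkeeping with the residual signs.
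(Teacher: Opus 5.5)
Your proposal is correct and follows essentially the same route as the paper: you obtain the one-step sign inequalities from the lower and upper residual identities via the splitting $e_k=e_k^+-e_k^-$, then combine them with entrywise nonnegativity of $\mathbf A_-^\star$ and $\mathbf A_{\pi_k^+}$ in a monotone induction from equal initial conditions. Your explicit check that $1-\alpha d(s,a)>0$ gives $\mathbf A_\pi\ge0$ supplies a detail the paper only asserts, and it is right.
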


\begin{proof}
At \(k=0\), the inequalities hold by definition. If they hold at time \(k\), then the one-step sign inequalities stated above and the nonnegativity of
\(\mathbf A_-^\star\) and \(\mathbf A_{\pi_k^+}\) give the inequalities at time
\(k+1\). The result follows by induction.
\end{proof}

The formal deterministic comparison lemmas and their proofs are given in
Appendix~\ref{app:deterministic-comparison-proofs}. These sign-separated recursions are the finite-time objects used below:
the negative part evolves under fixed-mode LTI dynamics, whereas the positive
part is propagated by switching-system dynamics.

The positive-side comparison uses the full switching family
\[
\calM_\alpha^+
  :=
  \{\mathbf A_\pi:\pi\in\Theta\}
  =
  \calM_\alpha,
\]
with
\[
\rho_+
  :=
  \rho(\calM_\alpha^+)
  =
  \rho_\alpha^{\mathrm{dir}}.
\]
For comparison with the optimized negative-side LTI certificate, define the
optimal-policy switching family
\[
\calM_\alpha^{-}
  :=
  \{\mathbf A_\pi:\pi\in\Theta^*\},
\]
with JSR
\[
\rho_-
  :=
  \rho(\calM_\alpha^{-}).
\]
Since \(\Theta^*\subseteq\Theta\), we have
\[
\rho_-^\star
  \le
  \rho_-
  \le
  \rho_+.
\]
Thus, the optimized negative-side LTI certificate is no slower than the
optimal-policy switching certificate at the level of spectral-radius
certificates, and it can be strictly faster than the positive-side direct
switching certificate.

\begin{lemma}\label{lem:rho-plus-rho-minus-strictly-stable}
Under \cref{assump:basic}, the direct rates satisfy
\[
\begin{gathered}
\rho_+
  \le
  1-\alpha(1-\gamma)d_{\min}
  <1,\\
\rho_-^\star
  \le
  1-\alpha(1-\gamma)d_{\min}
  <1.
\end{gathered}
\]
Consequently, there exist \(\varepsilon>0\) such that
\(\rho_++\varepsilon<1\) and \(\rho_-^\star+\varepsilon<1\).
\end{lemma}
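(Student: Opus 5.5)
The plan is to bound the induced infinity norm of every matrix in \(\calM_\alpha\) uniformly by \(1-\alpha(1-\gamma)d_{\min}\). The JSR is then controlled through submultiplicativity, and the spectral radius of \(\mathbf A_-^\star\) is bounded as a special case. First I show that each \(\mathbf A_\pi=I-\alpha D+\alpha\gamma DP\boldsymbol{\Pi}^\pi\) is entrywise nonnegative. The off-diagonal part \(\alpha\gamma DP\boldsymbol{\Pi}^\pi\) is nonnegative because \(D\), \(P\), and \(\boldsymbol{\Pi}^\pi\) are. The diagonal part \(I-\alpha D\) has entries \(1-\alpha d(s,a)\ge 1-\alpha>0\), since \(d(s,a)\le1\) and \(\alpha\in(0,1)\) by \cref{assump:basic}.

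For a nonnegative matrix, the infinity norm is the maximum row sum. Because \(P\boldsymbol{\Pi}^\pi\) is row-stochastic, \(P\boldsymbol{\Pi}^\pi\mathbf 1=\mathbf 1\). Hence
\[
\mathbf A_\pi\mathbf 1=\mathbf 1-\alpha D\mathbf 1+\alpha\gamma D\mathbf 1 .
\]
The row for \((s,a)\) therefore sums to \(1-\alpha(1-\gamma)d(s,a)\le 1-\alpha(1-\gamma)d_{\min}\). This gives \(\|\mathbf A_\pi\|_\infty\le 1-\alpha(1-\gamma)d_{\min}\) for every \(\pi\in\Theta\).

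By submultiplicativity, any product of length \(k\) from \(\calM_\alpha\) has infinity norm at most \((1-\alpha(1-\gamma)d_{\min})^k\). Taking \(k\)-th roots and letting \(k\to\infty\) in \cref{def:rho} yields \(\rho_+\le1-\alpha(1-\gamma)d_{\min}\).

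The bound for \(\rho_-^\star\) follows from the chain \(\rho_-^\star\le\rho_-\le\rho_+\) stated above. Alternatively, it follows directly from \(\rho(\mathbf A_-^\star)\le\|\mathbf A_-^\star\|_\infty\). Strict inequality \(<1\) holds because \(\alpha>0\), \(\gamma<1\), and \(d_{\min}>0\) by \cref{assump:basic}.

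For the final claim, take
\[
\varepsilon:=\tfrac12\alpha(1-\gamma)d_{\min}>0 .
\]
Then \(\rho_++\varepsilon\le1-\tfrac12\alpha(1-\gamma)d_{\min}<1\), and the same holds for \(\rho_-^\star\).

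There is no real obstacle here. The only point requiring care is nonnegativity of the diagonal of \(I-\alpha D\). Without it, the row sum would not equal the absolute row sum and the norm computation would fail. Nonnegativity holds because \(\alpha<1\) and \(d\) is a probability distribution.
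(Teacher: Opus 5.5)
Your proposal is correct and follows the paper's argument: nonnegativity of each \(\mathbf A_\pi\), row sums \(1-\alpha(1-\gamma)d(s,a)\) from row-stochasticity of \(P\boldsymbol{\Pi}^\pi\), a uniform bound on \(\|\mathbf A_\pi\|_\infty\), passage to the JSR through products, and \(\rho_-^\star\le\rho_+\). You also check the diagonal nonnegativity of \(I-\alpha D\) explicitly and give a concrete choice of \(\varepsilon\); the paper leaves both implicit.
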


\begin{proof}
For any \(\pi\in\Theta\), the matrix
\(\mathbf A_\pi=I-\alpha D+\alpha\gamma DP\boldsymbol{\Pi}^\pi\) is
nonnegative. Moreover, \(P\boldsymbol{\Pi}^\pi\) is row stochastic, so the row
sum of \(\mathbf A_\pi\) at coordinate \((s,a)\) is
\[
\begin{aligned}
&1-\alpha d(s,a)+\alpha\gamma d(s,a)\\
&\quad=
  1-\alpha(1-\gamma)d(s,a)\\
&\quad\le
  1-\alpha(1-\gamma)d_{\min}.
\end{aligned}
\]
Thus
\[
\|\mathbf A_\pi\|_\infty
  \le
  1-\alpha(1-\gamma)d_{\min}
  <1
\]
for every \(\pi\in\Theta\). Taking products and then the joint
spectral-radius limit yields the bound on \(\rho_+\). Since
\(\rho_-^\star\le\rho_+\), the same strict upper bound holds for
\(\rho_-^\star\).
\end{proof}

\subsection{Deterministic Finite-Time Rates}
\label{subsec:deterministic-finite-time-rates}

We now convert the sign comparison systems into deterministic finite-time
bounds. The argument uses only product growth estimates from the JSR.

\begin{theorem}\label{thm:deterministic-negative-rate}
Assume \cref{assump:basic}, and fix \(\varepsilon>0\) such that
\[
\rho_-^\star+\varepsilon<1.
\]
Set
\[
\beta_-:=\rho_-^\star+\varepsilon,
  \qquad
K_-:=K_{\beta_-}(\{\mathbf A_-^\star\}).
\]
Then every deterministic Q-learning trajectory satisfies
\[
\|e_k^-\|_\infty
  \le
  K_-\beta_-^k\|e_0^-\|_\infty,
  \qquad k\ge0.
\]
Consequently, the deterministic negative part is certified at rate
\(\rho_-^\star\).
\end{theorem}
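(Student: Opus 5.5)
The argument combines the pathwise negative-part comparison of \cref{lem:deterministic-pathwise-sign-comparison} with the product-growth estimate of \cref{lem:product-growth-bound}, applied to the singleton family \(\{\mathbf A_-^\star\}\). By \cref{lem:deterministic-pathwise-sign-comparison}, along every deterministic trajectory we have the componentwise bound
\[
0\le e_k^-\le z_k^-=(\mathbf A_-^\star)^k e_0^-,\qquad k\ge0,
\]
since the negative comparison system is a fixed-mode LTI recursion started at \(z_0^-=e_0^-\).

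Next, we pass from the componentwise order to the infinity norm. For vectors satisfying \(0\le x\le y\), we have \(\|x\|_\infty\le\|y\|_\infty\), so
\[
\|e_k^-\|_\infty\le\|(\mathbf A_-^\star)^k e_0^-\|_\infty\le\|(\mathbf A_-^\star)^k\|_\infty\,\|e_0^-\|_\infty .
\]

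Now we apply \cref{lem:product-growth-bound} with \(\calH=\{\mathbf A_-^\star\}\). By \cref{def:rho}, \(\rho(\{\mathbf A_-^\star\})=\rho_-^\star\), and \(\beta_-=\rho_-^\star+\varepsilon>\rho_-^\star\). Hence \(K_-=K_{\beta_-}(\{\mathbf A_-^\star\})<\infty\) and \(\|(\mathbf A_-^\star)^k\|_\infty\le K_-\beta_-^k\) for all \(k\ge0\). Substituting this into the previous display gives the claimed bound.

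For the rate statement, note that \(\varepsilon>0\) was arbitrary subject to \(\rho_-^\star+\varepsilon<1\), and such \(\varepsilon\) exist by \cref{lem:rho-plus-rho-minus-strictly-stable}. Therefore the exponential envelope can be taken arbitrarily close to \(\rho_-^\star\), which is the sense in which the negative part is certified at that rate.

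The only delicate point is the first step: the comparison must be genuinely componentwise with nonnegative quantities on both sides, so that the norm monotonicity applies. This is exactly what \cref{lem:deterministic-pathwise-sign-comparison} provides. That lemma in turn relies on \(\mathbf A_-^\star\ge0\), which holds because \(\alpha d(s,a)\le1\) by \cref{assump:basic}. Everything after that step is routine.
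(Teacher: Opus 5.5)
Your proposal is correct and follows the paper's proof. Both use the same three steps: the pathwise comparison $0\le e_k^-\le(\mathbf A_-^\star)^k e_0^-$ from the deterministic sign-comparison lemma, monotonicity of the infinity norm on nonnegative vectors, and the product-growth bound for the singleton family $\{\mathbf A_-^\star\}$.
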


\begin{proof}
By \cref{lem:deterministic-pathwise-sign-comparison},
\(e_k^-\le z_k^-\), where
\[
z_k^-=(\mathbf A_-^\star)^k e_0^-.
\]
Since \(\beta_->\rho(\mathbf A_-^\star)\), \cref{lem:product-growth-bound}
applied to the singleton family \(\{\mathbf A_-^\star\}\) gives
\[
\|(\mathbf A_-^\star)^k\|_\infty\le K_-\beta_-^k.
\]
The claimed estimate follows from nonnegativity and the infinity norm.
\end{proof}

\begin{theorem}\label{thm:deterministic-positive-rate}
Assume \cref{assump:basic}, and fix \(\varepsilon>0\) such that
\[
\rho_++\varepsilon<1.
\]
Set
\[
\beta_+:=\rho_++\varepsilon,
  \qquad
K_+:=K_{\beta_+}(\calM_\alpha).
\]
Then every deterministic Q-learning trajectory satisfies
\[
\|e_k^+\|_\infty
  \le
  K_+\beta_+^k\|e_0^+\|_\infty,
  \qquad k\ge0.
\]
Consequently, the deterministic positive part is certified at rate \(\rho_+\).
\end{theorem}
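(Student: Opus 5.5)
The argument mirrors the proof of \cref{thm:deterministic-negative-rate}, with the fixed mode replaced by a switching product. I will start from the pathwise positive-part comparison in \cref{lem:deterministic-pathwise-sign-comparison}, which gives
\[
e_k^+\le z_k^+,\qquad z_k^+=\mathbf A_{\pi_{k-1}^+}\cdots\mathbf A_{\pi_0^+}e_0^+ .
\]
Here \(\pi_t^+(s)\in\Argmax_{a\in\calA}e_t(s,a)\) is the state-wise maximizer of the current error. Each \(\pi_t^+\) is a deterministic policy in \(\Theta\), so every factor \(\mathbf A_{\pi_t^+}\) belongs to \(\calM_\alpha\). The selection depends on the trajectory, but only through the realized sequence of policies. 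Since the comparison holds along every trajectory, nothing about how the switching signal is generated is needed beyond membership of each factor in the family.

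Next I apply \cref{lem:product-growth-bound} to the finite family \(\calM_\alpha\) with \(\beta_+=\rho_++\varepsilon>\rho(\calM_\alpha)\). This is legitimate because \(\calM_\alpha\) is finite, as \(\Theta\) is finite. The lemma bounds any product of length \(k\) uniformly:
\[
\bigl\|\mathbf A_{\pi_{k-1}^+}\cdots\mathbf A_{\pi_0^+}\bigr\|_\infty\le K_+\beta_+^k .
\]
The uniformity over all products is exactly what absorbs the trajectory dependence of the switching sequence.

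Finally, I convert the componentwise inequality into a norm bound. The vector \(e_0^+\) is nonnegative and all \(\mathbf A_\pi\) are nonnegative (shown in the proof of \cref{lem:rho-plus-rho-minus-strictly-stable}), so \(z_k^+\ge0\). Combined with \(0\le e_k^+\le z_k^+\), monotonicity of the infinity norm on the nonnegative orthant gives
\[
\|e_k^+\|_\infty\le\|z_k^+\|_\infty\le K_+\beta_+^k\|e_0^+\|_\infty .
\]
The admissibility of \(\varepsilon\) is guaranteed by \cref{lem:rho-plus-rho-minus-strictly-stable}. The rate statement then follows because \(\varepsilon>0\) can be taken arbitrarily small.

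The only delicate point is confirming that the uniform product bound applies to a switching sequence chosen adaptively from the trajectory. Because \cref{lem:product-growth-bound} takes the maximum over all products of the given length, this is immediate, and I expect no real obstacle.
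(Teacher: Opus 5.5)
Your proposal is correct and follows the same route as the paper's proof. You combine the pathwise comparison $e_k^+\le z_k^+$ from \cref{lem:deterministic-pathwise-sign-comparison}, the uniform product bound of \cref{lem:product-growth-bound} over the finite family $\calM_\alpha$ (which absorbs the trajectory-dependent switching), and nonnegativity to pass from the componentwise inequality to the infinity norm.
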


\begin{proof}
By \cref{lem:deterministic-pathwise-sign-comparison},
\(e_k^+\le z_k^+\), where
\[
z_k^+
  =
  \mathbf A_{\pi_{k-1}^+}\cdots\mathbf A_{\pi_0^+}e_0^+
\]
for \(k\ge1\), and the product is the identity for \(k=0\). Since each
\(\mathbf A_{\pi_i^+}\) belongs to \(\calM_\alpha\),
\cref{lem:product-growth-bound} gives
\[
\|\mathbf A_{\pi_{k-1}^+}\cdots\mathbf A_{\pi_0^+}\|_\infty
  \le
  K_+\beta_+^k.
\]
The desired bound follows.
\end{proof}

\begin{corollary}\label{cor:deterministic-orthant-distance-rate}
Under the assumptions of \cref{thm:deterministic-negative-rate},
\[
\dist_\infty(e_k,\R_+^{|\calS|\,|\calA|})
  \le
  K_-\beta_-^k\|e_0^-\|_\infty,
  \qquad k\ge0.
\]
\end{corollary}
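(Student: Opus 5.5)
The plan is to identify the infinity-norm distance to the nonnegative orthant with the infinity norm of the negative part, and then apply \cref{thm:deterministic-negative-rate}.

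First, I will show that for every vector \(x\in\R^{|\calS|\,|\calA|}\),
\[
\dist_\infty(x,\R_+^{|\calS|\,|\calA|})=\|x^-\|_\infty .
\]
For the upper bound, take \(y=x^+\), which lies in the orthant. Then \(x-y=-x^-\), so the distance is at most \(\|x^-\|_\infty\).

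For the lower bound, let \(y\ge0\) be arbitrary. At any coordinate \(i\) with \(x_i<0\), we have \(|x_i-y_i|=y_i-x_i\ge -x_i=x_i^-\). At coordinates with \(x_i\ge0\), we have \(x_i^-=0\), so the bound \(|x_i-y_i|\ge x_i^-\) holds trivially. Hence \(\|x-y\|_\infty\ge\|x^-\|_\infty\) for every \(y\ge0\), and taking the infimum over \(y\) gives equality. The infimum is in fact attained at \(y=x^+\).

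Next, I apply this identity with \(x=e_k\). \Cref{thm:deterministic-negative-rate} gives
\[
\|e_k^-\|_\infty\le K_-\beta_-^k\|e_0^-\|_\infty
\]
under the same hypotheses and constants, which is exactly the claimed bound.

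There is no substantive obstacle. The only step needing care is the coordinatewise lower bound in the distance identity, where the cases \(x_i<0\) and \(x_i\ge0\) must both be checked, and this is elementary.
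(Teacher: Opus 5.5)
Your proposal is correct and follows the paper's proof: the paper also applies the identity \(\dist_\infty(x,\R_+^{|\calS|\,|\calA|})=\|x^-\|_\infty\) with \(x=e_k\) and then invokes \cref{thm:deterministic-negative-rate}. The only difference is that you verify this identity (the infimum is attained at \(y=x^+\), and the coordinatewise lower bound gives the reverse inequality), whereas the paper states it without proof.
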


\begin{proof}
For every vector \(x\),
\(\dist_\infty(x,\R_+^{|\calS|\,|\calA|})=\|x^-\|_\infty\). Applying this
identity with \(x=e_k\) and using \cref{thm:deterministic-negative-rate}
proves the claim.
\end{proof}

The deterministic estimates have the same rate separation as the spectral-radius
certificates:
\[
\rho_-^\star\le \rho_-\le \rho_+.
\]
Thus the negative component is controlled by the optimized fixed optimal-policy
mode, whereas the positive component must allow the full switching family.

\setcounter{theorem}{20}
\subsection{Deterministic comparison lemmas}\label{app:deterministic-comparison-proofs}

For any \(Q\) and \(e=Q-Q^*\), the Bellman-max term can be expanded state by
state as
\[
\begin{aligned}
  V_Q(s)-V^*(s)
  &= \max_{a\in\calA}\{Q^*(s,a)+e(s,a)\}-V^*(s) \\
  &= \max_{a\in\calA}\{e(s,a)-A^*(s,a)\}.
\end{aligned}
\]
If \(\pi\in\Theta^*\), then \(A^*(s,\pi(s))=0\), and therefore
\[
V_Q-V^*-\boldsymbol{\Pi}^{\pi}e\ge0.
\]
For the upper comparison, if
\(\pi^+(s)\in\Argmax_{a\in\calA}e(s,a)\), then \(A^*(s,a)\ge0\) gives
\[
\boldsymbol{\Pi}^{\pi^+}e-(V_Q-V^*)\ge0.
\]
Substituting these two decompositions into the deterministic error recursion
produces the lower and upper residual identities below.

The first deterministic comparison lemma states the lower residual identity and its sign.

\begin{lemma}\label{lem:deterministic-lower-error-identity}
Under \cref{assump:basic}, the deterministic error recursion satisfies
\[
\begin{gathered}
e_{k+1}
  =
  \mathbf A_-^\star e_k
  +\alpha\gamma DP
  \bigl(V_{Q_k}-V^*-\boldsymbol{\Pi}^{\pi_-^\star}e_k\bigr),\\
k\in\{0,1,2,\ldots\},
\end{gathered}
\]
and
\[
V_{Q_k}-V^*-\boldsymbol{\Pi}^{\pi_-^\star}e_k\ge0.
\]
\end{lemma}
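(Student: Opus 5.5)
The identity is pure algebra starting from the deterministic error recursion \cref{eq:deterministic-error-recursion}, and the sign claim follows from the state-wise Bellman-max expansion recorded just above. We treat the identity first and the sign second.

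\emph{Identity.} Start from
\[
e_{k+1}=e_k+\alpha D\{\gamma P(V_{Q_k}-V^*)-e_k\}
=(I-\alpha D)e_k+\alpha\gamma DP(V_{Q_k}-V^*).
\]
Add and subtract \(\alpha\gamma DP\boldsymbol{\Pi}^{\pi_-^\star}e_k\). This gives
\[
e_{k+1}=(I-\alpha D+\alpha\gamma DP\boldsymbol{\Pi}^{\pi_-^\star})e_k
+\alpha\gamma DP\bigl(V_{Q_k}-V^*-\boldsymbol{\Pi}^{\pi_-^\star}e_k\bigr).
\]
The first matrix is exactly \(\mathbf A_{\pi_-^\star}=\mathbf A_-^\star\) by definition, so the identity holds for every \(k\ge0\). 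No property of \(\pi_-^\star\) beyond its being a deterministic policy is used in this step.

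\emph{Sign.} Fix \(k\) and a state \(s\). Apply the expansion to \(Q=Q_k\) and \(e=e_k\):
\[
V_{Q_k}(s)-V^*(s)=\max_{a\in\calA}\{e_k(s,a)-A^*(s,a)\}.
\]
Since \(\pi_-^\star\in\Theta^*\), we have \(\pi_-^\star(s)\in\Phi^*(s)\), so \(A^*(s,\pi_-^\star(s))=0\). Bounding the maximum below by its value at \(a=\pi_-^\star(s)\) gives
\[
V_{Q_k}(s)-V^*(s)\ge e_k(s,\pi_-^\star(s))=(\boldsymbol{\Pi}^{\pi_-^\star}e_k)(s).
\]
Here the last equality uses the one-hot row structure of \(\boldsymbol{\Pi}^{\pi}\), \((\boldsymbol{\Pi}^\pi x)(s)=x(s,\pi(s))\), under the action-block ordering. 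Since \(s\) was arbitrary, the componentwise inequality \(V_{Q_k}-V^*-\boldsymbol{\Pi}^{\pi_-^\star}e_k\ge0\) follows.

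There is no real obstacle. The only point needing care is checking that \(\boldsymbol{\Pi}^{\pi}\) acting on \(e_k\) picks out the coordinate \((s,\pi(s))\) under the Kronecker ordering \(e_a\otimes e_s\). This is immediate from the row \(e_{\pi(s)}^\top\otimes e_s^\top\).
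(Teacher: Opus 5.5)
Your proposal is correct and follows essentially the same route as the paper: you bound the state-wise maximum from below by its value at $a=\pi_-^\star(s)$, where the advantage vanishes, to get the sign, and you obtain the identity by splitting $V_{Q_k}-V^*$ as $\boldsymbol{\Pi}^{\pi_-^\star}e_k$ plus the residual inside the deterministic error recursion. The only difference is cosmetic: you phrase the sign step through $A^*(s,\pi_-^\star(s))=0$, while the paper uses $Q^*(s,\pi_-^\star(s))=V^*(s)$ directly.
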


\begin{proof}[Proof of \cref{lem:deterministic-lower-error-identity}]
For each state \(s\), because \(\pi_-^\star\in\Theta^*\), we have
\(Q^*(s,\pi_-^\star(s))=V^*(s)\). Hence
\[
\begin{aligned}
& V_{Q_k}(s)-V^*(s)-e_k(s,\pi_-^\star(s)) \\
&\quad=
  \max_{a\in\calA}\{Q^*(s,a)+e_k(s,a)\}\\
&\qquad -V^*(s)-e_k(s,\pi_-^\star(s)) \\
&\quad\ge
  Q^*(s,\pi_-^\star(s))+e_k(s,\pi_-^\star(s))\\
&\qquad -V^*(s)-e_k(s,\pi_-^\star(s)) \\
&\quad=0.
\end{aligned}
\]
Using
\[
\begin{aligned}
V_{Q_k}-V^*
&=
  \boldsymbol{\Pi}^{\pi_-^\star}e_k\\
&\quad+
  \bigl(V_{Q_k}-V^*-\boldsymbol{\Pi}^{\pi_-^\star}e_k\bigr).
\end{aligned}
\]
in \cref{eq:deterministic-error-recursion}, we obtain
\[
\begin{aligned}
  e_{k+1}
  &=e_k+
  \alpha D\{\gamma P\boldsymbol{\Pi}^{\pi_-^\star}e_k-e_k\} \\
  &\quad+
  \alpha\gamma DP
  \bigl(V_{Q_k}-V^*-\boldsymbol{\Pi}^{\pi_-^\star}e_k\bigr) \\
  &=
  \bigl(I-\alpha D+\alpha\gamma DP\boldsymbol{\Pi}^{\pi_-^\star}\bigr)e_k\\
  &\quad+
  \alpha\gamma DP
  \bigl(V_{Q_k}-V^*-\boldsymbol{\Pi}^{\pi_-^\star}e_k\bigr) \\
  &=
  \mathbf A_-^\star e_k+
  \alpha\gamma DP
  \bigl(V_{Q_k}-V^*-\boldsymbol{\Pi}^{\pi_-^\star}e_k\bigr).
\end{aligned}
\]
This proves the lower residual identity.
\end{proof}

The next lemma states the corresponding upper residual identity.

\begin{lemma}\label{lem:deterministic-upper-error-identity}
Under \cref{assump:basic}, the deterministic error recursion satisfies
\[
\begin{gathered}
e_{k+1}
  =
  \mathbf A_{\pi_k^+}e_k
  -\alpha\gamma DP
  \bigl(\boldsymbol{\Pi}^{\pi_k^+}e_k-(V_{Q_k}-V^*)\bigr),\\
k\in\{0,1,2,\ldots\},
\end{gathered}
\]
where
\[
\pi_k^+(s)\in\Argmax_{a\in\calA}e_k(s,a),
  \qquad s\in\calS,
\]
and
\[
\boldsymbol{\Pi}^{\pi_k^+}e_k-(V_{Q_k}-V^*)\ge0.
\]
\end{lemma}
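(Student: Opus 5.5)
The proof is a direct algebraic rearrangement of the deterministic error recursion \cref{eq:deterministic-error-recursion}, mirroring the proof of \cref{lem:deterministic-lower-error-identity}. The only changes are that the fixed optimal policy is replaced by the state-wise maximizer \(\pi_k^+\) of the current error, and that the residual now appears with a minus sign.

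First I establish the sign of the residual, state by state. Using the Bellman-max expansion from the beginning of this subsection,
\[
V_{Q_k}(s)-V^*(s)=\max_{a\in\calA}\{e_k(s,a)-A^*(s,a)\}.
\]
Since \(A^*(s,a)\ge0\) for every action, each term in the maximum is at most \(e_k(s,a)\le\max_{a'\in\calA}e_k(s,a')\). By the choice \(\pi_k^+(s)\in\Argmax_{a\in\calA}e_k(s,a)\), the latter equals \(e_k(s,\pi_k^+(s))=(\boldsymbol{\Pi}^{\pi_k^+}e_k)(s)\). Hence \(V_{Q_k}(s)-V^*(s)\le(\boldsymbol{\Pi}^{\pi_k^+}e_k)(s)\) for every \(s\). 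This gives the componentwise inequality
\[
\boldsymbol{\Pi}^{\pi_k^+}e_k-(V_{Q_k}-V^*)\ge0.
\]

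Next I write
\[
V_{Q_k}-V^*=\boldsymbol{\Pi}^{\pi_k^+}e_k-\bigl(\boldsymbol{\Pi}^{\pi_k^+}e_k-(V_{Q_k}-V^*)\bigr)
\]
and substitute it into \cref{eq:deterministic-error-recursion}. Collecting the terms linear in \(e_k\) gives
\[
e_k+\alpha D\{\gamma P\boldsymbol{\Pi}^{\pi_k^+}e_k-e_k\}
=\bigl(I-\alpha D+\alpha\gamma DP\boldsymbol{\Pi}^{\pi_k^+}\bigr)e_k
=\mathbf A_{\pi_k^+}e_k.
\]
The remaining term is \(-\alpha\gamma DP\bigl(\boldsymbol{\Pi}^{\pi_k^+}e_k-(V_{Q_k}-V^*)\bigr)\), which yields the stated identity.

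There is no genuine obstacle. The only point needing care is that \(\pi_k^+\) is a well-defined deterministic policy in \(\Theta\), so that \(\mathbf A_{\pi_k^+}\in\calM_\alpha\). This holds because each \(\Argmax\) set is nonempty over the finite action set, and any fixed tie-breaking rule selects a single action in each state.
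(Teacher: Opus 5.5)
Your proposal is correct and follows essentially the same route as the paper's proof. You derive the residual sign from the Bellman-max expansion using \(A^*(s,a)\ge0\) and the choice of \(\pi_k^+\). You then substitute \(V_{Q_k}-V^*=\boldsymbol{\Pi}^{\pi_k^+}e_k-\bigl(\boldsymbol{\Pi}^{\pi_k^+}e_k-(V_{Q_k}-V^*)\bigr)\) into the deterministic error recursion and collect the terms into \(\mathbf A_{\pi_k^+}e_k\).
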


\begin{proof}[Proof of \cref{lem:deterministic-upper-error-identity}]
Since \(A^*(s,a)=V^*(s)-Q^*(s,a)\ge0\), we have
\[
\begin{aligned}
V_{Q_k}(s)-V^*(s)
&=
  \max_{a\in\calA}\{e_k(s,a)-A^*(s,a)\}\\
&\le
  \max_{a\in\calA}e_k(s,a)\\
&=
  e_k(s,\pi_k^+(s)).
\end{aligned}
\]
Therefore, \(\boldsymbol{\Pi}^{\pi_k^+}e_k-(V_{Q_k}-V^*)\ge0\). Using
\[
\begin{aligned}
V_{Q_k}-V^*
&=
  \boldsymbol{\Pi}^{\pi_k^+}e_k\\
&\quad-
  \bigl(\boldsymbol{\Pi}^{\pi_k^+}e_k-(V_{Q_k}-V^*)\bigr).
\end{aligned}
\]
in \cref{eq:deterministic-error-recursion}, we obtain
\[
\begin{aligned}
  e_{k+1}
  &=e_k+
  \alpha D\{\gamma P\boldsymbol{\Pi}^{\pi_k^+}e_k-e_k\} \\
  &\quad-
  \alpha\gamma DP
  \bigl(\boldsymbol{\Pi}^{\pi_k^+}e_k-(V_{Q_k}-V^*)\bigr) \\
  &=
  \bigl(I-\alpha D+\alpha\gamma DP\boldsymbol{\Pi}^{\pi_k^+}\bigr)e_k\\
  &\quad-
  \alpha\gamma DP
  \bigl(\boldsymbol{\Pi}^{\pi_k^+}e_k-(V_{Q_k}-V^*)\bigr) \\
  &=
  \mathbf A_{\pi_k^+}e_k-
  \alpha\gamma DP
  \bigl(\boldsymbol{\Pi}^{\pi_k^+}e_k-(V_{Q_k}-V^*)\bigr).
\end{aligned}
\]
This proves the upper residual identity.
\end{proof}

The following lemma connects these order comparisons to the componentwise sign parts.

\begin{lemma}\label{lem:deterministic-sign-controlled-by-comparisons}
Under \cref{assump:basic}, with \(\ell_k\) and \(u_k\) defined in
\cref{eq:deterministic-lower-comparison-system,eq:deterministic-upper-comparison-system},
\[
e_k^-\le \ell_k^-,
  \qquad
  e_k^+\le u_k^+,
  \qquad k\in\{0,1,2,\ldots\}.
\]
\end{lemma}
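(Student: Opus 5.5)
The plan is to combine the order sandwich of \cref{lem:deterministic-lower-upper-order} with the elementary monotonicity properties of the scalar maps \(x\mapsto x^+\) and \(x\mapsto x^-\), applied componentwise. No new dynamics need to be analyzed: everything follows pointwise from \(\ell_k\le e_k\le u_k\), which \cref{lem:deterministic-lower-upper-order} already provides for every \(k\ge0\) under \cref{assump:basic}.

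The key observation is that \(x\mapsto x^+=\max\{x,0\}\) is nondecreasing on \(\R\), while \(x\mapsto x^-=\max\{-x,0\}\) is nonincreasing on \(\R\). We fix \(k\) and a coordinate \((s,a)\).

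For the positive part, \(e_k(s,a)\le u_k(s,a)\) gives \(\max\{e_k(s,a),0\}\le\max\{u_k(s,a),0\}\). Hence \(e_k^+(s,a)\le u_k^+(s,a)\).

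For the negative part, \(\ell_k(s,a)\le e_k(s,a)\) gives \(-e_k(s,a)\le-\ell_k(s,a)\). Taking the maximum with \(0\) yields \(e_k^-(s,a)\le\ell_k^-(s,a)\).

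Since these hold for every coordinate and every \(k\), the vector inequalities \(e_k^-\le\ell_k^-\) and \(e_k^+\le u_k^+\) follow in the componentwise sense fixed in \cref{subsec:notation}.

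I do not expect a genuine obstacle here. The only substantive input is the sandwich \(\ell_k\le e_k\le u_k\), and that rests on the residual-sign identities of \cref{lem:deterministic-lower-error-identity,lem:deterministic-upper-error-identity} together with the nonnegativity of every \(\mathbf A_\pi\). The one point to state explicitly is that the inequalities are componentwise, so that the scalar monotonicity argument transfers coordinate by coordinate without any norm arguments.
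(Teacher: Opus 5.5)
Your proposal is correct and follows the paper's proof essentially verbatim. Both take the sandwich \(\ell_k\le e_k\le u_k\) from \cref{lem:deterministic-lower-upper-order} and apply coordinatewise monotonicity of \(x\mapsto x^+\), using \(e_k^-=(-e_k)^+\le(-\ell_k)^+=\ell_k^-\) for the negative part.
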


\begin{proof}[Proof of \cref{lem:deterministic-sign-controlled-by-comparisons}]
Fix a coordinate \(i\) and a time \(k\). From
\cref{lem:deterministic-lower-upper-order},
\[
\ell_{k,i}\le e_{k,i}\le u_{k,i}.
\]
The positive-part map \(x\mapsto x^+=\max\{x,0\}\) is monotone increasing on
\(\R\). Therefore, the upper comparison inequality gives
\[
e_{k,i}^+=(e_{k,i})^+
  \le
  (u_{k,i})^+=u_{k,i}^+.
\]
For the negative part, the lower comparison inequality \(\ell_{k,i}\le e_{k,i}\)
implies
\[
-e_{k,i}\le -\ell_{k,i}.
\]
Applying the same monotonicity to the positive-part map gives
\[
e_{k,i}^-=(-e_{k,i})^+
  \le
  (-\ell_{k,i})^+=\ell_{k,i}^-.
\]
Since the coordinate \(i\) was arbitrary, these scalar inequalities hold
componentwise, and hence
\[
e_k^-\le \ell_k^-,
  \qquad
  e_k^+\le u_k^+,
  \qquad k\in\{0,1,2,\ldots\}.
\]
\end{proof}

The next lemma gives the one-step sign domination that drives the sign-separated systems.

\begin{lemma}\label{lem:deterministic-one-step-domination}
Under \cref{assump:basic}, the deterministic negative part satisfies
\[
e_{k+1}^-
  \le
  \mathbf A_-^\star e_k^-,
  \qquad k\in\{0,1,2,\ldots\},
\]
and the deterministic positive part satisfies
\[
e_{k+1}^+
  \le
  \mathbf A_{\pi_k^+}e_k^+,
  \qquad k\in\{0,1,2,\ldots\}.
\]
\end{lemma}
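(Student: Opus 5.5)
The plan is to read each inequality off the two residual identities, \cref{lem:deterministic-lower-error-identity} and \cref{lem:deterministic-upper-error-identity}, after splitting \(e_k=e_k^+-e_k^-\). The only structural facts used are that the matrices \(\mathbf A_\pi\) are entrywise nonnegative and that the residuals have definite signs. Nonnegativity holds because \(\alpha\in(0,1)\) and \(d(s,a)\le1\), so the diagonal entries \(1-\alpha d(s,a)+\alpha\gamma d(s,a)(P\boldsymbol{\Pi}^\pi)_{ii}\) are nonnegative, and the off-diagonal entries \(\alpha\gamma DP\boldsymbol{\Pi}^\pi\) are nonnegative.

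For the negative part, I would start from \cref{lem:deterministic-lower-error-identity} and write
\[
e_{k+1}=a-b,\qquad
a:=\mathbf A_-^\star e_k^+ +\alpha\gamma DP\bigl(V_{Q_k}-V^*-\boldsymbol{\Pi}^{\pi_-^\star}e_k\bigr),\qquad
b:=\mathbf A_-^\star e_k^-.
\]
Here \(a\ge0\) because \(\mathbf A_-^\star\ge0\), \(e_k^+\ge0\), \(DP\ge0\), and the residual is nonnegative by that lemma. Also \(b\ge0\). Coordinatewise, the scalar fact that \(a_i\ge0\) and \(b_i\ge0\) imply \((a_i-b_i)^-=\max\{b_i-a_i,0\}\le b_i\) then gives \(e_{k+1}^-\le\mathbf A_-^\star e_k^-\).

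For the positive part, I would argue symmetrically from \cref{lem:deterministic-upper-error-identity} and write
\[
e_{k+1}=a'-b',\qquad
a':=\mathbf A_{\pi_k^+}e_k^+,\qquad
b':=\mathbf A_{\pi_k^+}e_k^- +\alpha\gamma DP\bigl(\boldsymbol{\Pi}^{\pi_k^+}e_k-(V_{Q_k}-V^*)\bigr).
\]
Both \(a'\) and \(b'\) are nonnegative, the residual again having the sign established in that lemma. The scalar fact \((a'_i-b'_i)^+\le a'_i\) yields \(e_{k+1}^+\le\mathbf A_{\pi_k^+}e_k^+\).

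No step is a genuine obstacle. The only point needing care is the nonnegativity of \(\mathbf A_\pi\), specifically its diagonal entries, which relies on \(\alpha d(s,a)<1\) from \cref{assump:basic}.
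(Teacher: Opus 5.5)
Your proposal is correct and follows the paper's proof essentially line for line: you split $e_{k+1}=a-b$ using the two residual identities, absorb the nonnegative residual into $a$ on the negative side and into $b$ on the positive side, and finish with the scalar bounds $(a_i-b_i)^-\le b_i$ and $(a_i-b_i)^+\le a_i$. Your explicit check that $\mathbf A_\pi\ge0$, using $\alpha d(s,a)<1$, is a small addition that the paper takes for granted.
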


\begin{proof}[Proof of \cref{lem:deterministic-one-step-domination}]
From \cref{lem:deterministic-lower-error-identity} and
\(e_k=e_k^+-e_k^-\),
\[
\begin{aligned}
e_{k+1}
&=
  \mathbf A_-^\star e_k^+
  -\mathbf A_-^\star e_k^-+r_k^-,\\
r_k^-&:=\alpha\gamma DP
  \bigl(V_{Q_k}-V^*-\boldsymbol{\Pi}^{\pi_-^\star}e_k\bigr)\ge0.
\end{aligned}
\]
Because \(\mathbf A_-^\star\ge0\) and \(e_k^+,e_k^-\ge0\), the vectors
\[
a_k^-:=\mathbf A_-^\star e_k^+ + r_k^-,
  \qquad
  b_k^-:=\mathbf A_-^\star e_k^-
\]
are nonnegative and satisfy \(e_{k+1}=a_k^- - b_k^-\). Hence, component by
component,
\[
(e_{k+1,i})^-
  =
  (b_{k,i}^- - a_{k,i}^-)^+
  \le
  b_{k,i}^-.
\]
Thus,
\[
e_{k+1}^-
  \le
  b_k^-
  =
  \mathbf A_-^\star e_k^-.
\]
Similarly, from \cref{lem:deterministic-upper-error-identity},
\[
\begin{aligned}
e_{k+1}
&=
  \mathbf A_{\pi_k^+}e_k^+
  -\mathbf A_{\pi_k^+}e_k^--r_k^+,\\
r_k^+&:=\alpha\gamma DP
  \bigl(\boldsymbol{\Pi}^{\pi_k^+}e_k-(V_{Q_k}-V^*)\bigr)\ge0.
\end{aligned}
\]
Set
\[
a_k^+:=\mathbf A_{\pi_k^+}e_k^+,
  \qquad
  b_k^+:=\mathbf A_{\pi_k^+}e_k^-+r_k^+.
\]
Then \(a_k^+,b_k^+\ge0\) and \(e_{k+1}=a_k^+ - b_k^+\). Hence
\[
(e_{k+1,i})^+
  =
  (a_{k,i}^+ - b_{k,i}^+)^+
  \le
  a_{k,i}^+,
\]
which yields
\[
e_{k+1}^+
  \le
  a_k^+
  =
  \mathbf A_{\pi_k^+}e_k^+.
\]
\end{proof}

\setcounter{theorem}{17}
\subsection{Additional example}\label{app:examples}

The following example clarifies that the certificate asymmetry need not force different realized decay rates on every trajectory.

\begin{example}\label{ex:equal-realized-positive-negative-errors}
The comparison rates above are certificate rates, and they need not imply that
one sign component has a strictly faster realized decay on every deterministic
trajectory. The following two-state, two-action example gives a simple
trajectory on which the positive and negative errors have exactly the same
magnitude and the same decay factor.

Let $\calS=\{1,2\}$ and $\calA=\{1,2\}$. For every state and action, let the transition be a self-loop,
\[
P(s\mid s,i)=1,
  \qquad
  s\in\{1,2\},\ i\in\{1,2\},
\]
and let the rewards be
\[
R(s,1)=0,
  \qquad
  R(s,2)=-1,
  \qquad
  s\in\{1,2\}.
\]
Then
\[
V^*(s)=0,
  \qquad
  Q^*(s,1)=0,
  \qquad
  Q^*(s,2)=-1,
\]
Thus, action \(1\) is the unique optimal action in each state and the action gap
of action \(2\) is equal to one. Consider the deterministic recursion
\cref{eq:deterministic-error-recursion} with uniform sampling
\[
d(s,i)=\frac{1}{4},
  \qquad
  s\in\{1,2\},\ i\in\{1,2\},
\]
step size \(\alpha\in(0,1)\), and discount factor \(\gamma=0.9\). With the ordering
\[
(1,1),(2,1),(1,2),(2,2),
\]
choose
\[
e_0=(c,-c,c,-c),
  \qquad c>0.
\]
The deterministic error remains on the one-dimensional symmetric
trajectory
\[
e_k=(x_k,-x_k,x_k,-x_k).
\]
To verify this, suppose the identity holds at time \(k\). In state \(1\),
\[
Q_k(1,1)=x_k,
  \qquad
  Q_k(1,2)=-1+x_k,
\]
and hence \(V_{Q_k}(1)-V^*(1)=x_k\). In state \(2\),
\[
Q_k(2,1)=-x_k,
  \qquad
  Q_k(2,2)=-1-x_k,
\]
and hence \(V_{Q_k}(2)-V^*(2)=-x_k\). Therefore, for each
\(i\in\{1,2\}\),
\[
\begin{aligned}
  e_{k+1}(1,i)
  &=
  x_k+\frac{\alpha}{4}(\gamma x_k-x_k)\\
  &=
  \left(1-\frac{\alpha(1-\gamma)}{4}\right)x_k, \\
  e_{k+1}(2,i)
  &=
  -x_k+\frac{\alpha}{4}(-\gamma x_k+x_k)\\
  &=
  -\left(1-\frac{\alpha(1-\gamma)}{4}\right)x_k,
\end{aligned}
\]
for \(k\in\{0,1,2,\ldots\}\). Since \(\gamma=0.9\),
\[
1-\frac{\alpha(1-\gamma)}{4}=1-0.025\alpha.
\]
Therefore,
\[
\begin{gathered}
x_k=(1-0.025\alpha)^k c,\\
e_k=(1-0.025\alpha)^k(c,-c,c,-c).
\end{gathered}
\]
Consequently,
\[
\begin{gathered}
e_k^+=(1-0.025\alpha)^k(c,0,c,0),\\
e_k^-=(1-0.025\alpha)^k(0,c,0,c),
\end{gathered}
\]
and hence
\[
\|e_k^+\|_\infty
  =
  \|e_k^-\|_\infty
  =
  (1-0.025\alpha)^k c,
\]
while
\[
\|e_k^+\|_2
  =
  \|e_k^-\|_2
  =
  \sqrt{2}\,(1-0.025\alpha)^k c.
\]
For this trajectory, the Bellman-max residuals vanish. The realized positive
and negative sign components therefore decay with exactly the same factor. This
does not contradict the comparison results above; it shows only that the
certified sign-separated asymmetry is a property of the available comparison
envelopes, not a statement that every deterministic trajectory must exhibit
strictly faster realized decay of \(e_k^-\) than of \(e_k^+\).
\end{example}

\setcounter{theorem}{10}
\section{Stochastic Q-learning}
\label{subsec:stochastic-comparison-systems}

As in the deterministic analysis, the stochastic lower comparison system is an
LTI system, whereas the upper comparison system is a linear switching system. The
lower stochastic residual identity is
\begin{equation}
\label{eq:error-optimized-lower-form}
\begin{gathered}
e_{k+1}
  =
  \mathbf A_-^\star e_k
  +\alpha\gamma DP
  \bigl(V_{Q_k}-V^*-\boldsymbol{\Pi}^{\pi_-^\star}e_k\bigr)
  +\alpha w_k,\\
k\in\{0,1,2,\ldots\},
\end{gathered}
\end{equation}
where the residual satisfies
\(V_{Q_k}-V^*-\boldsymbol{\Pi}^{\pi_-^\star}e_k\ge0\). Accordingly, as in the
deterministic lower comparison, we keep the same stochastic increment
\(\alpha w_k\) and remove this nonnegative residual. This gives the optimized
lower comparison system
\begin{equation}
\label{eq:optimized-lower-comparison-system}
\ell_{k+1}
  =
  \mathbf A_-^\star\ell_k+\alpha w_k,
  \qquad
  \ell_0=e_0,
  \qquad k\in\{0,1,2,\ldots\}.
\end{equation}
The system in \cref{eq:optimized-lower-comparison-system} is a fixed-mode LTI
comparison system driven by the same noise as the original error recursion; the
nonnegative residual removed from \cref{eq:error-optimized-lower-form} is what
makes it a lower comparison.

For the upper comparison, let us define the predictable state-wise maximizer
\[
\pi_k^+(s)\in\Argmax_{a\in\calA}e_k(s,a),
  \qquad s\in\calS.
\]
The upper stochastic residual identity is
\begin{equation}
\label{eq:error-direct-upper-form}
\begin{gathered}
e_{k+1}
  =
  \mathbf A_{\pi_k^+}e_k
  -\alpha\gamma DP
  \bigl(\boldsymbol{\Pi}^{\pi_k^+}e_k-(V_{Q_k}-V^*)\bigr)
  +\alpha w_k,\\
k\in\{0,1,2,\ldots\},
\end{gathered}
\end{equation}
where \(\boldsymbol{\Pi}^{\pi_k^+}e_k-(V_{Q_k}-V^*)\ge0\). Removing the
subtracted nonnegative residual gives the direct upper comparison system
\begin{equation}
\label{eq:direct-upper-comparison-system}
\begin{gathered}
u_{k+1}
  =
  \mathbf A_{\pi_k^+}u_k+\alpha w_k,\\
u_0=e_0,
  \qquad k\in\{0,1,2,\ldots\}.
\end{gathered}
\end{equation}
Unlike the lower comparison, \cref{eq:direct-upper-comparison-system} is a
linear stochastic switching system, because the mode \(\pi_k^+\) can change
with the current error. The switching signal \(\pi_k^+\) is predictable with
respect to the one-step update from time \(k\) to time \(k+1\), because it is
\(\calF_k\)-measurable.
\begin{lemma}\label{lem:stochastic-lower-upper-comparisons}
Under the standing Q-learning assumptions, with \(\ell_k\) and \(u_k\) defined
in \cref{eq:optimized-lower-comparison-system,eq:direct-upper-comparison-system}, we have
\[
\ell_k\le e_k\le u_k,
  \qquad
  k\in\{0,1,2,\ldots\}.
\]
\end{lemma}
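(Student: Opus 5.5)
The plan is to repeat the deterministic argument of \cref{lem:deterministic-lower-upper-order} almost verbatim. The key point is that the noise enters the error recursion and both comparison systems identically, so it cancels from the difference recursions. What remains is driven only by nonnegative Bellman-max residuals. The argument is pathwise: fix any realization of the observations, and establish $\ell_k\le e_k\le u_k$ by induction on $k$ along that realization.

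\textbf{Residual identities.} First confirm \cref{eq:error-optimized-lower-form,eq:error-direct-upper-form}. Start from the stochastic error recursion $e_{k+1}=e_k+\alpha D\{\gamma P(V_{Q_k}-V^*)-e_k\}+\alpha w_k$. Insert the decompositions
\[
V_{Q_k}-V^*=\boldsymbol{\Pi}^{\pi_-^\star}e_k+\bigl(V_{Q_k}-V^*-\boldsymbol{\Pi}^{\pi_-^\star}e_k\bigr)
\]
and
\[
V_{Q_k}-V^*=\boldsymbol{\Pi}^{\pi_k^+}e_k-\bigl(\boldsymbol{\Pi}^{\pi_k^+}e_k-(V_{Q_k}-V^*)\bigr),
\]
exactly as in the proofs of \cref{lem:deterministic-lower-error-identity,lem:deterministic-upper-error-identity}. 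The term $\alpha w_k$ is simply carried along. The sign facts
\[
V_{Q_k}-V^*-\boldsymbol{\Pi}^{\pi_-^\star}e_k\ge0,\qquad \boldsymbol{\Pi}^{\pi_k^+}e_k-(V_{Q_k}-V^*)\ge0
\]
are purely algebraic statements about the vector $Q_k$, via $V_Q(s)-V^*(s)=\max_a\{e(s,a)-A^*(s,a)\}$ with $A^*\ge0$ and $A^*(s,\pi_-^\star(s))=0$. They therefore hold on every sample path.

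\textbf{Difference recursions and induction.} Define $\delta_k:=e_k-\ell_k$ and $q_k:=u_k-e_k$. Subtracting the comparison recursions \cref{eq:optimized-lower-comparison-system,eq:direct-upper-comparison-system} from the identities above cancels $\alpha w_k$ and gives
\[
\delta_{k+1}=\mathbf A_-^\star\delta_k+\alpha\gamma DP\bigl(V_{Q_k}-V^*-\boldsymbol{\Pi}^{\pi_-^\star}e_k\bigr),
\]
\[
q_{k+1}=\mathbf A_{\pi_k^+}q_k+\alpha\gamma DP\bigl(\boldsymbol{\Pi}^{\pi_k^+}e_k-(V_{Q_k}-V^*)\bigr).
\]
Both start from $\delta_0=q_0=0$. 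Every $\mathbf A_\pi$ is entrywise nonnegative, because $\alpha\in(0,1)$ and $d(s,a)\le1$ make $I-\alpha D\ge0$, and $DP\boldsymbol{\Pi}^\pi\ge0$; this was noted in \cref{lem:rho-plus-rho-minus-strictly-stable}. Also $D,P\ge0$. So if $\delta_k\ge0$ then $\delta_{k+1}\ge0$, and likewise for $q_k$, which closes the induction.

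\textbf{Where care is needed.} The only subtle point is that the upper system switches along the error trajectory: $\pi_k^+$ depends on $e_k$, not on $u_k$. This causes no difficulty, since $\pi_k^+$ is a fixed ($\calF_k$-measurable) policy at step $k$ and the induction only uses nonnegativity of whichever $\mathbf A_{\pi_k^+}$ is active. Likewise, the noise requires no moment or martingale argument; it only has to be the same realization $w_k$ in all three recursions, which holds by construction.
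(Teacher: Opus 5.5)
Your proposal is correct and matches the paper's proof: subtracting the comparison recursions from the residual identities cancels $\alpha w_k$, and then $\delta_0=q_0=0$, entrywise nonnegativity of $\mathbf A_-^\star$, $\mathbf A_{\pi_k^+}$, $D$, $P$, and the nonnegative Bellman-max residuals close the pathwise induction. Your check that the residual identities and their signs hold on every sample path is what the paper delegates to \cref{lem:stochastic-lower-upper-identities}.
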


\begin{proof}
Subtracting \cref{eq:optimized-lower-comparison-system} from
\cref{eq:error-optimized-lower-form} yields
\[
e_{k+1}-\ell_{k+1}
  =
  \mathbf A_-^\star(e_k-\ell_k)+\alpha\gamma DP
  \bigl(V_{Q_k}-V^*-\boldsymbol{\Pi}^{\pi_-^\star}e_k\bigr).
\]
The noise cancels. Since \(e_0-\ell_0=0\), all factors are nonnegative, and the
residual is nonnegative, induction gives \(\ell_k\le e_k\).

Subtracting \cref{eq:error-direct-upper-form} from
\cref{eq:direct-upper-comparison-system} gives
\[
\begin{aligned}
u_{k+1}-e_{k+1}
&=
  \mathbf A_{\pi_k^+}(u_k-e_k)\\
&\quad+\alpha\gamma DP
  \bigl(\boldsymbol{\Pi}^{\pi_k^+}e_k-(V_{Q_k}-V^*)\bigr).
\end{aligned}
\]
The noise again cancels. Since \(u_0-e_0=0\), all factors are nonnegative, and
the residual is nonnegative, induction gives \(e_k\le u_k\).
\end{proof}

The order relations imply
\[
e_k^-\le \ell_k^-,
  \qquad
  e_k^+\le u_k^+,
  \qquad k\in\{0,1,2,\ldots\},
\]
and the following inequalities hold:
\[
e_{k+1}^-
  \le
  \mathbf A_-^\star e_k^-
  +
  \alpha w_k^-,
  \qquad k\in\{0,1,2,\ldots\},
\]
\[
e_{k+1}^+
  \le
  \mathbf A_{\pi_k^+}e_k^+
  +
  \alpha w_k^+,
  \qquad k\in\{0,1,2,\ldots\}.
\]
The formal sign-control and one-step domination lemmas are stated and proved in
Appendix~\ref{app:stochastic-comparison-proofs}.

These one-step inequalities define the stochastic sign comparison systems used
in the finite-time bounds. The negative sign comparison keeps only the
fixed-mode LTI propagation of the previous negative part and the negative part
of the noise increment:
\begin{equation}
\label{eq:negative-comparison-system}
\begin{aligned}
z^-_{k+1}
&=
  \mathbf A_-^\star z^-_k
  +
  \alpha w_k^-,\\
z^-_0&=e_0^-,
  \qquad k\in\{0,1,2,\ldots\}.
\end{aligned}
\end{equation}
The positive sign comparison keeps the switching propagation selected by
\(\pi_k^+\) and the positive part of the noise increment:
\begin{equation}
\label{eq:positive-comparison-system}
\begin{aligned}
z^+_{k+1}
&=
  \mathbf A_{\pi_k^+}z^+_k
  +
  \alpha w_k^+,\\
z^+_0&=e_0^+,
  \qquad k\in\{0,1,2,\ldots\}.
\end{aligned}
\end{equation}
The next lemma states that these two auxiliary systems dominate the actual
negative and positive parts pathwise.

\begin{lemma}\label{lem:stochastic-pathwise-sign-comparison}
Under the standing Q-learning assumptions, with \(z_k^-\) and \(z_k^+\)
defined in \cref{eq:negative-comparison-system,eq:positive-comparison-system},
for all \(k\in\{0,1,2,\ldots\}\),
\[
e_k^-\le z^-_k,
  \qquad
  e_k^+\le z^+_k.
\]
\end{lemma}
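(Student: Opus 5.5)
The plan is an induction on \(k\), mirroring \cref{lem:deterministic-pathwise-sign-comparison}. The base case \(k=0\) holds with equality by the initial conditions \(z^-_0=e_0^-\) and \(z^+_0=e_0^+\). For the inductive step, first establish the two stochastic one-step inequalities
\[
e_{k+1}^-\le \mathbf A_-^\star e_k^-+\alpha w_k^-,
\qquad
e_{k+1}^+\le \mathbf A_{\pi_k^+}e_k^++\alpha w_k^+,
\]
and then propagate them with the induction hypothesis.

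For the negative inequality, start from the lower stochastic residual identity \cref{eq:error-optimized-lower-form}. Split \(e_k=e_k^+-e_k^-\) and \(w_k=w_k^+-w_k^-\), which gives
\[
e_{k+1}=a_k-b_k,
\qquad
a_k:=\mathbf A_-^\star e_k^++r_k^-+\alpha w_k^+,
\qquad
b_k:=\mathbf A_-^\star e_k^-+\alpha w_k^-,
\]
where \(r_k^-:=\alpha\gamma DP(V_{Q_k}-V^*-\boldsymbol{\Pi}^{\pi_-^\star}e_k)\ge0\). Both \(a_k\) and \(b_k\) are nonnegative, because \(\mathbf A_-^\star\ge0\) entrywise (as in the proof of \cref{lem:rho-plus-rho-minus-strictly-stable}, which uses \(\alpha d(s,a)<1\)), and because \(r_k^-\ge0\) by \cref{lem:deterministic-lower-error-identity}. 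The scalar fact \((a-b)^-\le b\) for \(a,b\ge0\), applied coordinatewise, then yields \(e_{k+1}^-\le b_k\).

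The positive inequality follows symmetrically from \cref{eq:error-direct-upper-form}. Write
\[
e_{k+1}=a_k'-b_k',
\qquad
a_k':=\mathbf A_{\pi_k^+}e_k^++\alpha w_k^+,
\qquad
b_k':=\mathbf A_{\pi_k^+}e_k^-+r_k^++\alpha w_k^-,
\]
where \(r_k^+:=\alpha\gamma DP(\boldsymbol{\Pi}^{\pi_k^+}e_k-(V_{Q_k}-V^*))\ge0\) by \cref{lem:deterministic-upper-error-identity}. The scalar fact \((a-b)^+\le a\) then gives \(e_{k+1}^+\le a_k'\).

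With these in hand, assume \(e_k^-\le z_k^-\) and \(e_k^+\le z_k^+\). Because \(\mathbf A_-^\star\) and \(\mathbf A_{\pi_k^+}\) are nonnegative matrices, they preserve componentwise order. Hence
\[
e_{k+1}^-\le \mathbf A_-^\star e_k^-+\alpha w_k^-\le \mathbf A_-^\star z_k^-+\alpha w_k^-=z_{k+1}^-.
\]
The same argument gives \(e_{k+1}^+\le z_{k+1}^+\). A key point is that the comparison system uses the same mode \(\pi_k^+\) as the one appearing in the one-step inequality, so no switching mismatch arises.

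The only delicate point is confirming that the residual identities hold pathwise in the stochastic setting with the additive \(\alpha w_k\). They do, because the Bellman-max residual signs are deterministic functions of \(Q_k\), and \(w_k\) enters purely additively. Beyond that, the argument is the deterministic one with the noise split into its sign parts.
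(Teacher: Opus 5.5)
Your proposal is correct and follows the paper's route: induction on $k$, using the one-step inequalities $e_{k+1}^-\le \mathbf A_-^\star e_k^-+\alpha w_k^-$ and $e_{k+1}^+\le \mathbf A_{\pi_k^+}e_k^++\alpha w_k^+$ (obtained from the lower/upper residual identities) together with entrywise nonnegativity of the mode matrices. The only difference is cosmetic. You split $w_k=w_k^+-w_k^-$ before applying $(a-b)^-\le b$, whereas the paper first drops the nonnegative terms and then uses $(a-\xi)^+\le a+\xi^-$.
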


\begin{proof}
At \(k=0\), the inequalities hold by definition. If they hold at time \(k\), then the one-step sign inequalities stated above and the nonnegativity of
\(\mathbf A_-^\star\) and \(\mathbf A_{\pi_k^+}\) give
\[
e_{k+1}^-
  \le
  \mathbf A_-^\star z_k^-+\alpha w_k^-
  =
  z_{k+1}^-,
\]
and
\[
e_{k+1}^+
  \le
  \mathbf A_{\pi_k^+}z_k^++\alpha w_k^+
  =
  z_{k+1}^+.
\]
The result follows by induction.
\end{proof}

\setcounter{theorem}{24}
\subsection{Stochastic comparison lemmas}\label{app:stochastic-comparison-proofs}

The first stochastic comparison lemma adds the Q-learning noise term to the deterministic residual identities.

\begin{lemma}\label{lem:stochastic-lower-upper-identities}
Under the standing Q-learning assumptions, the Q-learning error satisfies
\cref{eq:error-optimized-lower-form,eq:error-direct-upper-form}, with the
residual inequalities stated there.
\end{lemma}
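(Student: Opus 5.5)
The plan is to reduce both stochastic identities to the deterministic residual identities of \cref{lem:deterministic-lower-error-identity,lem:deterministic-upper-error-identity}. Only the additive martingale increment $\alpha w_k$ is new. The starting point is the stochastic error recursion
\[
e_{k+1}=e_k+\alpha D\{\gamma P(V_{Q_k}-V^*)-e_k\}+\alpha w_k,
\]
derived in \cref{subsec:qlearning-error} from the vector form $Q_{k+1}=Q_k+\alpha\{D(F(Q_k)-Q_k)+w_k\}$ together with $F(Q^*)=Q^*$.

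\textbf{Lower identity.} Fix $k$ and use the pathwise decomposition
\[
V_{Q_k}-V^*=\boldsymbol{\Pi}^{\pi_-^\star}e_k+\bigl(V_{Q_k}-V^*-\boldsymbol{\Pi}^{\pi_-^\star}e_k\bigr).
\]
Substituting this into the recursion and collecting the terms linear in $e_k$ gives
\[
\bigl(I-\alpha D+\alpha\gamma DP\boldsymbol{\Pi}^{\pi_-^\star}\bigr)e_k=\mathbf A_-^\star e_k,
\]
which yields \cref{eq:error-optimized-lower-form} with $\alpha w_k$ carried along unchanged.

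\textbf{Upper identity.} Use the analogous decomposition with $\boldsymbol{\Pi}^{\pi_k^+}e_k$ in place of $\boldsymbol{\Pi}^{\pi_-^\star}e_k$. This gives \cref{eq:error-direct-upper-form}.

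\textbf{Residual signs.} Both sign conditions are pointwise, per state and per realization, so they carry over verbatim from the deterministic argument. Since $\pi_-^\star\in\Theta^*$, we have $A^*(s,\pi_-^\star(s))=0$, and therefore
\[
V_{Q_k}(s)-V^*(s)=\max_a\{e_k(s,a)-A^*(s,a)\}\ge e_k(s,\pi_-^\star(s)).
\]
Conversely, $A^*\ge0$ gives
\[
V_{Q_k}(s)-V^*(s)\le\max_a e_k(s,a)=e_k(s,\pi_k^+(s)).
\]
Neither step involves $w_k$. I would also note that $\pi_k^+$ and both residuals are $\calF_k$-measurable because $e_k$ is, with the tie-breaking in $\Argmax$ fixed.

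\textbf{Main obstacle.} The only point needing care is confirming that the stochastic error recursion is exactly the deterministic one plus $\alpha w_k$. Concretely, the sampled-coordinate update must equal $\alpha D(F(Q_k)-Q_k)+\alpha w_k$, which holds by the definition \cref{eq:wk-def} of $w_k$. Then $D(F(Q_k)-Q_k)$ must be rewritten as $D\{\gamma P(V_{Q_k}-V^*)-e_k\}$, which uses $R=Q^*-\gamma PV^*$. Once this is in place, the rest is the deterministic algebra.
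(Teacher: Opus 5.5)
Your proposal is correct and follows essentially the same route as the paper. The paper's proof simply notes that the deterministic residual identities of \cref{lem:deterministic-lower-error-identity,lem:deterministic-upper-error-identity} gain only the additive term $\alpha w_k$, with the pathwise residual signs unchanged; your explicit check of the recursion via $R=Q^*-\gamma PV^*$ just spells out what the paper takes from \cref{subsec:qlearning-error}.
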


\begin{proof}[Proof of \cref{lem:stochastic-lower-upper-identities}]
The deterministic identities in
\cref{lem:deterministic-lower-error-identity,lem:deterministic-upper-error-identity}
gain only the additive term \(\alpha w_k\) in the stochastic recursion. This
proves \cref{eq:error-optimized-lower-form,eq:error-direct-upper-form}.
\end{proof}

The next lemma converts the stochastic order comparisons into sign-part inequalities.

\begin{lemma}\label{lem:stochastic-sign-controlled-by-comparisons}
Under the standing Q-learning assumptions, with \(\ell_k\) and \(u_k\) defined
in \cref{eq:optimized-lower-comparison-system,eq:direct-upper-comparison-system},
\[
e_k^-\le \ell_k^-,
  \qquad
  e_k^+\le u_k^+,
  \qquad k\in\{0,1,2,\ldots\}.
\]
\end{lemma}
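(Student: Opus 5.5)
The claim follows from the order comparison in \cref{lem:stochastic-lower-upper-comparisons} together with the monotonicity of the scalar positive-part map. The argument is identical to the deterministic one in \cref{lem:deterministic-sign-controlled-by-comparisons}. Only the order relation is used, not how it was obtained, so the additive noise \(\alpha w_k\) plays no role at this stage.

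Fix a time \(k\) and a coordinate \(i\) in the state-action ordering. First, invoke \cref{lem:stochastic-lower-upper-comparisons} to obtain, pathwise,
\[
\ell_{k,i}\le e_{k,i}\le u_{k,i}.
\]
This lemma rests on the residual identities \cref{eq:error-optimized-lower-form,eq:error-direct-upper-form}, established in \cref{lem:stochastic-lower-upper-identities}. In both identities the noise cancels when the comparison filter is subtracted, and the difference is propagated by nonnegative matrices with a nonnegative forcing residual.

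Second, use that \(x\mapsto x^+=\max\{x,0\}\) is nondecreasing on \(\R\). Applied to \(e_{k,i}\le u_{k,i}\), this gives \(e_{k,i}^+\le u_{k,i}^+\). Applied to \(-e_{k,i}\le-\ell_{k,i}\), it gives \(e_{k,i}^-=(-e_{k,i})^+\le(-\ell_{k,i})^+=\ell_{k,i}^-\). Since \(i\) and \(k\) were arbitrary and vector sign parts are defined componentwise, both vector inequalities hold for every \(k\ge0\) along every trajectory.

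No step here is a genuine obstacle; the substantive work was already done in \cref{lem:stochastic-lower-upper-comparisons}. The one point to state explicitly is that the inequalities are pathwise, holding for every realization of the noise, rather than in expectation. This is what allows the later finite-time bounds to take expectations of \(\|\ell_k^-\|_\infty\) and \(\|u_k^+\|_\infty\) directly.
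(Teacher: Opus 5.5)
Your proposal is correct and follows the paper's proof: it takes the pathwise order \(\ell_k\le e_k\le u_k\) from \cref{lem:stochastic-lower-upper-comparisons} and applies monotonicity of the componentwise positive-part map to \(e_k\le u_k\) and to \(-e_k\le-\ell_k\). Your remark that these inequalities hold pathwise, not merely in expectation, is accurate, and the later finite-time bounds rely on exactly this.
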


\begin{proof}[Proof of \cref{lem:stochastic-sign-controlled-by-comparisons}]
Since \(\ell_k\le e_k\), we have \(-e_k\le-\ell_k\), and hence
\(e_k^-=(-e_k)^+\le(-\ell_k)^+=\ell_k^-\). Since \(e_k\le u_k\), monotonicity
of the componentwise positive-part map gives \(e_k^+\le u_k^+\).
\end{proof}

The following one-step estimate separates the deterministic propagation from the signed noise increments.

\begin{lemma}\label{lem:sign-separated-one-step}
Under the standing Q-learning assumptions, the negative part satisfies
\[
e_{k+1}^-
  \le
  \mathbf A_-^\star e_k^-
  +
  \alpha w_k^-,
  \qquad k\in\{0,1,2,\ldots\}.
\]
The positive part satisfies
\[
e_{k+1}^+
  \le
  \mathbf A_{\pi_k^+}e_k^+
  +
  \alpha w_k^+,
  \qquad k\in\{0,1,2,\ldots\}.
\]
\end{lemma}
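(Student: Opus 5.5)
The plan is to repeat the proof of \cref{lem:deterministic-one-step-domination}, carrying the extra additive increment \(\alpha w_k\) through it. The one additional ingredient is the identity \(w_k=w_k^+-w_k^-\) with \(w_k^\pm\ge0\), which holds componentwise.

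For the negative part, I would start from the lower stochastic residual identity \cref{eq:error-optimized-lower-form}, which holds by \cref{lem:stochastic-lower-upper-identities}. Substituting \(e_k=e_k^+-e_k^-\) and \(w_k=w_k^+-w_k^-\) gives
\[
e_{k+1}=a_k^- - b_k^-,
\]
where
\[
a_k^-:=\mathbf A_-^\star e_k^+ + r_k^- + \alpha w_k^+,
\qquad
b_k^-:=\mathbf A_-^\star e_k^- + \alpha w_k^-,
\]
and
\[
r_k^-:=\alpha\gamma DP\bigl(V_{Q_k}-V^*-\boldsymbol{\Pi}^{\pi_-^\star}e_k\bigr)\ge0.
\]
Both \(a_k^-\) and \(b_k^-\) are nonnegative, because \(\mathbf A_-^\star\ge0\) entrywise (\cref{lem:rho-plus-rho-minus-strictly-stable}, since \(\alpha d(s,a)<1\)), \(D,P\ge0\), and \(\alpha>0\). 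For scalars \(a,b\ge0\) one has \((a-b)^-=(b-a)^+\le b\). Applying this coordinatewise yields \(e_{k+1}^-\le b_k^-=\mathbf A_-^\star e_k^-+\alpha w_k^-\).

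For the positive part, I would use the upper identity \cref{eq:error-direct-upper-form} in the same way. Set
\[
r_k^+:=\alpha\gamma DP\bigl(\boldsymbol{\Pi}^{\pi_k^+}e_k-(V_{Q_k}-V^*)\bigr)\ge0,
\]
and write \(e_{k+1}=a_k^+-b_k^+\) with
\[
a_k^+:=\mathbf A_{\pi_k^+}e_k^+ + \alpha w_k^+,
\qquad
b_k^+:=\mathbf A_{\pi_k^+}e_k^- + r_k^+ + \alpha w_k^-.
\]
Both vectors are nonnegative because \(\mathbf A_{\pi_k^+}\ge0\). The scalar fact \((a-b)^+\le a\) for \(a,b\ge0\) then gives \(e_{k+1}^+\le \mathbf A_{\pi_k^+}e_k^++\alpha w_k^+\).

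I do not expect a real obstacle. The only points to check are bookkeeping: entrywise nonnegativity of every \(\mathbf A_\pi\) (already established in \cref{lem:rho-plus-rho-minus-strictly-stable}), and the signs of the two residuals (from \cref{lem:stochastic-lower-upper-identities}). The argument is pathwise, so it needs no measurability or expectation considerations.
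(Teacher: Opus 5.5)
Your proof is correct and is essentially the paper's argument. The paper starts from the same residual identities and first discards the nonnegative terms to get $e_{k+1}^-\le(\mathbf A_-^\star e_k^- -\alpha w_k)^+$ and $e_{k+1}^+\le(\mathbf A_{\pi_k^+}e_k^+ +\alpha w_k)^+$; it then applies the scalar bounds $(a-\xi)^+\le a+\xi^-$ and $(a+\xi)^+\le a+\xi^+$ for $a\ge0$. Your up-front split $w_k=w_k^+-w_k^-$ merges these two steps into the single inequality $(a-b)^+\le a$ for $a,b\ge0$; this is the same content with slightly cleaner bookkeeping.
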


\begin{proof}[Proof of \cref{lem:sign-separated-one-step}]
From \cref{eq:error-optimized-lower-form} and \(e_k=e_k^+-e_k^-\),
\[
\begin{aligned}
e_{k+1}
&=
  \mathbf A_-^\star e_k^+
  -
  \mathbf A_-^\star e_k^-
  +
  r_k^-
  +
  \alpha w_k,\\
r_k^-&:=\alpha\gamma DP
  \bigl(V_{Q_k}-V^*-\boldsymbol{\Pi}^{\pi_-^\star}e_k\bigr)\ge0.
\end{aligned}
\]
The nonnegative vector \(\mathbf A_-^\star e_k^+ + r_k^-\) cannot increase the
negative part. Componentwise,
\[
e_{k+1}^-
  \le
  \left(\mathbf A_-^\star e_k^- -\alpha w_k\right)^+.
\]
Since \(\mathbf A_-^\star e_k^-\ge0\), the scalar inequality
\((a-\xi)^+\le a+\xi^-\) for \(a\ge0\), applied with
\(a=(\mathbf A_-^\star e_k^-)_i\) and \(\xi=\alpha w_{k,i}\), gives
\[
e_{k+1}^-
  \le
  \mathbf A_-^\star e_k^-+
  \alpha w_k^-.
\]
Similarly, from \cref{eq:error-direct-upper-form},
\[
\begin{aligned}
e_{k+1}
&=
  \mathbf A_{\pi_k^+}e_k^+
  -
  \mathbf A_{\pi_k^+}e_k^-
  -
  r_k^+
  +
  \alpha w_k,\\
r_k^+&:=\alpha\gamma DP
  \bigl(\boldsymbol{\Pi}^{\pi_k^+}e_k-(V_{Q_k}-V^*)\bigr)\ge0.
\end{aligned}
\]
The nonnegative vector \(\mathbf A_{\pi_k^+}e_k^-+r_k^+\) cannot increase the
positive part; hence
\[
e_{k+1}^+
  \le
  \left(\mathbf A_{\pi_k^+}e_k^+ +\alpha w_k\right)^+.
\]
Since \(\mathbf A_{\pi_k^+}e_k^+\ge0\), the scalar inequality
\((a+\xi)^+\le a+\xi^+\) for \(a\ge0\), applied componentwise, proves
\[
e_{k+1}^+
  \le
  \mathbf A_{\pi_k^+}e_k^+ +
  \alpha w_k^+.
\]
\end{proof}

\section{Restated Stochastic Finite-Time Bounds}\label{app:stochastic-rate-restatements}

\setcounter{theorem}{18}
\subsection{Noise moment bounds}\label{app:noise-proofs}

We begin the noise analysis with the boundedness and martingale-difference estimates needed later.

\begin{lemma}\label{lem:noise-moment-bound}
Under \cref{assump:basic}, let
\[
R_{\max}:=\max_{s,a,s'} |r(s,a,s')|,
\]
and define
\[
\begin{aligned}
W_{\max}:={}&
  \left(1+\sqrt{|\calS|\,|\calA|}\right)^2\\
&\times
  \left(
  R_{\max}+(1+\gamma)\right.\\
&\qquad\left.
  \max\bigl\{\|Q_0\|_\infty,R_{\max}/(1-\gamma)\bigr\}
  \right)^2.
\end{aligned}
\]
Then, for every \(k\in\{0,1,2,\ldots\}\),
\[
\|Q_k\|_\infty\le
  \max\bigl\{\|Q_0\|_\infty,R_{\max}/(1-\gamma)\bigr\},
\]
and the increment \(w_k\) is \(\calF_{k+1}\)-measurable and satisfies
\[
\E[w_k\mid\calF_k]=0,
  \qquad
  \E[\|w_k\|_2^2\mid\calF_k]\le W_{\max}.
\]
\end{lemma}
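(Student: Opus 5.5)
The proof has three parts: a uniform bound on the iterates, the martingale-difference property, and a second-moment bound on the noise.

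\textbf{Step 1: boundedness.} Set $B:=\max\{\|Q_0\|_\infty,R_{\max}/(1-\gamma)\}$ and prove $\|Q_k\|_\infty\le B$ by induction on $k$. Only the sampled coordinate $(s_k,a_k)$ changes at step $k$, so all other coordinates stay bounded by $B$. The sampled coordinate becomes the convex combination
\[
(1-\alpha)Q_k(s_k,a_k)+\alpha\bigl(r_{k+1}+\gamma\max_u Q_k(s_k',u)\bigr),
\]
which uses $\alpha\in(0,1)$ from \cref{assump:basic}. Its absolute value is at most
\[
(1-\alpha)B+\alpha(R_{\max}+\gamma B).
\]
Since $R_{\max}\le(1-\gamma)B$, we have $R_{\max}+\gamma B\le B$, and the whole expression is at most $B$. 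Because $Q_0$ is deterministic, the bound holds surely on every sample path.

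\textbf{Step 2: martingale difference.} The random vector $\zeta_k$ is a function of $Q_k$ and of the observation $(s_k,a_k,s_k',r_{k+1})$. Hence $\zeta_k$ is $\calF_{k+1}$-measurable. The term $D(F(Q_k)-Q_k)$ is $\calF_k$-measurable, so $w_k$ is $\calF_{k+1}$-measurable. The fresh observation is independent of $\calF_k$, and $Q_k$ is $\calF_k$-measurable. Conditioning on $\calF_k$ therefore amounts to averaging over $(s,a)\sim d$ and then $s'\sim P(\cdot\mid s,a)$ with $Q_k$ frozen. This average equals
\[
\sum_{s,a}d(s,a)e_{s,a}\bigl(R(s,a)+\gamma(PV_{Q_k})(s,a)-Q_k(s,a)\bigr)=D(F(Q_k)-Q_k).
\]
This gives $\E[w_k\mid\calF_k]=0$. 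The conditional expectation is well defined because every quantity involved is bounded by Step 1.

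\textbf{Step 3: second moment.} Use the crude pathwise triangle inequality
\[
\|w_k\|_2\le\|\zeta_k\|_2+\|D(F(Q_k)-Q_k)\|_2.
\]
\begin{itemize}
\item $\zeta_k$ has a single nonzero entry, bounded by $R_{\max}+\gamma B+B=R_{\max}+(1+\gamma)B$.
\item Each entry of $D(F(Q_k)-Q_k)$ is bounded by $d(s,a)\bigl(R_{\max}+(1+\gamma)B\bigr)\le R_{\max}+(1+\gamma)B$. Passing from the sup-norm to the Euclidean norm costs a factor $\sqrt{|\calS||\calA|}$.
\end{itemize}
Hence, surely,
\[
\|w_k\|_2\le\bigl(1+\sqrt{|\calS||\calA|}\bigr)\bigl(R_{\max}+(1+\gamma)B\bigr).
\]
Squaring and taking conditional expectations gives $\E[\|w_k\|_2^2\mid\calF_k]\le W_{\max}$.

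There is no real obstacle here. The point that needs care is that Step 1 must give a \emph{sure} bound on every sample path, not one holding only in expectation, so that the pathwise estimate in Step 3 is legitimate. This relies on $\alpha<1$, which makes the update a convex combination.
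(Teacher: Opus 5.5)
Your proposal is correct and follows essentially the same route as the paper's proof. Both arguments use a sure inductive bound $\|Q_k\|_\infty\le B$ from the convex-combination form of the sampled update. Both identify the conditional mean of the sampled increment with $D(F(Q_k)-Q_k)$ using independence of the fresh observation from $\calF_k$. Both use the pathwise estimate $\|w_k\|_2\le\bigl(1+\sqrt{|\calS||\calA|}\bigr)\bigl(R_{\max}+(1+\gamma)B\bigr)$, which is squared to obtain $W_{\max}$; the only differences are the order of the steps and your explicit remark that the bound holds surely.
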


\begin{proof}[Proof of \cref{lem:noise-moment-bound}]
Since the state and action spaces are finite, \(R_{\max}<\infty\). We first
prove the pathwise bound on \(Q_k\). Suppose
\[
\|Q_k\|_\infty\le
  \max\bigl\{\|Q_0\|_\infty,R_{\max}/(1-\gamma)\bigr\}.
\]
If a coordinate is not sampled, it remains unchanged. If \((s_k,a_k)\) is sampled,
then
\[
\begin{aligned}
|Q_{k+1}(s_k,a_k)|
&\le
(1-\alpha)|Q_k(s_k,a_k)|\\
&\quad+
\alpha\left(|r_{k+1}|+\gamma\max_u |Q_k(s'_k,u)|\right) \\
&\le
(1-\alpha)\max\bigl\{\|Q_0\|_\infty,R_{\max}/(1-\gamma)\bigr\}\\
&\quad+
\alpha\left(R_{\max}+\gamma\right.\\
&\qquad\left.
\max\bigl\{\|Q_0\|_\infty,R_{\max}/(1-\gamma)\bigr\}\right).
\end{aligned}
\]
The last expression is at most
\[
\max\bigl\{\|Q_0\|_\infty,R_{\max}/(1-\gamma)\bigr\}
\]
because
\[
\begin{aligned}
&R_{\max}+\gamma
\max\bigl\{\|Q_0\|_\infty,R_{\max}/(1-\gamma)\bigr\}\\
&\quad\le
\max\bigl\{\|Q_0\|_\infty,R_{\max}/(1-\gamma)\bigr\}.
\end{aligned}
\]
Thus \(|Q_{k+1}(s_k,a_k)|\) satisfies the same bound. Since the other
coordinates remain bounded by the same quantity, induction gives
\[
\|Q_k\|_\infty\le
\max\bigl\{\|Q_0\|_\infty,R_{\max}/(1-\gamma)\bigr\},
\qquad k\ge0.
\]

The sample temporal-difference term satisfies
\[
\begin{aligned}
&\left|
  r_{k+1}+\gamma\max_{u\in\calA}Q_k(s'_k,u)-Q_k(s_k,a_k)
  \right|\\
&\quad\le
  R_{\max}+(1+\gamma)
  \max\bigl\{\|Q_0\|_\infty,R_{\max}/(1-\gamma)\bigr\}.
\end{aligned}
\]
Also, for every coordinate,
\[
\begin{aligned}
&\left|
R(s,a)+\gamma\sum_{s'}P(s'\mid s,a)V_{Q_k}(s')-Q_k(s,a)
\right|\\
&\quad\le R_{\max}+(1+\gamma)
\max\bigl\{\|Q_0\|_\infty,R_{\max}/(1-\gamma)\bigr\}.
\end{aligned}
\]
Because \(0<d(s,a)\le1\), this implies
\[
\begin{aligned}
\left\|D(F(Q_k)-Q_k)\right\|_2
&\le
  \sqrt{|\calS|\,|\calA|}\\
&\quad\times
  \left(R_{\max}+(1+\gamma)\right.\\
&\quad\left.
  \max\bigl\{\|Q_0\|_\infty,R_{\max}/(1-\gamma)\bigr\}\right).
\end{aligned}
\]
The sampled vector in \cref{eq:wk-def} has Euclidean norm at most
\(R_{\max}+(1+\gamma)\max\bigl\{\|Q_0\|_\infty,R_{\max}/(1-\gamma)\bigr\}\).
Hence, pathwise,
\[
\begin{aligned}
\|w_k\|_2
&\le
  \left(1+\sqrt{|\calS|\,|\calA|}\right)
  \left(R_{\max}+(1+\gamma)\right.\\
&\qquad\left.
  \max\bigl\{\|Q_0\|_\infty,R_{\max}/(1-\gamma)\bigr\}\right),
\end{aligned}
\]
and therefore
\[
\begin{aligned}
\E[\|w_k\|_2^2\mid\calF_k]
&\le
  \left(1+\sqrt{|\calS|\,|\calA|}\right)^2
  \left(R_{\max}+(1+\gamma)\right.\\
&\qquad\left.
  \max\bigl\{\|Q_0\|_\infty,R_{\max}/(1-\gamma)\bigr\}\right)^2\\
&=W_{\max}.
\end{aligned}
\]
The \(\calF_{k+1}\)-measurability follows from the fact that the fresh
observation is revealed between times \(k\) and \(k+1\), while
\(D(F(Q_k)-Q_k)\) is \(\calF_k\)-measurable.
Finally, using conditional independence of the fresh observation from \(\calF_k\),
\[
\begin{aligned}
&\E\!\left[
  e_{s_k,a_k}
  \left(
  \begin{aligned}
  &r_{k+1}+\gamma\max_{u\in\calA}Q_k(s'_k,u)\\
  &\quad-Q_k(s_k,a_k)
  \end{aligned}
  \right)
  \middle|\calF_k\right] \\
&=\sum_{s\in\calS}\sum_{a\in\calA}
  d(s,a)e_{s,a}\\
&\quad\times
  \Bigl(
  R(s,a)+\gamma\sum_{s'\in\calS}P(s'\mid s,a)V_{Q_k}(s')\\
&\qquad\qquad
  -Q_k(s,a)
  \Bigr) \\
&=D(F(Q_k)-Q_k).
\end{aligned}
\]
Together with the definition of \(w_k\), this gives
\(\E[w_k\mid\calF_k]=0\).
\end{proof}

The next lemma transfers the same second-moment control to the positive and negative noise components.

\begin{lemma}\label{lem:noise-sign-moments}
Under the conditions of \cref{lem:noise-moment-bound}, for every
\(k\in\{0,1,2,\ldots\}\),
\[
\E[\|w_k^-\|_2^2\mid\calF_k]\le W_{\max},
  \qquad
  \E[\|w_k^+\|_2^2\mid\calF_k]\le W_{\max}.
\]
\end{lemma}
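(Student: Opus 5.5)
The plan is to reduce both claims to the conditional second-moment bound already established in \cref{lem:noise-moment-bound}, using only an elementary pointwise comparison between a vector and its sign parts. No further probabilistic argument should be needed, since the sign parts are measurable functions of \(w_k\) and are dominated by \(w_k\) in every coordinate.

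First, I will record the scalar identity \(x = x^+ - x^-\) with \(x^+x^-=0\). From it, \(x^2 = (x^+)^2 + (x^-)^2\), and hence \((x^\pm)^2 \le x^2\) for every real \(x\).

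Applying this componentwise to \(w_k\) and summing over the \(|\calS|\,|\calA|\) coordinates gives the pathwise inequalities
\[
\|w_k^-\|_2^2 \le \|w_k\|_2^2,
\qquad
\|w_k^+\|_2^2 \le \|w_k\|_2^2 .
\]
Indeed, the stronger identity \(\|w_k^+\|_2^2 + \|w_k^-\|_2^2 = \|w_k\|_2^2\) holds.

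For the conditional expectations, I will note that \(w_k^\pm\) are \(\calF_{k+1}\)-measurable, because \(x\mapsto x^\pm\) is continuous and \(w_k\) is \(\calF_{k+1}\)-measurable by \cref{lem:noise-moment-bound}. All the quantities are also bounded pathwise, by the pathwise bound on \(\|w_k\|_2\) from the same lemma, so the conditional expectations are well defined. Monotonicity of conditional expectation then gives
\(\E[\|w_k^\pm\|_2^2\mid\calF_k] \le \E[\|w_k\|_2^2\mid\calF_k] \le W_{\max}\).

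I expect no real obstacle here. The only point to be careful about is not to claim that \(w_k^\pm\) are martingale differences: they are not mean zero, and only their second moments are controlled. The statement needs nothing more.
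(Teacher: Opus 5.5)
Your proposal is correct and follows essentially the same route as the paper: the pointwise bound $(w_{k,i}^\pm)^2\le w_{k,i}^2$ (which the paper gets by a sign case split and you get from $x^+x^-=0$), summed over coordinates, followed by monotonicity of conditional expectation and \cref{lem:noise-moment-bound}. Your remarks on measurability and that $w_k^\pm$ are not martingale differences are accurate but not needed beyond what the paper states.
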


\begin{proof}[Proof of \cref{lem:noise-sign-moments}]
Componentwise,
\[
0\le (w_{k,i}^-)^2\le w_{k,i}^2,
  \qquad
  0\le (w_{k,i}^+)^2\le w_{k,i}^2.
\]
Indeed, if \(w_{k,i}\ge0\), then \(w_{k,i}^-=0\) and
\((w_{k,i}^+)^2=w_{k,i}^2\). If \(w_{k,i}<0\), then
\((w_{k,i}^-)^2=w_{k,i}^2\) and \(w_{k,i}^+=0\). Hence each signed-part square
is either zero or exactly \(w_{k,i}^2\). Summing over coordinates gives
\(\|w_k^-\|_2^2\le\|w_k\|_2^2\) and
\(\|w_k^+\|_2^2\le\|w_k\|_2^2\). Taking conditional expectations and applying
\cref{lem:noise-moment-bound} proves the claim.
\end{proof}

\subsection{Stochastic finite-time bounds}\label{app:stochastic-rate-proofs}

For convenience, we restate the main stochastic finite-time bounds from
\cref{sec:finite-time-rates}. The proofs use reference filters driven by the
martingale difference \(w_k\).

\begin{lemma}\label{lem:reference-filter-noise-bound}
Let \(B\in\calM_\alpha\) be fixed, and let
\[
b_{k+1}=Bb_k+\alpha w_k,
  \qquad b_0=0.
\]
Then, for every \(k\ge0\),
\[
\E[\|b_k\|_\infty]
  \le
  |\calS|\,|\calA|
  \sqrt{\frac{\alpha W_{\max}}{d_{\min}(1-\gamma)}}.
\]
\end{lemma}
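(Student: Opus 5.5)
The plan is to write the filter in closed form, bound its second moment using martingale orthogonality, and then pass to the infinity norm. Write \(n:=|\calS|\,|\calA|\). Since \(b_0=0\) and \(B\) is a fixed deterministic matrix, unrolling the recursion gives
\[
b_k=\alpha\sum_{t=0}^{k-1}B^{k-1-t}w_t .
\]
I will use Jensen's inequality together with \(\|x\|_\infty\le\|x\|_2\):
\[
\E[\|b_k\|_\infty]\le\sqrt{\E[\|b_k\|_2^2]} .
\]
This reduces the claim to a second-moment bound on \(b_k\).

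\emph{Orthogonality.} Expand \(\|b_k\|_2^2\) as a double sum over \(s,t<k\). Take a cross term with \(s<t\). The vector \(B^{k-1-s}w_s\) is \(\calF_{s+1}\subseteq\calF_t\)-measurable, and \(B\) is deterministic. Conditioning on \(\calF_t\) and using \(\E[w_t\mid\calF_t]=0\) from \cref{lem:noise-moment-bound}, each such cross term has zero expectation. Integrability is not an issue, because \cref{lem:noise-moment-bound} shows that \(w_k\) is pathwise bounded. Hence
\[
\E[\|b_k\|_2^2]=\alpha^2\sum_{t=0}^{k-1}\E\bigl[\|B^{k-1-t}w_t\|_2^2\bigr]\le \alpha^2 W_{\max}\sum_{j=0}^{k-1}\|B^j\|_2^2 ,
\]
where the inequality again uses the conditional bound \(\E[\|w_t\|_2^2\mid\calF_t]\le W_{\max}\).

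\emph{Norm of powers.} The step needing the most care is controlling \(\|B^j\|_2\) without involving a JSR constant. Here I reuse the row-sum computation from the proof of \cref{lem:rho-plus-rho-minus-strictly-stable}. Every \(B\in\calM_\alpha\) is nonnegative and satisfies
\[
\|B\|_\infty\le\bar\rho:=1-\alpha(1-\gamma)d_{\min}<1 ,
\]
so \(\|B^j\|_\infty\le\bar\rho^{\,j}\). For an \(n\times n\) matrix, \(\|M\|_2\le\sqrt{\|M\|_1\|M\|_\infty}\) and \(\|M\|_1\le n\|M\|_\infty\), which give \(\|M\|_2\le\sqrt n\,\|M\|_\infty\). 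Therefore
\[
\sum_{j\ge0}\|B^j\|_2^2\le n\sum_{j\ge0}\bar\rho^{\,2j}=\frac{n}{1-\bar\rho^{2}}\le\frac{n}{1-\bar\rho}=\frac{n}{\alpha(1-\gamma)d_{\min}} .
\]

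\emph{Conclusion.} Combining the three steps,
\[
\E[\|b_k\|_2^2]\le \frac{n\,\alpha W_{\max}}{d_{\min}(1-\gamma)} .
\]
Taking square roots yields \(\sqrt n\,\sqrt{\alpha W_{\max}/(d_{\min}(1-\gamma))}\), and this is at most the claimed bound because \(\sqrt n\le n=|\calS|\,|\calA|\).
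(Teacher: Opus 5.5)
Your proof is correct and follows essentially the same route as the paper's proof. Both unroll the filter, remove the cross terms with the martingale-difference property, use the row-sum contraction $\|B^t\|_\infty\le(1-\alpha d_{\min}(1-\gamma))^t$, and sum the geometric series using $1-\bar\rho^{2}\ge 1-\bar\rho$.

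The only difference is the final conversion to the infinity norm. The paper bounds each coordinate's second moment through the row $\ell_1$-norms of $B^r$ and then sums $\E|b_{k,j}|$ over all $n$ coordinates, which costs a factor $n$. Your global Euclidean argument, using $\|M\|_2\le\sqrt n\,\|M\|_\infty$ and Jensen, yields the sharper factor $\sqrt n$.
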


\begin{proof}
Set \(n:=|\calS|\,|\calA|\) and
\[
\rho_{\mathrm{row}}:=1-\alpha d_{\min}(1-\gamma).
\]
For every \(\pi\in\Theta\), the matrix \(\mathbf A_\pi\) is nonnegative and its
row sums are bounded by \(\rho_{\mathrm{row}}\). Hence
\(\|B^t\|_\infty\le \rho_{\mathrm{row}}^t\) for all \(t\ge0\). Since \(w_k\) is a
martingale difference and \(B\) is deterministic, the cross terms vanish in the
second-moment expansion of
\[
b_k=\alpha\sum_{i=0}^{k-1}B^{k-1-i}w_i.
\]
For each coordinate \(j\),
\[
\E[b_{k,j}^2]
  \le
  \alpha^2 W_{\max}
  \sum_{r=0}^{k-1}\rho_{\mathrm{row}}^{2r}
  \le
  \frac{\alpha^2 W_{\max}}{1-\rho_{\mathrm{row}}^2}.
\]
Because
\[
1-\rho_{\mathrm{row}}^2
  \ge
  \alpha d_{\min}(1-\gamma),
\]
we get
\[
\E[|b_{k,j}|]
  \le
  \sqrt{\frac{\alpha W_{\max}}{d_{\min}(1-\gamma)}}.
\]
Finally,
\[
\E[\|b_k\|_\infty]
  \le
  \sum_{j=1}^n\E[|b_{k,j}|]
  \le
  n\sqrt{\frac{\alpha W_{\max}}{d_{\min}(1-\gamma)}}.
\]
This proves the claim.
\end{proof}

\begin{restatementbox}
\noindent\textbf{Theorem~\ref{thm:negative-part-rate} (negative-part bound).}
Assume \cref{assump:basic}, and let \(W_{\max}\) be defined in
\cref{eq:wmax-def}. Fix \(\varepsilon>0\) such that
\(\rho_-^\star+\varepsilon<1\), and define
\[
\beta_-:=\rho_-^\star+\varepsilon,
  \qquad
K_-:=K_{\beta_-}(\{\mathbf A_-^\star\}).
\]
Then, for every \(k\ge0\),
\[
\E[\|e_k^-\|_\infty]
  \le
  K_-\beta_-^k\|e_0\|_\infty
  +
  |\calS|\,|\calA|
  \sqrt{\frac{\alpha W_{\max}}{d_{\min}(1-\gamma)}}.
\]
\end{restatementbox}

\begin{proof}[Proof of \cref{thm:negative-part-rate}]
By \cref{lem:stochastic-sign-controlled-by-comparisons},
\[
\|e_k^-\|_\infty\le \|\ell_k\|_\infty,
\]
where \(\ell_{k+1}=\mathbf A_-^\star\ell_k+\alpha w_k\) and \(\ell_0=e_0\).
Split \(\ell_k=a_k+b_k\), with
\[
\begin{gathered}
a_{k+1}=\mathbf A_-^\star a_k,
  \qquad a_0=e_0,\\
b_{k+1}=\mathbf A_-^\star b_k+\alpha w_k,
  \qquad b_0=0.
\end{gathered}
\]
The product-growth estimate gives
\[
\|a_k\|_\infty
  \le
  K_-\beta_-^k\|e_0\|_\infty.
\]
By \cref{lem:reference-filter-noise-bound},
\[
\E[\|b_k\|_\infty]
  \le
  |\calS|\,|\calA|
  \sqrt{\frac{\alpha W_{\max}}{d_{\min}(1-\gamma)}}.
\]
The triangle inequality completes the proof.
\end{proof}

\begin{restatementbox}
\noindent\textbf{Corollary~\ref{cor:orthant-distance-rate} (orthant-distance bound).}
Under the assumptions of Theorem~\ref{thm:negative-part-rate}, for every
\(\varepsilon>0\) satisfying \(\rho_-^\star+\varepsilon<1\), with
\(\beta_-:=\rho_-^\star+\varepsilon\) and
\(K_-:=K_{\beta_-}(\{\mathbf A_-^\star\})\), every \(k\ge0\) satisfies
\[
\begin{aligned}
\E\!\left[
  \dist_\infty(e_k,\R_+^{|\calS|\,|\calA|})
  \right]
&\le
  K_-\beta_-^k\|e_0\|_\infty\\
&\quad+
  |\calS|\,|\calA|
  \sqrt{\frac{\alpha W_{\max}}{d_{\min}(1-\gamma)}}.
\end{aligned}
\]
\end{restatementbox}

\begin{proof}[Proof of \cref{cor:orthant-distance-rate}]
For every vector \(x\),
\[
\dist_\infty(x,\R_+^{|\calS|\,|\calA|})=\|x^-\|_\infty.
\]
Applying this identity with \(x=e_k\) and then using
\cref{thm:negative-part-rate} proves the claim.
\end{proof}

\begin{restatementbox}
\noindent\textbf{Theorem~\ref{thm:positive-part-rate} (positive-part bound).}
Assume \cref{assump:basic}, and let \(W_{\max}\) be defined in
\cref{eq:wmax-def}. Fix \(\varepsilon>0\) such that
\(\rho_++\varepsilon<1\), and define
\[
\beta_+:=\rho_++\varepsilon,
  \qquad
K_+:=K_{\beta_+}(\calM_\alpha).
\]
Then, for every \(k\ge0\),
\[
\begin{aligned}
\E[\|e_k^+\|_\infty]
&\le
  K_+\beta_+^k\|e_0\|_\infty\\
&\quad+
  2\alpha\gamma d_{\max}K_+^2
  k\beta_+^{k-1}\|e_0\|_\infty\\
&\quad+
  \left(1+\frac{2\gamma d_{\max}}{d_{\min}(1-\gamma)}\right)\\
&\qquad\times
  |\calS|\,|\calA|
  \sqrt{\frac{\alpha W_{\max}}{d_{\min}(1-\gamma)}}.
\end{aligned}
\]
Here \(k\beta_+^{k-1}\) is interpreted as zero when \(k=0\).
\end{restatementbox}

\begin{proof}[Proof of \cref{thm:positive-part-rate}]
By \cref{lem:stochastic-sign-controlled-by-comparisons},
\[
\|e_k^+\|_\infty\le \|u_k\|_\infty,
\]
where \(u_{k+1}=\mathbf A_{\pi_k^+}u_k+\alpha w_k\) and \(u_0=e_0\). Fix any
\(\bar\pi\in\Theta\), and introduce the reference filter
\[
x_{k+1}=\mathbf A_{\bar\pi}x_k+\alpha w_k,
  \qquad x_0=e_0.
\]
Split \(x_k=a_k+b_k\), where
\[
\begin{gathered}
a_{k+1}=\mathbf A_{\bar\pi}a_k,
  \qquad a_0=e_0,\\
b_{k+1}=\mathbf A_{\bar\pi}b_k+\alpha w_k,
  \qquad b_0=0.
\end{gathered}
\]
The product-growth estimate gives
\[
\|a_k\|_\infty
  \le
  K_+\beta_+^k\|e_0\|_\infty,
\]
and \cref{lem:reference-filter-noise-bound} gives
\[
\E[\|b_k\|_\infty]
  \le
  |\calS|\,|\calA|
  \sqrt{\frac{\alpha W_{\max}}{d_{\min}(1-\gamma)}}.
\]

Let \(r_k:=u_k-x_k\). Then \(r_0=0\) and
\[
r_{k+1}
  =
  \mathbf A_{\pi_k^+}r_k
  +
  (\mathbf A_{\pi_k^+}-\mathbf A_{\bar\pi})x_k.
\]
For every \(\pi\in\Theta\),
\[
\|\mathbf A_{\pi}-\mathbf A_{\bar\pi}\|_\infty
  \le
  2\alpha\gamma d_{\max},
\]
because two deterministic policy-selection rows differ by at most two in
\(\ell_1\)-norm. Split \(r_k=c_k+d_k\) into the parts driven by \(a_k\) and
\(b_k\), respectively. The product-growth estimate yields
\[
\begin{aligned}
\|c_k\|_\infty
&\le
  \sum_{i=0}^{k-1}
  K_+\beta_+^{k-1-i}
  2\alpha\gamma d_{\max}
  K_+\beta_+^i\|e_0\|_\infty\\
&=
  2\alpha\gamma d_{\max}K_+^2
  k\beta_+^{k-1}\|e_0\|_\infty.
\end{aligned}
\]
For the noise-driven residual, use the row-sum contraction
\(\|\mathbf A_{\pi}\|_\infty\le \rho_{\mathrm{row}}\) with
\(\rho_{\mathrm{row}}=1-\alpha d_{\min}(1-\gamma)\). Then
\[
\begin{aligned}
\E[\|d_k\|_\infty]
&\le
  \sum_{i=0}^{k-1}
  \rho_{\mathrm{row}}^{k-1-i}
  2\alpha\gamma d_{\max}
  \E[\|b_i\|_\infty]\\
&\le
  \frac{2\gamma d_{\max}}{d_{\min}(1-\gamma)}
  |\calS|\,|\calA|
  \sqrt{\frac{\alpha W_{\max}}{d_{\min}(1-\gamma)}}.
\end{aligned}
\]
Combining
\[
\|u_k\|_\infty\le \|a_k\|_\infty+\|b_k\|_\infty+\|c_k\|_\infty+\|d_k\|_\infty
\]
with the estimates above proves the theorem.
\end{proof}

\end{document}